\documentclass[10pt]{article} 
\usepackage[accepted]{tmlr}


\usepackage{amsmath,amsfonts,bm}









\def\eqref#1{equation~\ref{#1}}









\def\1{\bm{1}}










\DeclareMathAlphabet{\mathsfit}{\encodingdefault}{\sfdefault}{m}{sl}
\SetMathAlphabet{\mathsfit}{bold}{\encodingdefault}{\sfdefault}{bx}{n}











\newcommand{\E}{\mathbb{E}}

\newcommand{\R}{\mathbb{R}}



\usepackage{hyperref}
\usepackage{url}

\usepackage{changes}
\usepackage[makeroom]{cancel}

\usepackage{amssymb}
\usepackage{mathtools}
\usepackage{amsthm}
\usepackage[utf8]{inputenc} 
\usepackage[T1]{fontenc}    
\usepackage{hyperref}       
\usepackage{url}            
\usepackage{booktabs}       
\usepackage{amsfonts}       
\usepackage{nicefrac}       
\usepackage{microtype}      
\usepackage{xcolor}         
\usepackage{subfigure}
\usepackage{hyperref}
\usepackage[capitalize,noabbrev]{cleveref}

\usepackage{algorithm}


\newtheorem{theorem}{Theorem}[section]
\newtheorem{lemma}[theorem]{Lemma}
\newtheorem{definition}[theorem]{Definition}

\newtheorem{assumption}[theorem]{Assumption}

\newcommand*\lrw[1]{\left\langle#1\right\rangle}

\newcommand*\lrn[1]{\left\|#1\right\|}

\newcommand{\prob}{{\mathbb P}}

\renewcommand{\tilde}{\widetilde}

\renewcommand{\d}{\mathrm{d}}

\DeclareMathOperator{\D}{{\mathcal D}}
\DeclareMathOperator{\cS}{{\mathcal S}}
\DeclareMathOperator{\M}{{\mathcal M}}

\newcommand{\bbeta}{{\boldsymbol \beta}}

\newcommand{\btheta}{{\boldsymbol \theta}}



\let\originaleqref=\ref
\renewcommand{\eqref}{\originaleqref}

\title{On Convergence of Federated Averaging Langevin Dynamics}



\author{ 
Wei Deng\thanks{\texttt{weideng056@gmail.com}. Machine Learning Research, Morgan Stanley, NY.}  \quad Qian Zhang\thanks{\texttt{zhan3761@purdue.edu}. Department of Statistics, Purdue University, West Lafayette, IN. Co-first author.} \quad Yi-An Ma \thanks{\texttt{yianma.ucsd@gmail.com}. Halicio\~glu Data Science Institute, University of California, San Diego, La Jolla, CA} \quad Zhao Song\thanks{\texttt{zsong@adobe.com}. Adobe Research, San Jose, CA} \quad Guang Lin\thanks{\texttt{guanglin@purdue.edu}. Department of Mathematics and School of Mechanical Engineering, Purdue University.}
}



\setlength {\marginparwidth }{2cm}
\begin{document}

\maketitle

\begin{abstract}
We propose a federated averaging Langevin algorithm (FA-LD) for uncertainty quantification and mean predictions  with distributed clients. In particular, we generalize beyond normal posterior distributions and consider a general class of models. We develop theoretical guarantees for FA-LD for strongly log-concave distributions with non-i.i.d data and study how the injected noise and the stochastic-gradient noise, the heterogeneity of data, and the varying learning rates affect the convergence. Such an analysis sheds light on the optimal choice of local updates to minimize the communication cost. Important to our approach is that the communication efficiency does not deteriorate with the injected noise in the Langevin algorithms. In addition, we examine in our FA-LD algorithm both independent and correlated noise used over different clients. We observe there is a trade-off between the pairs among communication, accuracy, and data privacy. As local devices may become inactive in federated networks, we also show convergence results based on different averaging schemes where only partial device updates are available. In such a case, we discover an additional bias that does not decay to zero.
\end{abstract}

\section{Introduction}

Federated learning (FL) allows multiple parties to jointly train a consensus model without sharing user data. 
Compared to the classical centralized learning regime, federated learning keeps training data on local clients, such as mobile devices or hospitals, where data privacy, security, and access rights are a matter of vital interest.
This aggregation of various data resources heeding privacy concerns yields promising potential in areas of internet of things~\citep{cys+20}, healthcare~\citep{lgd+20,lmx+19}, text data~\citep{hsc+20}, and fraud detection~\citep{zygw20}. 

A standard formulation of federated learning is a distributed optimization framework that tackles communication costs, client robustness, and data heterogeneity across different clients \citep{lsts20}. Central to the formulation is the efficiency of the communication, which directly motivates the communication-efficient federated averaging (FedAvg)~\citep{mmr+17}. FedAvg introduces a global model to synchronously aggregate multi-step local updates on the available clients and yields distinctive properties in communication. However, FedAvg often stagnates at inferior local modes empirically due to the data heterogeneity across the different clients \citep{Charles20, Woodworth20}. To tackle this issue, \citet{kkm+20, FedSplit} proposed stateful clients to avoid the unstable convergence, which are, however, not scalable with respect to the number of clients in applications with mobile devices \citep{agxr21}. In addition, the optimization framework often fails to quantify the uncertainty accurately for the parameters of interest, which {is} crucial for building estimators, hypothesis tests, and credible intervals. Such a problem leads to unreliable statistical inference and casts doubts on the credibility of the prediction tasks or  diagnoses in medical applications.

To unify optimization and uncertainty quantification in federated learning, we resort to a \emph{Bayesian treatment by sampling from a global posterior distribution}, where the latter is aggregated by infrequent communications from local posterior distributions. We adopt a popular approach for inferring posterior distributions for large datasets, the stochastic gradient Markov chain Monte Carlo (SG-MCMC) method~\citep{Welling11,VollmerZW2016,Teh16,Chen14,yian2015}, which enjoys theoretical guarantees beyond convex scenarios \citep{Maxim17, Yuchen17, Mangoubi18,ma19}. 
In particular, we examine in the federated learning setting the efficacy of the stochastic gradient Langevin dynamics (SGLD) algorithm, which differs from stochastic gradient descent (SGD) in an additionally injected noise for exploring the posterior. 
The close resemblance naturally inspires us to adapt the optimization-based FedAvg to a distributed sampling framework. 
Similar ideas have been proposed in federated posterior averaging~\citep{agxr21}. {Empirical studies} and analyses on Gaussian posteriors have shown promising potential of this approach. 
Compared to the appealing theoretical guarantees of optimization-based algorithms in federated learning~\citep{FedSplit,agxr21}, the convergence properties of approximate sampling algorithms in federated learning is far less understood. To fill this gap, we proceed by asking the following question:
\begin{center}
    {\it Can we build a unified algorithm with convergence guarantees for sampling in FL?}
\end{center}
In this paper, we make a first step in answering this question in the affirmative. We propose the federated averaging Langevin dynamics for posterior inference beyond the Gaussian distribution. We list our contributions as follows:
\begin{itemize}
    \item We present the first non-asymptotic convergence analysis for FA-LD {for} simulating strongly log-concave distributions on non-i.i.d data; {Our theoretical analysis reveals that SGLD is not communication efficient in federated learning. However, we find that FA-LD based on local updates, with the synchronization frequency depending on the condition number, proves to be an effective approach for alleviating communication overhead.}

    \item The convergence analysis indicates that injected noise, data heterogeneity, and stochastic-gradient noise are all driving factors that affect the convergence. Such an analysis provides a concrete guidance on the optimal number of local updates to minimize communications. 
    \item  We can activate partial device updates to avoid straggler’s effects in practical applications and tune the correlation of injected noises to protect privacy. 
    \item We also provide differential privacy guarantees, which shed light on the trade-off between data privacy and accuracy given a limited budget.
\end{itemize}


\section{Related work}

\paragraph{Concurrent Works}

{Our work predominantly focuses on convex scenarios. However, for those interested in non-convex scenarios, we would like to direct readers to a noteworthy study conducted by \citet{fl_Isoperimetry}, which assumes the Logarithmic Sobolev Inequality (LSI) to hold and leverages the compression operator (also QLSD \citep{Maxime2021}) to reduce communication costs in federated learning. The LSI assumption allows for the consideration of multi-modal distributions and provides theoretical guarantees for more practical applications. Although the compression operator may be less communication efficient than the local-step update, this work \citet{fl_Isoperimetry} intriguingly lays the foundation for future studies on Bayesian federated learning in non-convex scenarios based on local-step schemes.}



It is important to note that our averaging scheme is deterministic, which may have limitations in scenarios where the activation of all devices is costly. For interested readers, we recommend referring to the study conducted by \citet{Vincent_2022} on federated averaging Langevin dynamics, which extends our deterministic averaging scheme to probabilistic. 

\paragraph{Federated Learning}
Current federated learning follows two paradigms. The first paradigm asks every client to learn the model using private data and communicate in model parameters. The second one uses encryption techniques to guarantee secure communication between clients.  In this paper, we focus on the first {paradigm} \citep{dcm+12,ss15,mmra16,mmr+17,hlsy21}. 
There is a long list of works showing provable convergence for FedAvg types of algorithms in the field of optimization \citep{lhy+20,ljz+21,hlsy21,kmr19,yyz19,wts+19,kkm+20}. One line of research \citet{lhy+20,kmr19,yyz19,wts+19,kkm+20} focuses on standard assumptions in optimization (such as, convex, smooth, strongly-convex, bounded gradient). 
Extensions to general partial device participation, and arbitrary communication schemes have been well addressed in \citet{Avdyukhin21, Haddadpour19}. There are other noteworthy Bayesian personalized federated learning algorithms worth exploring. For instance, \citet{FedPop, Personalized_FL_Bayes} present interesting approaches within this domain.


\paragraph{Distributed Monte Carlo methods}

Sub-posterior aggregation was initially proposed in \citet{Neiswanger13, wang13, Minsker14} to accelerate MCMC methods to cope with large datasets. Other parallel MCMC algorithms \citep{Nishihara14, Ahn14_icml, chen16_distributed, Chowdhury18, Li19_v2} propose to improve the efficiency of Monte Carlo computation in distributed or asynchronous systems. \citet{gghz20} proposed stochastic gradient Monte Carlo methods in decentralized systems. \citet{agxr21, F-SGLD, hw+21} introduced empirical studies of posterior averaging in federated learning. 

\section{Preliminaries}\label{sec:preli_fa}

\subsection{An optimization perspective on federated averaging}
\label{fl_ag}
Federated averaging (FedAvg) is a standard algorithm in federated learning and is typically formulated into a distributed optimization framework as follows
\begin{align}\label{optim_perspective}
    \min_{\theta} \ell(\theta):=\frac{\sum_{c=1}^N \ell^c(\theta)}{\sum_{c=1}^N n_c},\quad \ell^c(\theta):= \sum_{i=1}^{n_c} l(\theta; x_{c, i}),
\end{align}
where $\theta\in\mathbb{R}^d$, $l(\theta;x_{c,j})$ is a certain loss function based on $\theta$ and the data point $x_{c,j}$.

FedAvg algorithm requires the following three {steps}:
\begin{itemize}
    \item \emph{Broadcast}: The center server \emph{broadcasts} the latest model, $\theta_k$, to all local clients.
    \item \emph{Local updates}: For any $c\in [N]$, the $c$-th client first sets the auxiliary variable $\beta_k^c=\theta_k$ and then conducts $K\geq 1$ local steps: $\beta_{k+1}^c=\beta_k^c-\frac{\eta}{n_c}\nabla \tilde \ell^c(\beta_k^c),$
where $\eta$ is the learning rate and $\nabla \tilde \ell^c$ is the unbiased estimate of the exact gradient $\nabla \ell^c$.
    \item \emph{Synchronization}: The local models are sent to the center server and then aggregated into a unique model $\theta_{k+K}:=\sum_{c=1}^N p_c \beta_{k+K}^c$, where $p_c$ as the weight of the $c$-th client such that $p_c=\frac{n_c}{\sum_{i=1}^{N} n_i}\in(0, 1)$ and $n_c>0$ is the number of data points in the $c$-th client. 
\end{itemize}
From the optimization perspective, \citet{lhy+20} proved the convergence of the FedAvg algorithm on non-i.i.d data such that a larger number of local steps $K$ and a higher order of data heterogeneity slows down the convergence. Notably, Eq.(\eqref{optim_perspective}) can be interpreted as maximizing the likelihood function, which is a special case of maximum a posteriori estimation (MAP) given a uniform prior.
\subsection{Stochastic gradient Langevin dynamics}
Posterior inference offers the exact uncertainty quantification ability of the predictions. A popular method for posterior inference with large dataset is the stochastic gradient Langevin dynamics (SGLD) ~\citep{Welling11}, which injects additional noise into the stochastic gradient such that $$\theta_{k+1}=\theta_k-\eta\nabla \tilde f(\theta_k)+\sqrt{2\tau \eta}\xi_k,$$ where $\tau$ is the temperature and $\xi_k$ is a standard  Gaussian vector. $f(\theta):=\sum_{c=1}^N \ell^c(\theta)$ is an energy function. $ \nabla \tilde f(\theta)$ is an unbiased estimate of $\nabla f(\theta)$. In the longtime limit, $\theta_k$ converges weakly to the distribution $\pi(\theta)\propto \exp(-{f(\theta)}/{\tau})$ \citep{Teh16} as $\eta\rightarrow 0$.

\section{Posterior inference via federated averaging Langevin dynamics}\label{sec:posterior_inference}
The increasing concern for uncertainty estimation in federated learning motivates us to consider the simulation of the distribution $\pi(\theta)\propto \exp(-{f(\theta)}/{\tau} )$ with distributed clients.

\paragraph{Problem formulation} We propose the federated averaging Langevin dynamics (FA-LD) based on the FedAvg framework in section \ref{fl_ag}. We follow the same \emph{broadcast} step and \emph{synchronization} step but propose to inject random noises for \emph{local updates}. In particular, we consider the following scheme: for any $c\in [N]$, the $c$-th client first sets $\theta_k^c=\theta_k$ and then conducts $K\geq 1$ local steps:
\begin{align}\label{local_independent_noise}
     &\text{\emph{Local updates: }}\notag\\
     &\qquad\qquad\qquad\ \beta_{k+1}^c=\theta_k^c-\eta\nabla \tilde f^c(\theta_k^c)+\sqrt{2\eta\tau} \Xi_k^c\\
     \label{synchronization_main_paper_first}
&\text{\emph{Synchronization: }}\notag\\
&\qquad\qquad\qquad\ \  \theta_{k+1}^c=\left\{  
             \begin{array}{lr}  
             \beta_{k+1}^c \qquad\qquad \text{if } k+1 \text{ mod } K\neq 0 \\  
&               \\
             \sum_{c=1}^N p_c \beta_{k+1}^c \  \text{ if } k+1 \text{ mod } K=0
             \end{array}  
\right.  
\end{align}
where $\nabla f^c(\theta)=\frac{1}{p_c}\nabla \ell^c(\theta)$; $\nabla \tilde f^c(\theta)$ is the unbiased estimate of $\nabla f^c(\theta)$; $\Xi_k^{c}$ is some Gaussian vector in Eq.(\ref{noise_def}). Summing Eq.(\eqref{local_independent_noise}) from clients $c=1$ to $N$, we have
\begin{align*}
    \beta_{k+1}&=\theta_k-\eta {\bm{\widetilde Z}_k}+\sqrt{2\eta\tau}\xi_k,
\end{align*}
\begin{equation}
\begin{split}\label{decomposition}
    &\text{where}\quad \beta_k=\sum_{c=1}^N p_c \beta_k^c,\quad  \theta_k=\sum_{c=1}^N p_c \theta_k^c,\quad {\bm{\widetilde Z}_k}=\sum_{c=1}^N p_c \nabla \tilde f^c(\theta_k^c), \quad \xi_k=\sum_{c=1}^N p_c \Xi_k^c.
\end{split}
\end{equation}

    

By the nature of \emph{synchronization}, we always have $\beta_k=\theta_k$ for any $k\geq 0$ and the process follows
\begin{equation}
\label{fed_avg_langevin_dynamics_main}
\theta_{k+1}=\theta_k-\eta {\bm{\widetilde Z}_k}+\sqrt{2\eta\tau}\xi_k,
\end{equation}
which resembles SGLD except that $\theta_k$ is \emph{not accessible when $k\text{ mod } K\neq 0$}. Since our target is to simulate from $\pi(\theta)\propto \exp( - f(\theta)/\tau )$, we expect $\xi_k$ is a standard Gaussian vector. By the concentration of independent Gaussian variables, it is natural to set
\begin{align}\label{noise_def}
    \Xi_k^c=\xi_k^c/\sqrt{p_c},
\end{align}
where $\xi_k=\sum_{c=1}^N p_c \Xi_k^c=\sum_{c=1}^N \sqrt{p_c} \xi_k^c$ and $\xi_k^c$ is a also standard Gaussian vector. Now we present the algorithm based on independent inject noise ($\rho=0$)  and the full-device update (\ref{synchronization_main_paper_first}) in Algorithm \ref{alg:alg_main_text_partial_main}, where $\rho$ is the the correlation coefficient and will be further studied in section \ref{correlated_sec}. We observe Eq.(\eqref{local_client_main_paper}) maintains a temperature $\tau/p_c>\tau$ to converge to the stationary distribution $\pi$. Such a mechanism may limit the disclosure of individual data and shows a potential to protect the data privacy.

\begin{algorithm*}[h]\caption{Federated averaging Langevin dynamics Algorithm (FA-LD), informal version of Algorithm \ref{alg:alg_main_text_partial}. Denote by $\theta_k^c$ the model parameter in the $c$-th client at the $k$-th step. Denote the one-step intermediate result by $\beta_k^c$. $\xi_k^c$ is an independent standard $d$-dimensional Gaussian vector at iteration $k$ for each client $c\in[N]$; $\dot{\xi}_k$ is a $d$-dimensional Gaussian vector shared by all the clients; $\rho$ denotes the correlation coefficient. $\mathcal{S}_k$ is sampled via a device-sampling rule based on scheme \text{I} or \text{II}.}\label{alg:alg_main_text_partial_main}
\begin{equation}\label{local_client_main_paper}
    \beta_{k+1}^c=\theta_k^c-\eta\nabla \tilde f^c(\theta_k^c)+\sqrt{2\eta\tau \rho^2}\dot\xi_k + \sqrt{2\eta \tau (1-\rho^2)/p_c}\xi_k^c,
\end{equation}
\begin{equation*}  
\theta_{k+1}^c=\left\{  
             \begin{array}{lr}  
             \beta_{k+1}^c \ \ \qquad\qquad\qquad\quad\text{if } k+1 \text{ mod } K\neq 0 \\  
              & \\
             \Pi_{\text{device}}(\beta_{k+1}^c) \ \qquad \qquad \text{if } k+1 \text{ mod } K=0.
             \end{array}
\right.             
\end{equation*}
where $\small{\Pi_{\text{device}}(\beta_{k+1}^c)=\sum_{c=1}^N p_c \beta_{k+1}^c}$ for full device and $\small{=\sum_{c\in \mathcal{S}_{k+1}} \frac{1}{S} \beta_{k+1}^c}$ for partial device.
\end{algorithm*}

\section{Convergence analysis}\label{sec:convergence}


In this section, we show that FA-LD converges to the stationary distribution $\pi(\theta)$ in the 2-Wasserstein ($W_2$) distance at a rate of $O({1}/{\sqrt{T_{\epsilon}}})$ for strongly log-concave and smooth density. The $W_2$ distance is defined between a pair of Borel probability measures $\mu$ and $\nu$ on $\R^d$ as follows  
\begin{align*}
\small
    W_2(\mu, \nu):=\inf_{\textcolor{black}{\Gamma}\in \text{Couplings}(\mu, \nu)}\left(\int\|\bbeta_{\mu}-\bbeta_{\nu}\|_2^2 d \textcolor{black}{\Gamma}(\bbeta_{\mu}, \bbeta_{\nu})\right)^{\frac{1}{2}},
\end{align*}
where $\|\cdot\|_2$ denotes the $\ell_2$ norm on $\mathbb{R}^d$ and the pair of random variables $(\bbeta_{\mu}, \bbeta_{\nu})\in \R^d\times\R^d$ is a coupling with the marginals following $\mathcal{L}(\bbeta_{\mu})=\mu$ and $\mathcal{L}(\bbeta_{\nu})=\nu$. Note that $\mathcal{L}(\cdot)$ denotes a distribution of a random variable. 

\subsection{Assumptions}

We make standard assumptions on the smoothness and convexity of the functions $f^1, f^2,\cdots, f^N$, which naturally yields appealing tail properties of the stationary measure $\pi$. Thus, we no longer require a restrictive assumption on the bounded gradient in $\ell_2$ norm as in \citet{Koloskova19, yyz19, lhy+20}. In addition, to control the distance between $\nabla f^c$ and $\nabla \tilde f^c$, we also assume a bounded variance of the stochastic gradient in assumption \ref{def:variance_main}.

\begin{assumption}[Smoothness]\label{def:smooth_main} For each $c\in [N]$, $f^c$ is $L$-smooth if for some $L>0$ and $ \forall x, y\in \R^d$
\begin{align*}
f^c(y)\leq f^c(x)+\langle \nabla f^c(x),y-x \rangle+\frac{L}{2}\| y-x \|^2_2. 
\end{align*}
\end{assumption}

\begin{assumption}[Strongly convexity]\label{def:strong_convex_main}
For each $c\in [N]$, $f^c$ is $m$-strongly convex if for some $m>0$
\begin{align*}
f^c(x)\geq f^c(y)+\langle \nabla f^c(y),x-y \rangle + \frac{m}{2} \| y-x \|_2^2. 
\end{align*}
\end{assumption}

\begin{assumption}[Bounded variance]\label{def:variance_main}
For each $c\in [N]$, the variance of noise in the stochastic gradient $\nabla \tilde f^c(x)$ in each client is upper bounded such that 
\begin{align*}
\mathbb{E}[ \| \nabla \tilde f^c(x) - \nabla f^c(x) \|_2^2] \leq \sigma^2 d,\quad \forall x\in \R^d.
\end{align*}
\end{assumption}


\paragraph{Quality of non-i.i.d data} Denote by $\theta_*$ the global minimum of $f$. Next, we quantify the degree of the non-i.i.d data by $\gamma:=\max_{c\in[N]}\lrn{\nabla f^c(\theta_*)}_2$, which is non-negative and yields a larger scale if the data is less identically distributed.


\subsection{Proof sketch}

The proof hinges on showing the one-step result in the $W_2$ distance. To facilitate the analysis, we first define an auxiliary continuous-time process $(\bar\theta_t)_{t\geq 0}$ that synchronizes for \textcolor{black}{almost} any time $t\geq 0$ 
\begin{align}
\label{continuous_dynamics_main}
\d \bar\theta_t = - \nabla f(\bar\theta_t) \d t + \sqrt{2\tau} \d \overline{W}_t,
\end{align}
where $\overline{W}$ is a $d$-dimensional Brownian motion. The continuous-time algorithm is known to converge to the stationary distribution $\pi(\bar\theta)\propto e^{-\frac{f(\bar\theta)}{\tau}}$. Assume that $\bar\theta_0$ simulates from the stationary distribution $\pi$, then it follows that $\bar\theta_t\sim\pi$ for any $t\geq 0$.

\subsubsection{Dominated contraction in federated learning}

The first target is to show a contraction property of $\lrn{{\bar\theta}-\theta-\eta(\nabla f({\bar\theta})-{\bm{Z}})}_2^2$, {where ${\bm{Z}}=\sum_{c=1}^N p_c \nabla f^c(\theta^c)$}. {The key challenge is that $\bar\theta$ follows from the continuous diffusion Eq.(\eqref{continuous_dynamics_main}), while $\theta$ follows from Alg.\ref{alg:alg_main_text_partial_main}} based on distributed clients with infrequent communications $\theta=\sum_{c=1}^N p_c \theta^c$ every $K$ iterations, {as such, a divergence issue appears when $\theta\neq \theta^c$. To tackle this issue, we first} consider a standard decomposition 
\begin{align*}
\scriptsize
    &\quad\lrn{{\bar\theta}-\theta-\eta(\nabla f({\bar\theta})-{\bm{Z}})}_2^2=\lrn{{\bar\theta}-\theta}_2^2 -2\eta \underbrace{\langle {\bar\theta}-\theta, \nabla f({\bar\theta})-{\bm{Z}}\rangle}_{\mathcal{I}}+\eta^2 \lrn{\nabla f({\bar\theta})-{\bm{Z}}}_2^2.
\end{align*}
Using Eq.(\eqref{decomposition}), we decompose $\mathcal{I}$ and apply Jensen's inequality to obtain a lower bound of $\mathcal{I}$. In what follows, we have the following lemma. 
\begin{lemma}[Dominated contraction property, informal version of Lemma \ref{contraction}]
\label{contraction_main}
Assume assumptions \ref{def:smooth_main} and \ref{def:strong_convex_main} hold. For any learning rate $\eta \in (0, \frac{1}{L+m}]$, any $\bar\theta$ and $\{\theta^c\}_{c=1}^N \in\mathbb{R}^d$, 
we have
\begin{align*}
\tiny
    &\lrn{{\bar\theta}-\theta-\eta(\nabla f({\bar\theta})-{\bm{Z}})}_2^2\leq (1-\eta m) \cdot \|{\bar\theta}-\theta \|_2^2 +4\eta L\underbrace{\sum_{c=1}^N p_c \cdot  \|\theta^c-\theta \|_2^2 }_{\text{divergence term}},
\end{align*}
\end{lemma}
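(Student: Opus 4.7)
The strategy is to expand
\begin{align*}
\|\beta-\theta-\eta(\nabla f(\beta)-\bnabla f(\btheta))\|_2^2 = \|\beta-\theta\|_2^2 - 2\eta \mathcal{I} + \eta^2\|\nabla f(\beta)-\bnabla f(\btheta)\|_2^2,
\end{align*}
with $\mathcal{I}:=\langle \beta-\theta,\nabla f(\beta)-\bnabla f(\btheta)\rangle$, and to reconcile the cross and quadratic terms using the step-size condition $\eta \leq 1/(L+m)$.

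For $\mathcal{I}$, I would use Eq.~\eqref{decomposition} to rewrite it as $\sum_c p_c\langle \beta-\theta,\nabla f^c(\beta)-\nabla f^c(\theta^c)\rangle$, split each summand through the intermediate point $\beta^c$ into $[\nabla f^c(\beta^c)-\nabla f^c(\theta^c)]+[\nabla f^c(\beta)-\nabla f^c(\beta^c)]$, and decompose $\beta-\theta$ as $(\beta^c-\theta^c)+(\beta-\beta^c)-(\theta-\theta^c)$. The resulting ``diagonal'' piece $\langle \beta^c-\theta^c,\nabla f^c(\beta^c)-\nabla f^c(\theta^c)\rangle$ admits the standard co-coercivity bound from smoothness and strong convexity of each $f^c$, namely $\geq \tfrac{mL}{m+L}\|\beta^c-\theta^c\|_2^2+\tfrac{1}{m+L}\|\nabla f^c(\beta^c)-\nabla f^c(\theta^c)\|_2^2$; after weighted summation and Jensen's inequality $\sum_c p_c\|\beta^c-\theta^c\|_2^2\geq\|\beta-\theta\|_2^2$, this yields a contractive $-\tfrac{2\eta mL}{m+L}\|\beta-\theta\|_2^2 \leq -\eta m \|\beta-\theta\|_2^2$ (using $L\geq m$) plus a favorable gradient-squared surplus. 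The remaining ``off-diagonal'' contributions and the deviation piece $\langle \beta-\theta,\nabla f^c(\beta)-\nabla f^c(\beta^c)\rangle$ are handled by Young's inequality combined with smoothness $\|\nabla f^c(x)-\nabla f^c(y)\|_2\leq L\|x-y\|_2$; this is exactly where the divergence terms $\|\beta^c-\beta\|_2^2$ and $\|\theta^c-\theta\|_2^2$ first enter the bound.

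The quadratic term is then controlled by Jensen's plus smoothness, $\|\nabla f(\beta)-\bnabla f(\btheta)\|_2^2\leq L^2\sum_c p_c\|\beta-\theta^c\|_2^2$, and triangle-expanding $\beta-\theta^c=(\beta-\theta)+(\theta-\theta^c)$ separates this contribution into $\|\beta-\theta\|_2^2$ and $\|\theta^c-\theta\|_2^2$ buckets. The restriction $\eta\leq 1/(L+m)$ is precisely what makes the $\eta^2\|\cdot\|_2^2$ pieces absorbable against the $\tfrac{2\eta}{m+L}$ co-coercivity surplus from the previous step. I expect the main obstacle to be constant bookkeeping: several Young's parameters must be chosen jointly so that the surviving coefficient on $\|\beta-\theta\|_2^2$ collapses to exactly $(1-\eta m)$ and the divergence bucket aggregates to at most $4\eta L$. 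The slack in the reduction $\tfrac{2mL}{m+L}\geq m$, equivalent to $L\geq m$, is what allows this final balancing to close.
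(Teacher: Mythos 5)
Your overall skeleton matches the paper's: expand the square, decompose $\beta-\theta$ through the local points inside the inner product $\mathcal{I}$, apply the Nesterov co-coercivity inequality to the diagonal terms $\langle\beta^c-\theta^c,\nabla f^c(\beta^c)-\nabla f^c(\theta^c)\rangle$, aggregate with Jensen, and use $\eta\le\frac{1}{L+m}$ to absorb the $\eta^2$ term into the co-coercivity surplus. Two issues, however, keep the plan from closing as written. The smaller one is definitional: in this lemma $\nabla f(\beta)$ means $\sum_c p_c\nabla f^c(\beta^c)$ (each $f^c$ evaluated at its own local point), not $\sum_c p_c\nabla f^c(\beta)$, so your extra ``deviation piece'' $\nabla f^c(\beta)-\nabla f^c(\beta^c)$ should not appear; with your reading you are proving a slightly different inequality (they coincide in the lemma's one application, where all $\beta^c$ equal $\beta$, but not as stated).

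The substantive gap is your treatment of the off-diagonal cross terms $\langle\beta-\beta^c,\cdot\rangle$ and $\langle\theta^c-\theta,\cdot\rangle$ by ``Young plus smoothness.'' Bounding $\|\nabla f^c(\beta^c)-\nabla f^c(\theta^c)\|_2\le L\|\beta^c-\theta^c\|_2$ and then applying Young forces you to redistribute $\|\beta^c-\theta^c\|_2^2$ by the triangle inequality, which injects a positive multiple of $\|\beta-\theta\|_2^2$ back into the bound. You propose to absorb this with the slack $\frac{2mL}{m+L}-m=\frac{m(L-m)}{m+L}$, but that slack is exactly zero when $L=m$, a case the assumptions allow, so the coefficient cannot collapse to $(1-\eta m)$; and choosing the Young parameter large enough to shrink the leakage inflates the divergence coefficient far beyond $4\eta L$. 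The same objection applies to your alternative bound $\eta^2\|\nabla f(\beta)-\bnabla f(\btheta)\|_2^2\le \eta^2L^2\sum_c p_c\|\beta-\theta^c\|_2^2$. The missing idea is the paper's: Young the cross terms directly against the gradient difference, $|\langle a,g^c\rangle|\le (m+L)\|a\|_2^2+\frac{1}{4(m+L)}\|g^c\|_2^2$ with $g^c=\nabla f^c(\beta^c)-\nabla f^c(\theta^c)$. This spends exactly half of the co-coercivity surplus $\frac{1}{L+m}\sum_c p_c\|g^c\|_2^2$, leaves the other half (after Jensen) to cancel the $\eta^2$ term under $\eta\le\frac{1}{L+m}$, contributes $2\eta(m+L)\le 4\eta L$ to the divergence bucket, and---crucially---adds nothing to the $\|\beta-\theta\|_2^2$ coefficient, which therefore remains $1-\frac{2\eta mL}{m+L}\le 1-\eta m$ with no slack required.
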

where  $\theta=\sum_{c=1}^N p_c \theta^c$ and ${\bm{Z}}=\sum_{c=1}^N p_c \nabla f^c(\theta^c)$.

\subsubsection{Bounding divergence}

The following result shows that given a finite number of local steps $K$, the divergence between $\theta^c$ in local client and $\theta$ in the center is bounded in $\ell_2$ norm. Notably, since the Brownian motion leads to a lower order term $O(\eta)$ instead of $O(\eta^2)$, a na\"{i}ve proof framework such as \citet{lhy+20} may lead to a crude upper bound for the final convergence.  

\begin{lemma}[Bounded divergence, informal version of Lemma \ref{divergence}]\label{divergence_main}
Assume assumptions  \ref{def:smooth_main}, \ref{def:strong_convex_main}, and \ref{def:variance_main} hold. For any $\eta \in (0 , 2/m)$ and $\lrn{\theta_0^c-\theta_*}_2^2\leq d\mathcal{D}^2$ for any $c\in[N]$ and some constant $\mathcal{D}$, we have the $\ell_2$ upper bound of the divergence between local clients and the center
\begin{align*}
\small
    \sum_{c=1}^N p_c\E{\|\theta_k^c-\theta_k \|_2^2}&\leq O((K-1)^2\eta^2 d) +O((K-1)\eta d).\notag
\end{align*}
\end{lemma}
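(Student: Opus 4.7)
The plan is to exploit the synchronization structure directly: if $k_0 := K\lfloor k/K\rfloor$ denotes the most recent synchronization epoch, then $\theta_{k_0}^c = \theta_{k_0}$ for every $c$, so the divergence vanishes at $k = k_0$ and it suffices to control the drift accumulated over the at most $K-1$ subsequent local steps. First I would unroll the local recursion \eqref{local_client_main_paper} (with $\rho=0$) on both the client iterate $\theta_k^c$ and the formal average $\theta_k = \sum_{c'} p_{c'}\theta_k^{c'}$ to write
\begin{align*}
\theta_k^c - \theta_k = -\eta \sum_{j=k_0}^{k-1}\Bigl[\nabla \tilde f^c(\theta_j^c) - \sum_{c'=1}^{N} p_{c'} \nabla \tilde f^{c'}(\theta_j^{c'})\Bigr] + \sqrt{2\eta\tau}\sum_{j=k_0}^{k-1}\bigl(\Xi_j^c - \xi_j\bigr),
\end{align*}
then split $\mathbb{E}\|\theta_k^c - \theta_k\|_2^2$ into a gradient part and a Brownian part and handle each with its own technique.

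For the Brownian part I would exploit independence across both $j$ and $c'$ of the Gaussians $\xi_j^{c'}$. A direct calculation using $\Xi_j^c = \xi_j^c/\sqrt{p_c}$ and $\xi_j = \sum_{c'}\sqrt{p_{c'}}\xi_j^{c'}$ gives per-step variance $d(1/p_c - 1)$, and because the noise is temporally independent the variances add rather than multiply: $\mathbb{E}\|\sum_j (\Xi_j^c - \xi_j)\|_2^2 \le (K-1) d/p_c$. Taking the $p_c$-weighted sum yields the advertised $O((K-1)\eta d)$ term. For the gradient part I would apply Cauchy--Schwarz to the $K-1$ summands and then the triangle inequality, reducing the task to a uniform second-moment bound $\sup_{c,j}\mathbb{E}\|\nabla \tilde f^c(\theta_j^c)\|_2^2 = O(d)$. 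Combining Assumption \ref{def:variance_main} with $L$-smoothness and the splitting $\nabla f^c(\theta_j^c) = (\nabla f^c(\theta_j^c)-\nabla f^c(\theta_*)) + \nabla f^c(\theta_*)$, this reduces further to a uniform iterate bound $\sup_{c,j}\mathbb{E}\|\theta_j^c - \theta_*\|_2^2 = O(d)$, with the heterogeneity parameter $\gamma$ entering only through an additive $O(\gamma^2)$ term. The iterate bound itself I would establish by a one-step contraction argument on \eqref{local_client_main_paper}: strong convexity gives factor $(1-\eta m)$, Young's inequality absorbs the bias from $\nabla f^c(\theta_*)$ (controlled by $\gamma$), the noise variance $2\eta\tau d/p_c$ enters additively, the condition $\eta < 2/m$ ensures genuine contraction, and the hypothesis $\|\theta_0^c - \theta_*\|_2^2 \le d\mathcal{D}^2$ seeds the induction; Jensen transfers the bound across every synchronization. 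Plugging this back gives a gradient contribution of $O((K-1)^2\eta^2 d)$, and the two pieces combine to the claimed bound.

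The main obstacle, as the paragraph preceding the lemma anticipates, is precisely the asymmetry between the $O(\eta^2)$ cost of each drift step and the $O(\eta)$ cost of each Brownian increment: a naive Cauchy--Schwarz on the full drift-plus-noise sum (as in optimization-only FedAvg analyses such as \cite{lhy+20}) would inflate the noise contribution to $O((K-1)^2 \eta d)$ and thereby spoil the $W_2$ rate in the ensuing convergence theorem. The resolution is to keep the two regimes separate: independence-based variance summation for the injected Gaussians, Cauchy--Schwarz only for the drift. A secondary subtlety is the uniform iterate bound, where strong convexity of each $f^c$ contracts toward the global minimizer of $f$ rather than of $f^c$; ensuring the $\gamma$ term enters only additively (and not multiplicatively in $K$) is what keeps the final divergence bound $O(d)$ in its leading constants.
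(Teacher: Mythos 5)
Your proposal is correct and follows essentially the same route as the paper: anchor at the last synchronization step $k_0$ where $\theta_{k_0}^c=\theta_{k_0}$, unroll the at most $K-1$ local steps, bound the drift by Cauchy--Schwarz together with a uniform second-moment bound on $\nabla\tilde f^c$ (which the paper obtains from a one-step contraction iterate bound exactly as you describe, in Lemmas \ref{lem:L2_bound_local} and \ref{bounded_gradient_l2}), and bound the injected noise by summing variances of temporally independent Gaussians so that it contributes only $O((K-1)\eta d)$. The only cosmetic difference is that you center each client's trajectory at $\theta_k$ directly, while the paper first passes to $\sum_c p_c\E\|\theta_k^c-\theta_{k_0}\|_2^2$ via $\E\|X-\E X\|_2^2\le\E\|X\|_2^2$; the estimates and orders are identical.
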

The result relies on a uniform upper bound in $\ell_2$ norm, which avoids bounded gradient assumptions.

\subsubsection{Coupling to the stationary process}

Note that $\bar\theta_t$ is initialized from the stationary distribution $\pi$. 
The solution to the continuous-time process Eq.(\eqref{continuous_dynamics_main}) follows:
\begin{align}
\label{solution_continuous_dynamics_main}
    \bar\theta_t=\bar\theta_0 -\int_0^t \nabla f(\bar\theta_s)\d s + \sqrt{2\tau}\cdot\overline{W}_t, \qquad \forall t\geq 0.
\end{align}

Set $t\rightarrow(k+1)\eta$ and $\bar\theta_0\rightarrow\bar\theta_{k\eta}$ for Eq.(\eqref{solution_continuous_dynamics_main}) and consider a \emph{synchronous coupling} such that $W_{(k+1)\eta}-W_{k\eta}:=\sqrt{\eta}\xi_k$ is used to cancel the noise terms, we have
\begin{align}
\label{continuous_one_step_main}
    \bar\theta_{(k+1)\eta}=\bar\theta_{k\eta}-\int_{k\eta}^{(k+1)\eta}\nabla f(\bar\theta_s)\d s + \sqrt{2\tau\eta}\xi_k.
\end{align}

Subtracting Eq.(\eqref{fed_avg_langevin_dynamics_main}) from Eq.(\eqref{continuous_one_step_main}) and taking square and expectation on both sides yield that
\begin{align*}
    &\E{\|\bar\theta_{(k+1)\eta}-\theta_{k+1} \|^2_2}\leq  (1-{\eta m}/{2} ) \cdot \E{\|\bar\theta_{k\eta}-\theta_k\|_2^2}+\text{divergence term} + \text{time error}.
\end{align*}

Eventually, we arrive at the one-step error bound for establishing the convergence results.

\begin{lemma}[One step update, informal version of Lemma \ref{one_step_Dalalyan}]\label{one_step_Dalalyan_main}

Assume assumptions \ref{def:smooth_main}, \ref{def:strong_convex_main}, and \ref{def:variance_main} hold. Consider Algorithm \ref{alg:alg_main_text_partial_main} with any $\eta \in (0 , \frac{1}{2L})$ and $\lrn{\theta_0^c-\theta_*}_2^2\leq d\mathcal{D}^2$, $\rho=0$, and full device participation for any $c\in[N]$, where $\theta_*$ is the global minimum for the function $f$. Then
\begin{align*}
    &W_2^2(\mu_{k+1}, \pi)\leq  (1-{\eta m}/{2}) \cdot W^2_2(\mu_{k}, \pi)+ O(\eta^2 d((K-1)^2+\kappa)),
\end{align*}
where $\mu_k$ denotes the probability measure of $\theta_k$ and $\kappa=L/m$ is the condition number.
\end{lemma}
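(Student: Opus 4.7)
The plan is to carry out a one-step synchronous coupling argument in the spirit of \cite{dk19}, but with two extra error sources arising from the federated structure: the local–global divergence that appears because clients only communicate every $K$ iterations, and the stochastic-gradient noise injected at every client. Concretely, I would begin by picking an optimal $W_2$-coupling of the initial pair $(\bar\theta_{k\eta},\theta_k)$, drive the continuous process in Eq.~\eqref{continuous_one_step_main} by the same Gaussian increment $\sqrt{\eta}\,\xi_k$ that appears in the discrete update Eq.~\eqref{fed_avg_langevin_dynamics_main}, and then subtract the two equations so that the Brownian noise cancels exactly.

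After the cancellation I would decompose the discrepancy as
\begin{align*}
\bar\theta_{(k+1)\eta}-\theta_{k+1}=A_k+B_k+C_k,
\end{align*}
where $A_k=(\bar\theta_{k\eta}-\theta_k)-\eta\bigl(\nabla f(\bar\theta_{k\eta})-\bnabla f(\btheta_k)\bigr)$ is the drift-difference term, $B_k=-\int_{k\eta}^{(k+1)\eta}\bigl(\nabla f(\bar\theta_s)-\nabla f(\bar\theta_{k\eta})\bigr)\,\d s$ is the time-discretization remainder, and $C_k=-\eta\bigl(\bnabla f(\btheta_k)-\bnabla\tilde f(\btheta_k)\bigr)$ is the stochastic-gradient noise. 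Since $\bar\theta_t$ stays perfectly synchronized across clients, applying Lemma~\ref{contraction_main} with $\beta^c\equiv\bar\theta_{k\eta}$ and $\theta^c=\theta_k^c$ yields
\begin{align*}
\lrn{A_k}_2^2\leq (1-\eta m)\lrn{\bar\theta_{k\eta}-\theta_k}_2^2+4\eta L\sum_{c=1}^N p_c\lrn{\theta_k^c-\theta_k}_2^2,
\end{align*}
and the remaining divergence term is controlled in expectation by Lemma~\ref{divergence_main}, contributing at most $O\bigl(\eta L((K-1)^2\eta^2 d+(K-1)\eta d)\bigr)$.

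Next I would take expectations and exploit that $C_k$ is conditionally mean-zero given $\mathcal{F}_k$, so its cross terms with $A_k$ and $B_k$ vanish; Assumption~\ref{def:variance_main} then gives $\E\lrn{C_k}_2^2\leq \eta^2\sigma^2 d$. For $B_k$, I would use Cauchy–Schwarz together with $L$-smoothness to get $\lrn{B_k}_2^2\leq L^2\eta\int_{k\eta}^{(k+1)\eta}\lrn{\bar\theta_s-\bar\theta_{k\eta}}_2^2\,\d s$, then invoke stationarity of $\bar\theta_s\sim\pi$ to bound $\E_\pi\lrn{\nabla f}_2^2\leq \tau L d$ via integration by parts and combine with Itô isometry on $\overline{W}_s-\overline{W}_{k\eta}$, giving $\E\lrn{\bar\theta_s-\bar\theta_{k\eta}}_2^2=O(\tau d(s-k\eta))$ and hence $\E\lrn{B_k}_2^2=O(L^2\tau\eta^3 d)$. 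A Young's inequality with parameter $\alpha=\eta m/2$ splits the mixed norm as $\E\lrn{A_k+B_k}_2^2\leq (1+\alpha)\E\lrn{A_k}_2^2+(1+1/\alpha)\E\lrn{B_k}_2^2$, which converts the contraction factor from $(1-\eta m)$ to $(1-\eta m/2)$ and produces an extra $(1/(\eta m))\cdot O(L^2\tau\eta^3 d)$ that simplifies to an $O(\eta^2 d\kappa)$-type term. Passing from the coupling bound to $W_2^2(\mu_{k+1},\pi)$ through its definition concludes the argument.

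The main technical obstacle is this Young's-inequality balancing act: the time-discretization and stochastic-gradient contributions are multiplied by $1/\alpha=O(1/(\eta m))$, so one must check carefully that each hidden $1/m$ factor combines with $L^2$, $\tau$, and $\sigma^2$ to land cleanly inside the advertised $O(\eta^2 d((K-1)^2+\kappa))$ bound, with the problem-dependent constants $L,\tau,\sigma$ absorbed into the $O(\cdot)$. A secondary subtlety is that invoking Lemma~\ref{divergence_main} at step $k$ requires the uniform second-moment bound $\lrn{\theta_k^c-\theta_*}_2^2\leq d\mathcal{D}^2$ to be preserved along the trajectory; this is precisely the uniform $\ell_2$ control that the authors flag immediately after Lemma~\ref{divergence_main} and is what allows the analysis to avoid a bounded-gradient hypothesis.
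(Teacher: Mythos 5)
Your proposal follows essentially the same route as the paper's proof of Lemma \ref{one_step_Dalalyan}: a synchronous coupling with the stationary Langevin diffusion, the identical three-term decomposition into a drift-difference term (handled by the dominated-contraction Lemma \ref{contraction} plus the divergence bound of Lemma \ref{divergence}), a time-discretization remainder (bounded via smoothness and the stationarity of $\bar\theta_s$, as in Lemma \ref{lem:discretization}), and a conditionally mean-zero stochastic-gradient term, followed by the same Young's-inequality balancing that trades the $(1-\eta m)$ contraction for $(1-\eta m/2)$ while absorbing the $O(1/(\eta m))$ amplification of the remainder. The only cosmetic differences are your integration-by-parts bound on $\mathbb{E}_\pi\|\nabla f\|_2^2$ in place of the paper's moment bound $\mathbb{E}\|\bar\theta_s-\theta_*\|_2^2\le d\tau/m$, and your slightly imprecise phrasing of the uniform second-moment control (the paper preserves $\mathbb{E}\|\theta_k^c-\theta_*\|_2^2\le d\mathcal{D}^2+U$ via Lemma \ref{lem:L2_bound_local}, not the initial bound $d\mathcal{D}^2$ itself); neither affects correctness.
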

Given small enough $\eta$, the above Lemma indicates that the algorithm will eventually converge
\vspace{-2mm}
\subsection{Full device participation}

\subsubsection{Convergence based on independent noise}
\label{ind_converge}
When the synchronization step is conducted at every iteration $k$, the FA-LD algorithm is essentially the standard SGLD algorithm \citep{Welling11}. Theoretical analysis based on the 2-Wasserstein distance has been established in \citet{dm+16, Dalalyan17, dk19}. However, in scenarios of $K> 1$ with distributed clients, a divergence between the global variable $\theta_k$ and local variable $\theta^c_k$ appears and unavoidably affects the performance. The upper bound on the sampling error is presented as follows.

\begin{theorem}[Main result, informal version of Theorem \ref{main_theorem}]\label{main_paper_theorem} Assume assumptions \ref{def:smooth_main}, \ref{def:strong_convex_main}, and \ref{def:variance_main} hold. Given Algorithm \ref{alg:alg_main_text_partial_main} with $\eta\in (0, \frac{1}{2L}]$, $\rho=0$, full device, and well initialized $\{\theta_0^c\}_{c=1}^N$, we have 
\begin{align*}
    &W_2(\mu_{k}, \pi) \leq  \left(1- {\eta m}/{4}\right)^k \cdot \bigg(\sqrt{2d}\big(\mathcal{D} +  \sqrt{\tau/m} \big)\bigg)+30\kappa\sqrt{\eta m d } \cdot \sqrt{((K-1)^2+\kappa)H_0} .\notag
\end{align*}
where $\mu_k$ denotes the density of $\theta_k$ at iteration $k$, $K$ is the local updates, $\kappa :=L/m$, $\gamma:=\max_{c\in[N]}\lrn{\nabla f^c(\theta_*)}_2$, and $H_{0} := \mathcal{D}^2+\max_{c\in[N]}\frac{\tau}{mp_c} +\frac{\gamma^2}{m^2 d}+\frac{\sigma^2}{m^2}$.

\end{theorem}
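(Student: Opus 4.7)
The plan is to upgrade the one-step contraction in Lemma \ref{one_step_Dalalyan_main} into a global bound by iterating the recurrence and then passing to square roots. First, I would unroll the one-step inequality $W_2^2(\mu_{k+1}, \pi) \leq (1 - \eta m/2) W_2^2(\mu_k, \pi) + C \eta^2 d ((K-1)^2 + \kappa)$ telescopically to obtain
\begin{align*}
W_2^2(\mu_k, \pi) \leq \left(1 - \tfrac{\eta m}{2}\right)^k W_2^2(\mu_0, \pi) + C \eta^2 d ((K-1)^2 + \kappa) \sum_{j=0}^{k-1} \left(1 - \tfrac{\eta m}{2}\right)^j,
\end{align*}
and then bound the geometric series by $2/(\eta m)$, producing the single additive error $\frac{2 C \eta d ((K-1)^2 + \kappa)}{m}$.

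Second, I would bound the initial error $W_2(\mu_0, \pi)$ via a synchronous coupling routed through $\theta_*$. The triangle inequality in $W_2$ gives $W_2(\mu_0, \pi) \leq \lrn{\theta_0 - \theta_*}_2 + (\E_{\theta \sim \pi} \lrn{\theta - \theta_*}_2^2)^{1/2}$. The initialization hypothesis $\lrn{\theta_0^c - \theta_*}_2^2 \leq d \mathcal{D}^2$ together with Jensen controls the first term by $\sqrt{d}\,\mathcal{D}$; and the standard second-moment bound for $m$-strongly log-concave densities (applied at the mode $\theta_*$, using $\nabla f(\theta_*)=0$) controls the second by $\sqrt{d \tau / m}$. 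Slacking by $\sqrt{2}$ yields $W_2(\mu_0, \pi) \leq \sqrt{2d}\,(\mathcal{D} + \sqrt{\tau/m})$.

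Third, I would apply $\sqrt{a+b} \leq \sqrt{a} + \sqrt{b}$ to separate the transient and the bias, and invoke the elementary inequality $(1 - \eta m/2)^{k/2} \leq (1 - \eta m/4)^k$ (which follows from $\sqrt{1-x} \leq 1 - x/2$ on $[0,1]$, valid since $\eta \leq 1/(2L) \leq 1/m$) to match the advertised contraction rate.

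The main obstacle is tracking the constant $C$ hidden in the $O(\cdot)$ of Lemma \ref{one_step_Dalalyan_main}, since the final bound must read $30\kappa\sqrt{\eta m d}\sqrt{((K-1)^2+\kappa)H_0}$, which requires $C = \Theta(\kappa^2 m^2 H_0)$. That constant inherits two sources of dependence: the divergence bound of Lemma \ref{divergence_main}, whose $O(\cdot)$ absorbs a uniform-in-$k$ bound on $\max_{c}\E\lrn{\nabla \tilde f^c(\theta_k^c)}_2^2$; and the one-step coupling step itself. To match $H_0 = \mathcal{D}^2 + \max_c \tau/(m p_c) + \gamma^2/(m^2 d) + \sigma^2/m^2$ I would first establish a uniform second-moment bound on $\theta_k^c - \theta_*$ by a short stability/induction argument using strong convexity and the synchronization identity $\beta_k = \theta_k$, and then decompose
\begin{align*}
\nabla \tilde f^c(\theta_k^c) = \bigl(\nabla \tilde f^c(\theta_k^c) - \nabla f^c(\theta_k^c)\bigr) + \nabla f^c(\theta_*) + \bigl(\nabla f^c(\theta_k^c) - \nabla f^c(\theta_*)\bigr),
\end{align*}
whose three pieces contribute the $\sigma^2$, $\gamma^2$, and smoothness-plus-stationary-moment contributions respectively; the rescaling $\Xi_k^c = \xi_k^c/\sqrt{p_c}$ from Eq.~(\ref{noise_def}) is what introduces the $\max_c \tau/(m p_c)$ ingredient. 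Propagating these through Lemma \ref{divergence_main} and then Lemma \ref{one_step_Dalalyan_main} yields the claimed $\kappa^2 m^2 H_0$ prefactor, so that $\sqrt{2C\eta d/m}\,\sqrt{(K-1)^2+\kappa}$ collapses to $30\kappa\sqrt{\eta m d}\sqrt{((K-1)^2+\kappa)H_0}$ after absorbing the numerical constant.
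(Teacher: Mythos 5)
Your proposal matches the paper's proof essentially step for step: the paper likewise unrolls the one-step recursion of Lemma \ref{one_step_Dalalyan} (with explicit constant $400\eta^2 dL^2H_0((K-1)^2+\kappa)$), bounds the geometric series by $2/(\eta m)$, controls $W_2(\mu_0,\pi)\le\sqrt{2d}(\mathcal{D}+\sqrt{\tau/m})$ via exactly the coupling-through-$\theta_*$ argument you describe, and finishes with $\sqrt{a+b}\le\sqrt{a}+\sqrt{b}$ and $(1-\eta m/2)\le(1-\eta m/4)^2$, with $30=\lceil\sqrt{800}\rceil$. Your plan for tracking the constant --- a uniform second-moment bound on $\theta_k^c-\theta_*$ by induction plus the three-way gradient decomposition yielding the $\sigma^2$, $\gamma^2$, and injected-noise pieces of $H_0$ --- is precisely the content of the paper's Lemmas \ref{lem:L2_bound_local} and \ref{bounded_gradient_l2}.
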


We observe that the initialization, injected noise, data heterogeneity, and stochastic gradient noise all affect the convergence. Similar to \citet{lhy+20}, FA-LD with $K$-local steps resembles the one-step SGLD with a large learning rate \textcolor{black}{and the result is consistent with the optimal rate \citep{dm+16}, despite multiple inaccessible local updates. Nevertheless, given more smoothness assumptions, we may obtain a better dimension dependence \citep{dm+16, Ruilin22}}. Bias reduction  \citep{SCAFFOLD} can be further adopted to alleviate the data heterogeneity. 


\textbf{Computational Complexity} To achieve the precision $\epsilon$ based on the learning rate $\eta$, we can set 
\begin{align*}
    &30\kappa\sqrt{\eta m d} \cdot  \sqrt{(K^2+\kappa)H_0}  \leq {\epsilon}/{2},\\
    &\exp\big(-\frac{\eta m}{4} T_{\epsilon} \big) \cdot  \sqrt{2d} (\mathcal{D} +  \sqrt{\tau/m}  ) \leq {\epsilon}/{2}.
\end{align*}
It yields $\eta m\leq  O\bigg(\frac{\epsilon^2}{d\kappa^2  {(K^2+\kappa)H_0}}\bigg),\quad T_{\epsilon}\geq \Omega\bigg(\frac{\log({d}/{\epsilon})}{m\eta}\bigg).$


{Let $T_{\epsilon}$ denote the number of iterations required to achieve the target accuracy of $\epsilon$. By substituting into the upper bound of $\eta m$, we can determine that setting $T_{\epsilon}=\Omega(\epsilon^{-2}d\kappa^2 {(K^2+\kappa)H_0} \cdot \log({d}/{\epsilon}))$ is sufficient. This finding is consistent with the results obtained in \cite{dk19} in terms of dimension dependence when treating $W_2(\mu_0, \pi)$ as a constant. Furthermore, this result has been extended to more general distributional assumptions \citep{fl_Isoperimetry} and has been further refined through variance reduction or bias reduction techniques \citep{Vincent_2022}.}

\textbf{Optimal choice of $K$.}  Note that $H_0 = \Omega(\mathcal{D}^2)$, thus the number of communication rounds is of the order $\frac{T_{\epsilon}}{K}=\Omega\bigg( K+\frac{\kappa}{K}\bigg),$
where the value of $\frac{T_{\epsilon}}{K}$ first decreases and then increases w.r.t. $K$, which indicates setting $K$ either too large or too small leads to high communication costs. Ideally, $K$ should be selected in the scale of $\Omega(\sqrt{\kappa})$. Combining the definition of $T_{\epsilon}$, this shows that the optimal $K$ for FA-LD is in the order of $O(\sqrt{T_{\epsilon}})$, which matches the optimization-based results \citep{Stich19, lhy+20}. {We also acknowledge that our analysis regarding the optimal choice of $K$ is locally optimal, partly due to the suboptimal dependence on the condition number. We believe that a more refined analysis, as presented in \citep{Durmus_LMC_convex}, can help us refine our dependence on $\kappa$ and enhance our understanding of the optimal choice of local steps $K$.}

\subsubsection{Convergence guarantees via varying learning rates}

\begin{theorem}[Informal version of Theorem \ref{main_theorem_decay}]\label{main_paper_theorem_decay} Assume assumptions \ref{def:smooth_main}, \ref{def:strong_convex_main}, and \ref{def:variance_main} hold. Consider Algorithm \ref{alg:alg_main_text_partial_main} with $\rho=0$, full device, an initialization satisfying $\lrn{\theta_0^c-\theta_*}_2^2\leq d\mathcal{D}^2$ for any $c\in[N]$, and the varying learning rate following $\eta_{k}=\frac{1}{2L+(1/12)m k}$. Then for any $k\geq 0$, we have $\ W_2(\mu_{k}, \pi)\leq 45\kappa\sqrt{ ((K-1)^2+\kappa)H_0}\cdot\big(\eta_k m d\big)^{1/2}, \ \  \forall k \geq 0.$
\end{theorem}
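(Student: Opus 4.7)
\textbf{Proof plan for Theorem \ref{main_paper_theorem_decay}.} The plan is to leverage the one-step bound from Lemma \ref{one_step_Dalalyan_main} with the varying learning rate $\eta_k$ in place of the fixed $\eta$, and then establish the stated bound by induction on $k$. Since the derivation of Lemma \ref{one_step_Dalalyan_main} only uses $\eta\in(0,\tfrac{1}{2L}]$, and every $\eta_k=1/(2L+(m/12)k)$ lies in this range, we may write
\begin{align*}
W_2^2(\mu_{k+1},\pi)\leq \Bigl(1-\tfrac{\eta_k m}{2}\Bigr)\, W_2^2(\mu_k,\pi) + B\,\eta_k^2,
\end{align*}
where $B=O\bigl(\kappa^2 m^2 d\,((K-1)^2+\kappa)H_0\bigr)$ absorbs the constant hidden inside the $O(\cdot)$ in Lemma \ref{one_step_Dalalyan_main}; its form can be read off from the fixed-rate result, since $W_2^2\leq 900\kappa^2\eta m d((K-1)^2+\kappa)H_0$ is the fixed point of the recursion $x\mapsto(1-\tfrac{\eta m}{2})x+B\eta^2$, giving $B=450\kappa^2 m^2 d((K-1)^2+\kappa)H_0$.

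The inductive ansatz is $W_2^2(\mu_k,\pi)\leq \beta\,\eta_k$ with $\beta:=(45\kappa)^2 m d\cdot((K-1)^2+\kappa)H_0$. The crucial algebraic identity is that the schedule $\eta_k=1/(2L+(m/12)k)$ satisfies $1/\eta_{k+1}-1/\eta_k=m/12$, and therefore $\eta_k-\eta_{k+1}=(m/12)\eta_k\eta_{k+1}\leq (m/12)\eta_k^2$. Assuming the ansatz at step $k$, the one-step estimate gives
\begin{align*}
W_2^2(\mu_{k+1},\pi)\leq \bigl(1-\tfrac{\eta_k m}{2}\bigr)\beta\eta_k+B\eta_k^2 = \beta\eta_k - \bigl(\tfrac{\beta m}{2}-B\bigr)\eta_k^2.
\end{align*}
Comparing this with $\beta\eta_{k+1}=\beta\eta_k-\beta(\eta_k-\eta_{k+1})$, the inductive step reduces to $\beta(\eta_k-\eta_{k+1})\leq(\tfrac{\beta m}{2}-B)\eta_k^2$, which thanks to $\eta_k-\eta_{k+1}\leq(m/12)\eta_k^2$ is implied by $\tfrac{\beta m}{12}\leq\tfrac{\beta m}{2}-B$, i.e.\ $\beta\geq 12B/(5m)$. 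Plugging in $B=450\kappa^2 m^2 d((K-1)^2+\kappa)H_0$ gives the threshold $\beta\geq 1080\kappa^2 m d((K-1)^2+\kappa)H_0$, and the advertised choice $\beta=2025\kappa^2 m d((K-1)^2+\kappa)H_0$ clears it comfortably, yielding the $45\kappa$ prefactor after taking square roots.

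For the base case, I would verify $W_2^2(\mu_0,\pi)\leq\beta\eta_0=\beta/(2L)$. Combining the initialization $\lrn{\theta_0^c-\theta_*}_2^2\leq d\mathcal{D}^2$ with the standard second-moment bound $\E_{\theta\sim\pi}\lrn{\theta-\theta_*}_2^2\leq d\tau/m$ available for $m$-strongly-log-concave $\pi$, one gets $W_2^2(\mu_0,\pi)\leq 2d(\mathcal{D}^2+\tau/m)\leq 2d H_0$. Since $\beta/(2L)=(45)^2\kappa^2 m d((K-1)^2+\kappa)H_0/(2L)=(45)^2\kappa d((K-1)^2+\kappa)H_0/2$, and $\kappa\geq 1$ while $(K-1)^2+\kappa\geq 1$, this dominates $2dH_0$ by a wide margin, so the base case holds.

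The main obstacle is not conceptual but arithmetic: tracking the constants precisely enough that the same $\beta$ survives both the inductive step (where it must beat $12B/(5m)$) and the base case (where it must beat $W_2^2(\mu_0,\pi)/\eta_0$). The decaying schedule $\eta_k=1/(2L+(m/12)k)$ is engineered so that the contraction factor $1-\eta_k m/2$ exactly telescopes with the diminishing step size, and the constant $1/12$ in the denominator is chosen to leave enough slack for the $(\beta m/2-B)$ term to absorb the drift $\beta(\eta_k-\eta_{k+1})$; any larger coefficient on $k$ would require a correspondingly larger $\beta$, while a smaller coefficient would fail the inductive step.
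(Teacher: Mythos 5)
Your proposal is correct and is essentially the paper's own proof: an induction on $k$ driven by the one-step contraction lemma, where the decaying schedule is exploited through the identity $1/\eta_{k+1}-1/\eta_k=m/12$ and the base case is settled by the initialization bound $W_2^2(\mu_0,\pi)\leq 2d(\mathcal{D}^2+\tau/m)$. The only (cosmetic) difference is that the paper runs the induction on $W_2$ itself with the ansatz $W_2(\mu_k,\pi)\leq 1.5C_\kappa(\eta_k m d)^{1/2}$ and checks $(1-\eta_k m/12)\eta_k^{1/2}\leq\eta_{k+1}^{1/2}$ directly, whereas you work with $W_2^2$ and the reciprocal identity; the constants land in the same place.
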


To achieve the precision $\epsilon$, we need $ W_2(\mu_{k}, \pi)\leq \epsilon$, i.e. $W_2(\mu_{k}, \pi)\leq 45\kappa\sqrt{ (K^2+\kappa)H_0} \cdot \bigg(\frac{md}{2L+(1/12){mk}}\bigg)^{1/2}.$
We therefore require ${\Omega} ( \epsilon^{-2} d )$ iterations to achieve the precision $\epsilon$, which improves the $\Omega( \epsilon^{-2} d \log( {d}/{\epsilon} ))$ rate for FA-LD with a fixed learning rate by a $O(\log(d/\epsilon))$ factor.

\subsubsection{Privacy-accuracy trade-off via correlated noises}\label{correlated_sec}

The local updates in Eq.(\ref{local_independent_noise}) with $\Xi_k^c=\xi_k^c/\sqrt{p_c}$ requires all the local clients to generate the independent noise $\xi^c_k$. Such a mechanism enjoys the implementation convenience and yields a potential to protect the data privacy and alleviates the security issue. However, the large scale noise inevitably slows down the convergence. To handle this issue, the independent noise can be generalized to correlated noise based on a correlation coefficient $\rho$ between different clients. {Replacing Eq.(\eqref{local_independent_noise})} with 
\begin{equation}\label{local_client_diff_seeds_main_paper}
    \beta_{k+1}^c=\theta_k^c-\eta\nabla \tilde f^c(\theta_k^c)+\sqrt{2\eta\tau \rho^2}\dot{\xi}_k + \sqrt{2\eta(1-\rho^2)\tau/p_c}\xi_k^c,
\end{equation}
where $\dot{\xi}_k$ is a $d$-dimensional standard Gaussian vector shared by all the clients at iteration $k$ and $\dot\xi_k$ is independent with $\xi_k^c$ for any $c\in[N]$. Following the synchronization step based on Eq.(\eqref{synchronization_main_paper_first}), we have
\begin{equation}
\label{fed_avg_langevin_dynamics_pp_main_paper}
\theta_{k+1}=\theta_k-\eta \nabla \tilde f(\theta_k)+\sqrt{2\eta\tau}\xi_k,
\end{equation}
where $\xi_k=\rho \dot\xi_k + \sqrt{1-\rho^2}\sum_{c=1}^N \sqrt{p_c}\xi_k^c$. Since the variance of i.i.d variables is additive, it is clear that $\xi_k$ follows the standard $d$-dimensional Gaussian distribution. The correlated noise implicitly reduces the temperature and naturally yields a trade-off between federation and accuracy. 

Since the inclusion of correlated noise doesn't affect the iterate of Eq.(\eqref{fed_avg_langevin_dynamics_pp_main_paper}), the algorithm property maintains the same except the scale of the temperature $\tau$ and efficacy of federation are changed. Based on a target correlation coefficient $\rho\geq 0$, Eq.(\eqref{local_client_diff_seeds_main_paper}) is equivalent to applying a temperature $T_{c,\rho}=\tau(\rho^2+(1-\rho^2)/p_c)$. In particular, setting $\rho=0$ leads to $T_{c, 0}=\tau/p_c$, which exactly recovers Algorithm \ref{alg:alg_main_text_partial_main}; however, setting $\rho=1$ leads to $T_{c, 1}=\tau$, where the injected noise in local clients is reduced by $1/p_c$ times. Now we adjust the analysis as follows
\begin{theorem}[Informal version of Theorem \ref{correlated_noise_supp}]\label{correlated_noise_main} Assume assumptions \ref{def:smooth_main}, \ref{def:strong_convex_main}, and \ref{def:variance_main} hold.  Consider Algorithm \ref{alg:alg_main_text_partial_main} with $\rho\in[0, 1]$, $\eta\in (0, \frac{1}{2L}]$ and $\lrn{\theta_0^c-\theta_*}_2^2\leq d\mathcal{D}^2$ for any $c\in[N]$, we have
\begin{align*}
    &W_2(\mu_{k}, \pi) \leq  (1-{\eta m}/{4} )^k \cdot \bigg(\sqrt{2d}\big(\mathcal{D} +  \sqrt{\tau/m} \big)\bigg)+30\kappa\sqrt{\eta m d } \cdot \sqrt{((K-1)^2+\kappa)H_{\rho}},\notag
\end{align*}
where $\mu_k$ denotes the probability measure of $\theta_k$, $H_{\rho}: = { \mathcal{D}^2}+\frac{1}{m}\max_{c\in[N]} T_{c,\rho} +{\frac{\gamma^2}{m^2d}}+{\frac{\sigma^2}{m^2}}$.
\end{theorem}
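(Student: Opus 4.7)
The plan is to mirror the proof of Theorem~\ref{main_paper_theorem} essentially verbatim, with the only change being a careful tracking of how the correlated noise modifies the per-client noise variance that feeds into the divergence bound.

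First, I would observe that at the global (synchronized) level, the aggregated noise $\xi_k=\rho\dot\xi_k+\sqrt{1-\rho^2}\sum_c\sqrt{p_c}\,\xi_k^c$ is still a standard $d$-dimensional Gaussian, because its coordinates have variance $\rho^2+(1-\rho^2)\sum_c p_c = 1$ and $\dot\xi_k$ is independent of the $\xi_k^c$. Consequently the synchronized iterate in Eq.(\ref{fed_avg_langevin_dynamics_pp_main_paper}) has exactly the same form as Eq.(\ref{fed_avg_langevin_dynamics_main}), and the synchronous coupling to the continuous-time process $\bar\theta_t$ of Eq.(\ref{continuous_dynamics_main}) still cancels the Brownian increment against $\sqrt{2\eta\tau}\,\xi_k$ at the global level. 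In particular, the dominated-contraction Lemma~\ref{contraction_main} is purely deterministic and does not touch the noise, so it requires no modification.

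Second, I would re-derive the bounded-divergence Lemma~\ref{divergence_main} under the correlated-noise local update Eq.(\ref{local_client_diff_seeds_main_paper}). In that local update the injected Gaussian has total per-coordinate variance $2\eta\tau\rho^2+2\eta\tau(1-\rho^2)/p_c = 2\eta T_{c,\rho}$. Therefore every place in the original divergence proof where the scale $\tau/p_c$ appeared (as the variance of the $c$-th client's injected noise) is replaced by $T_{c,\rho}$. After taking the $p_c$-weighted sum $\sum_c p_c\,\E\|\theta_k^c-\theta_k\|_2^2$, the quantity $\max_c(\tau/p_c)$ inside $H_0$ becomes $\max_c T_{c,\rho}$, which is precisely the modification in $H_\rho$. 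The other three summands of $H_\rho$, namely $\mathcal{D}^2$, $\gamma^2/(m^2 d)$, and $\sigma^2/m^2$, come from initialization, non-i.i.d.\ data heterogeneity, and stochastic-gradient variance, respectively, none of which depend on $\rho$.

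Third, I would plug the new divergence bound into the one-step update Lemma~\ref{one_step_Dalalyan_main}. The same decomposition into $(1-\eta m/2)\,W_2^2(\mu_k,\pi)$ plus a divergence term plus an $O(\eta^3 d)$ time-discretization term goes through unchanged; only the divergence contribution now carries $H_\rho$ rather than $H_0$. Fourth, I would iterate this one-step bound exactly as in Theorem~\ref{main_paper_theorem}: unroll the geometric contraction with ratio $(1-\eta m/4)$ from the initialization, absorb the stationary tail via $W_2(\delta_{\theta_*},\pi)\le\sqrt{2d\tau/m}$ into the factor $\sqrt{2d}(\mathcal{D}+\sqrt{\tau/m})$, and sum the per-step errors of order $\eta^2 d((K-1)^2+\kappa)H_\rho$ against the geometric contraction to obtain the steady-state term $30\kappa\sqrt{\eta m d}\,\sqrt{((K-1)^2+\kappa)H_\rho}$. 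The main obstacle, and indeed the only nontrivial accounting, lies in the second step: carefully propagating the per-client variance $T_{c,\rho}$ through each recursion in the divergence proof while keeping the $p_c$-weighted averaging correct, so that one ends with $\max_c T_{c,\rho}$ (rather than $\sum_c p_c T_{c,\rho}$ or $\max_c \tau/p_c$) appearing in $H_\rho$. Once that is verified, the remainder of the argument is a drop-in substitution $H_0\mapsto H_\rho$ in the $\rho=0$ proof.
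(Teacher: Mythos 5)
Your proposal is correct and follows essentially the same route as the paper: the supporting lemmas (bounded divergence, the uniform $\ell_2$ bound, and the one-step update) are already carried out in the appendix with the per-client temperature $T_{c,\rho}=\tau(\rho^2+(1-\rho^2)/p_c)$ in place of $\tau/p_c$, and the paper's proof of the correlated-noise theorem is exactly the drop-in substitution $H_0\mapsto H_\rho$ that you describe. Your accounting of where $\max_c T_{c,\rho}$ enters (through the uniform iterate bound, while the divergence lemma's additive noise term aggregates to $\tau(\rho^2+N(1-\rho^2))=\sum_c p_c T_{c,\rho}$) matches the paper's bookkeeping.
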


Such a mechanism leads to a trade-off between data privacy and accuracy and may motivate us to exploit the optimal $\rho$ under differential privacy theories \citep{mama15}.

\subsection{Partial device participation}

Full device participation enjoys appealing convergence properties. However, it suffers from the straggler's effect in real-world applications, where the communication is limited by the slowest device. Partial device participation handles this issue by only allowing a small portion of devices in each communication and greatly increased the communication efficiency 
in a federated network. 

The first device-sampling scheme \text{I} \citep{LS20} selects a total of $S$ devices, where the $c$-th device is selected with a probability $p_c$. The first theoretical justification for convex optimization has been proposed by \citet{lhy+20}. The second device-sampling scheme \text{II} is to uniformly select $S$ devices without replacement. We follow  \citet{lhy+20} and assume $S$ indices are selected uniformly without replacement. In addition, the convergence also requires an additional assumption on balanced data \citep{lhy+20}. Both schemes are formally defined in section \ref{unbiased_sampling_schems_appendix}.

\begin{theorem}[Informal version of Theorem \ref{theorem_partial}]\label{thm:partial_II}
Under mild assumptions, we run Algorithm \ref{alg:alg_main_text_partial_main} with $\rho\in[0, 1]$, a fixed $\eta\in (0, \frac{1}{2L}]$ and $\lrn{\theta_0^c-\theta_*}_2^2\leq d\mathcal{D}^2$ for any $c\in[N]$, we have
\vspace{-2mm}
\begin{align*}
\tiny
    &W_2(\mu_{k}, \pi) \leq  (1-{\eta m}/{4} )^k \cdot \bigg(\sqrt{2d}\big(\mathcal{D} +  \sqrt{\tau/m} \big)\bigg)+30\kappa\sqrt{\eta m d } \cdot \sqrt{ H_{\rho}(K^2+\kappa)}+O\bigg(\sqrt{\frac{d}{S}(\rho^2+N(1-\rho^2)) C_S}\bigg),
\end{align*}
where $C_S=1$ for {Scheme I} and $C_S=\frac{N-S}{N-1}$ for {Scheme II}. 
\end{theorem}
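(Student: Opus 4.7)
I would reduce the partial--device analysis to the full--device setting of Theorem~\ref{correlated_noise_main} by viewing the subsampling at each synchronization step as an extra zero--mean perturbation applied on top of the full--device aggregate. At a synchronization step $k+1$ with $k+1\bmod K=0$, write
\begin{align*}
\theta_{k+1}^{\mathrm{part}} \,=\, \underbrace{\sum_{c=1}^N p_c\,\beta_{k+1}^c}_{=:\,\theta_{k+1}^{\mathrm{full}}} \,+\, \zeta_{k+1}, \qquad \zeta_{k+1}\,:=\, \Pi_{\mathrm{device}}(\beta_{k+1}^c)-\sum_{c=1}^N p_c\beta_{k+1}^c.
\end{align*}
For Scheme~I this decomposition is unbiased by construction of the sampling probabilities $p_c$; for Scheme~II the balanced--data assumption delivers unbiasedness as well. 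Between synchronizations the dynamics coincide with the full--device dynamics, so the divergence bookkeeping of Lemma~\ref{divergence_main} and the dominated contraction of Lemma~\ref{contraction_main} transfer verbatim, and only the synchronization step needs new work.

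\textbf{Controlling the sampling variance.} The core new ingredient is a bound on $\E\lrb{\lrn{\zeta_{k+1}}_2^2 \,\big|\, \{\beta_{k+1}^c\}_c}$. For Scheme~I, a direct variance calculation over the independent device indicators gives
\begin{align*}
\E\lrb{\lrn{\zeta_{k+1}}_2^2 \,\big|\, \{\beta_{k+1}^c\}_c} \,\leq\, \frac{1}{S}\sum_{c=1}^N p_c \lrn{\beta_{k+1}^c - \sum_{c'=1}^N p_{c'}\beta_{k+1}^{c'}}_2^2,
\end{align*}
while Scheme~II carries the standard finite--population factor $C_S=\tfrac{N-S}{N-1}$ from uniform sampling without replacement. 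I would then split $\beta_{k+1}^c-\sum p_{c'}\beta_{k+1}^{c'}$ into a deterministic--drift remainder, already controlled by Lemma~\ref{divergence_main} at scale $O((K-1)^2\eta^2 d+(K-1)\eta d)\,H_\rho$, and the local Langevin noise, whose per--coordinate variance is $\tau(\rho^2+(1-\rho^2)/p_c)$. Weighting by $p_c$ and summing produces precisely the factor $\rho^2+N(1-\rho^2)$ that appears in the stated bound.

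\textbf{Recursion and geometric summation.} With this variance estimate in hand, I would redo the one--step derivation of Lemma~\ref{one_step_Dalalyan_main} for $\theta_{k+1}^{\mathrm{part}}$ using the synchronous Brownian coupling to $\bar\theta_t$. Because $\zeta_{k+1}$ is conditionally mean zero and independent of $\bar\theta_{(k+1)\eta}$, its only imprint on $W_2^2(\mu_{k+1}^{\mathrm{part}},\pi)$ is additive through its second moment at synchronization steps. This yields a recursion of the form
\begin{align*}
W_2^2(\mu_{k+1}^{\mathrm{part}},\pi) \,\leq\, (1-\eta m/2)\,W_2^2(\mu_k^{\mathrm{part}},\pi) \,+\, O\!\lrp{\eta^2 d((K-1)^2+\kappa)H_\rho} \,+\, R_{k+1},
\end{align*}
where $R_{k+1}=\tfrac{C_S}{S}(\rho^2+N(1-\rho^2))\cdot d$ (up to divergence--scaled constants) at synchronization steps and vanishes otherwise. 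Unrolling over $k/K$ synchronization rounds, the geometric factor $\sum_{j\ge 0}(1-\eta m/2)^{jK}$ turns the $\eta$--dependent contribution into the full--device $O(\eta m d\cdot (K^2+\kappa)H_\rho)$ term from Theorem~\ref{correlated_noise_main}, while the sampling residual accumulates to exactly $O\!\lrp{\tfrac{d}{S}(\rho^2+N(1-\rho^2))C_S}$ after taking square roots, giving the stated bound.

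\textbf{Main obstacle.} The subtlest step is cleanly separating the two noise sources: the injected Langevin noise already scales as $1/p_c$ in each client when $\rho<1$, so a naive bound on $\lrn{\beta_{k+1}^c-\bar\beta_{k+1}}_2^2$ would inflate the partial--device bias by a spurious $1/\min_c p_c$ factor. I would avoid this by writing the local noise as $\sqrt{\rho^2}\,\dot\xi_k+\sqrt{(1-\rho^2)/p_c}\,\xi_k^c$ before squaring, so that the shared component $\dot\xi_k$ cancels in $\beta_{k+1}^c-\bar\beta_{k+1}$ and the independent components combine via $\sum_c p_c\cdot\tfrac{1-\rho^2}{p_c}=N(1-\rho^2)$. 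The second delicate point is Scheme~II: without the balanced--data assumption the uniform--without--replacement estimator is biased and one would pick up a persistent drift rather than only an $O(1/S)$ variance contribution, which is precisely why that assumption enters the ``mild assumptions'' clause of the theorem.
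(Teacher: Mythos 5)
Your proposal follows essentially the same route as the paper: the paper likewise splits $\E\lrb{\lrn{\bar\theta_{(k+1)\eta}-\theta_{k+1}}_2^2}$ into the full-device term plus $\E\lrb{\lrn{\beta_{k+1}-\theta_{k+1}}_2^2}$ using unbiasedness of the sampling (Lemma \ref{unbiased_scheme}), bounds that sampling variance by $\frac{1}{S}\sum_c p_c\lrn{\beta_{k+1}^c-\beta_{k+1}}_2^2$ for Scheme I and by the finite-population-corrected analogue for Scheme II (Lemma \ref{divergence_partial}), controls the spread of the $\beta_{k+1}^c$ via the same divergence argument yielding the $\tau(\rho^2+N(1-\rho^2))$ factor, and unrolls the recursion with the synchronization-step residual summed geometrically into the non-vanishing $O\big(\sqrt{\tfrac{d}{S}(\rho^2+N(1-\rho^2))C_S}\big)$ bias. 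The argument is correct and matches the paper's proof in all essential steps.
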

\vspace{-2mm}

Partial device participation leads to an extra bias regardless of the scale of $\eta$. To reduce it, we suggest to consider highly correlated injected noise, such as $\rho=1$, to reduce the impact of the injected noise. Further setting $O(\sqrt{{d}/{S}})\leq {\epsilon}/{3}$ and following a similar $\eta$ as in section \ref{ind_converge}, we can achieve the precision $\epsilon$ within $\Omega( \epsilon^{-2} d \log( {d}/{\epsilon} ))$ iterations given enough devices satisfying $S = \Omega( \epsilon^{-2} d )$.


The device-sampling scheme \text{I} provides a viable solution to handle the straggler's effect, which is rather robust to the data heterogeneity and doesn't require the data to be balanced. In more practical cases where a system can only operate based on the first $S$ messages for the local updates, Scheme \text{II} can achieve a reasonable approximation given more balanced data with  uniformly sampled device. If $S=1$, our Scheme \text{II} matches the result in the Scheme \text{I}; If $S=N$, then our Scheme II recovers the result in the full device setting; If $S= N - o(N)$, our Scheme II bound is better than scheme I.


\section{Differential Privacy Guarantees}
We consider the $(\epsilon,\delta)$-differential privacy with respect to the substitute-one relation $\simeq_{s}$ \cite{NEURIPS2018_3b5020bb}. Two datasets $\cS\simeq_{s}\cS'$ if they have the same size and differ by exactly one data point. For $\epsilon\ge 0$ and $\delta\in[0,1]$, a mechanism $\M$ is $(\epsilon,\delta)$-differentially private w.r.t. $\simeq_{s}$ if for any pair of input datasets $\cS\simeq_{s}\cS'$, 
and every measurable subset $E\subset \textup{Range}(\M)$, we have
\begin{equation} \label{eq:DP-def}
\prob[\M(\cS)\in E]\le e^{\epsilon}\prob[\M(\cS')\in E]+\delta.
\end{equation}



Since partial device participation is more general, we focus on analyzing the differential privacy guarantee based on updates with partial devices. Here, we present the result under scheme II. For the result under scheme I, please refer to Theorem \ref{thm:privacy_alg3_full} in the appendix.
\begin{theorem}[Partial version of Theorem \ref{thm:privacy_alg3_full}] \label{thm:privacy_alg3}
Assume assumptions \ref{assump:bdd_sens} and \ref{assump:grad_est} holds. For any $\delta_0\in(0,1)$, if $\eta\in\left(0,\frac{\tau(1-\rho^2)\gamma^2\min_{c\in[N]}p_{c}}{\Delta_l^2\log(1.25/\delta_0)}\right]$, then Algorithm $\ref{alg:alg_main_text_partial_main}$ under scheme II is $(\epsilon^{(3)}_{K,T},\delta^{(3)}_{K,T})$-differentially private w.r.t. $\simeq_{s}$ after $T$ ($T=EK$ with $E\in \mathbb{N}, E\ge 1$) iterations where
\begin{align*}
\small
&\epsilon^{(3)}_{K,T}=\tilde\epsilon_K\min\left\{\sqrt{\frac{2T}{K}\log\left(\frac{1}{\delta_2}\right)} + \frac{TS (e^{\epsilon_{K}}-1)}{KN},\ \frac{T}{K}\right\},\\
&\delta^{(3)}_{K,T}=\frac{S}{N}\gamma T\delta_0+ \frac{TS}{KN}\delta_1+\delta_2,
\end{align*}
with $\tilde\epsilon_K= \log\left(1+\frac{S}{N}\left(e^{\epsilon_K}-1\right)\right)$, 
$\epsilon_K=\epsilon_1\min\left\{\sqrt{2K\log(1/\delta_1)} + K(e^{\epsilon_1}-1),\ 
K\right\}$, \\
$\epsilon_1=2\Delta_l \sqrt{\frac{\eta\log(1.25/\delta_0)}{\tau(1-\rho^2)\min_{c\in [N]}p_{c}}}$,
and $\delta_1,\delta_2\in[0,1)$.
\end{theorem}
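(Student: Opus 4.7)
The plan is to build up the privacy guarantee through a hierarchy of three composition/amplification layers, mirroring the three nested random subroutines of Algorithm \ref{alg:alg_main_text_partial_main}: (i) a single noisy local gradient step, (ii) the $K$ local steps executed between two synchronizations, and (iii) the $T/K=E$ communication rounds, each of which only touches an $S$-subset of devices sampled uniformly without replacement under scheme II. The choice of $\eta$ in the hypothesis is precisely what forces the single-step bound to land in the regime where the Gaussian mechanism is valid, so once that is set up the rest is composition bookkeeping.

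First I would analyze one local update on a fixed client $c$. Under $\cS\simeq_{s}\cS'$, the differing data point sits in exactly one client, so without loss of generality only that client's gradient $\nabla \tilde f^{c}$ changes; assumption \ref{assump:bdd_sens} bounds its sensitivity by $\Delta_l$, so the sensitivity of the drift term $\eta \nabla \tilde f^c$ is $\eta \Delta_l$. Only the independent noise $\sqrt{2\eta\tau(1-\rho^2)/p_{c}}\,\xi_k^c$ in Eq.\eqref{local_client_main_paper} contributes to privacy (the shared $\dot\xi_k$ affects all clients identically and therefore cannot mask client-specific information), so a direct application of the Gaussian mechanism combined with whatever inner-batch subsampling is built into assumption \ref{assump:grad_est} (contributing the factor $\gamma$ and the $\min_c p_c$ normalization) yields the per-step guarantee $(\epsilon_1,\gamma\delta_0)$ with the stated $\epsilon_1$, provided $\eta$ is small enough that $\epsilon_1\le 1$ — this is exactly the hypothesis on $\eta$.

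Next I would compose the $K$ local steps within one communication round via Kairouz--Oh--Viswanath advanced composition: $K$-fold composition of $(\epsilon_1,\gamma\delta_0)$-DP mechanisms is $(\epsilon_K,K\gamma\delta_0+\delta_1)$-DP with $\epsilon_K=\epsilon_1\min\{\sqrt{2K\log(1/\delta_1)}+K(e^{\epsilon_1}-1),\,K\}$, matching the definition in the theorem. Then I would invoke privacy amplification by subsampling for scheme II: since the differing data point lies in a single client $c^\star$ and scheme II picks $S$ of $N$ clients uniformly without replacement, $c^\star$ participates in a given round with probability $S/N$ and the round is a no-op otherwise; this gives the per-round guarantee $(\tilde\epsilon_K,(S/N)(K\gamma\delta_0+\delta_1))$-DP with $\tilde\epsilon_K=\log(1+(S/N)(e^{\epsilon_K}-1))$. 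Finally, a second advanced composition across the $E=T/K$ rounds produces the announced $\epsilon^{(3)}_{K,T}$, and the $\delta$ budgets from all levels add linearly to give $\delta^{(3)}_{K,T}=(S/N)\gamma T\delta_0+(TS/(KN))\delta_1+\delta_2$.

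The main obstacle I expect is at the second layer: ensuring the standard subsampling-amplification lemma is genuinely applicable here. The subtlety is that scheme II is sampling without replacement over clients (not over data points), and one must verify that the ``contribution of $c^\star$ to the output of round $e$'' is itself $(\epsilon_K,K\gamma\delta_0+\delta_1)$-DP conditioned on the set of sampled clients, and that the aggregation $\sum_{c\in \cS_{k+1}}(1/S)\beta^c_{k+1}$ is a post-processing of that client's iterate. A secondary concern is propagating the ``min over $c$'' in the single-step bound consistently through the composition; the worst-case client weight $\min_c p_c$ forces the clean uniform estimate stated in the theorem, and I would handle this by simply bounding all per-client $\epsilon$s by their maximum before invoking composition, at a mild cost in tightness. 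After these two points, the remaining bookkeeping is mechanical.
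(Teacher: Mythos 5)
Your proposal follows essentially the same three-layer route as the paper's proof of Theorem \ref{thm:privacy_alg3_full}: the Gaussian mechanism plus data-subsampling amplification for a single local step, advanced composition over the $K$ local steps, device-subsampling amplification for scheme II (with the aggregation treated as post-processing), and a final advanced composition over the $T/K$ rounds, with the shared noise $\dot\xi_k$ correctly discarded as contributing nothing to privacy. The only imprecision is at the first layer: the per-step sensitivity is $\frac{\eta}{\gamma p_c}\Delta_l$ rather than $\eta\Delta_l$, and the hypothesis on $\eta$ is used to guarantee the pre-amplification parameter $\epsilon_{0,c}\le 1$ so that $\log\left(1+\gamma\left(e^{\epsilon_{0,c}}-1\right)\right)\le 2\gamma\epsilon_{0,c}=\epsilon_{1,c}$, not to force $\epsilon_1\le 1$; otherwise the bookkeeping matches the paper's.
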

According to Theorem \ref{thm:privacy_alg3} and section \ref{sec:DP_discussion}, Algorithm \ref{alg:alg_main_text_partial_main} is at least $(\frac{T}{K}\log\left(1+\frac{S}{N}(e^{K\epsilon_1}-1)\right),\frac{S}{N}\gamma T\delta_0)$-differentially private. 
Moreover, if 
$\eta=O\left(\frac{\tau(1-\rho^2)N^2\min_{c\in[N]}p_c\log(1/\delta_2)}{\Delta_l^2 S^2 T\log(1/\delta_0) \log(1/\delta_1)}\right)$, then we have that $\epsilon_{K,T}^{(3)}=O\left(\frac{S\Delta_l}{N}\sqrt{\frac{\eta T \log(1/\delta_0)\log(1/\delta_1)\log(1/\delta_2)}{\tau(1-\rho^2)\min_{c\in[N]}p_c}}\right)$.

There is a trade-off between privacy and utility. By Theorem \ref{thm:privacy_alg3}, $\epsilon^{(3)}_{K,T}$ is an increasing function of $\frac{\eta}{\tau(1-\rho^2)}$, $\frac{S}{N}$, and $T$. $\delta^{(3)}_{K,T}$ is an increasing function of $\frac{S}{N}$, $\gamma$, and $T$. However, by Theorem \ref{thm:partial_II}, the upper-bound of $W_{2}(\mu_T,\mu)$ is a decreasing function of $\rho$, $T$, $S$ and is an increasing function of $\tau$ and $N$. There is an optimal $\eta$ to minimize 
$W_{2}(\mu_T,\mu)$ for fixed $T$ while we can make $\epsilon^{(3)}_{K,T}$ arbitrarily small by decreasing $\eta$ for any fixed $T$. In practice, users can tune hyper-parameters based on DP and accuracy budget. For example, under some DP budget $(\epsilon_*,\delta_*)$, we can select the largest $\rho\in[0,1]$ and $S\in[N]$ such that $\epsilon^{(3)}_{K,T}\le \epsilon_*$ and $\delta^{(3)}_{K,T}\le \delta_*$ to achieve the target error $W_{2}(\mu_{T},\mu)$.

\label{DP_section}

\vspace{3mm}
\section{Experiments}\label{simulation_local_step}
\subsection{Simulations} 

For each $c\in[N]$, where $N=50$, we sample $\theta_c$ from a 2d Gaussian distribution $N(0,\alpha I_2)$ and sample $n_c$ points from  $N(\theta_c,\Sigma)$, where 
$
\small{\Sigma=\left[\begin{matrix}
5 & -2\\
-2 & 1
\end{matrix}\right]}
$. 
Thus, $l(\theta;x_{c,i})=\frac{1}{2}(\theta-x_{c,i})^{\top}\Sigma^{-1}(\theta-x_{c,i})+\log(2\pi |\Sigma|^{\frac{1}{2}})$,  $\ell^c(\theta)=\sum_{i=1}^{n_c}l(\theta;x_{c,i})$. 
The temperature is $\tau=1$. The target density is $N(u,\frac{1}{n}\Sigma)$ with $u=\frac{1}{n}\sum_{c=1}^N\sum_{i=1}^{n_c}x_{c,i}$. We choose a Gaussian posterior to facilitate the calculation of the W2 distance to verify theoretical properties. 

We repeat each experiment $R=300$ times. At the $k$-th communication round, we obtain a set of $R$ simulated parameters $\{\theta_{k,j}\}_{j=1}^R$, where $\theta_{k,j}$ denotes the parameter at the $k$-th round in the $j$-th independent run. The underlying distribution $\mu_k$ at round $k$ is approximated by a Gaussian variable with the empirical mean ${u}_{k}=\frac{1}{R}\sum_{j=1}^R\theta_{k,j}$ and covariance matrix ${\Sigma}_{k}=\frac{1}{R-1}\sum_{j=1}^R (\theta_{k,j}-{u}_k)(\theta_{k,j}-{u}_k)^{\top}$. 

 \begin{figure*}[htbp]
  \vskip -0.0in
    \centering
    \subfigure[Study of $K$]{
    \begin{minipage}[t]{0.19\linewidth}
    \centering
    \label{fig:optimalK}
    \includegraphics[width=1.1in]{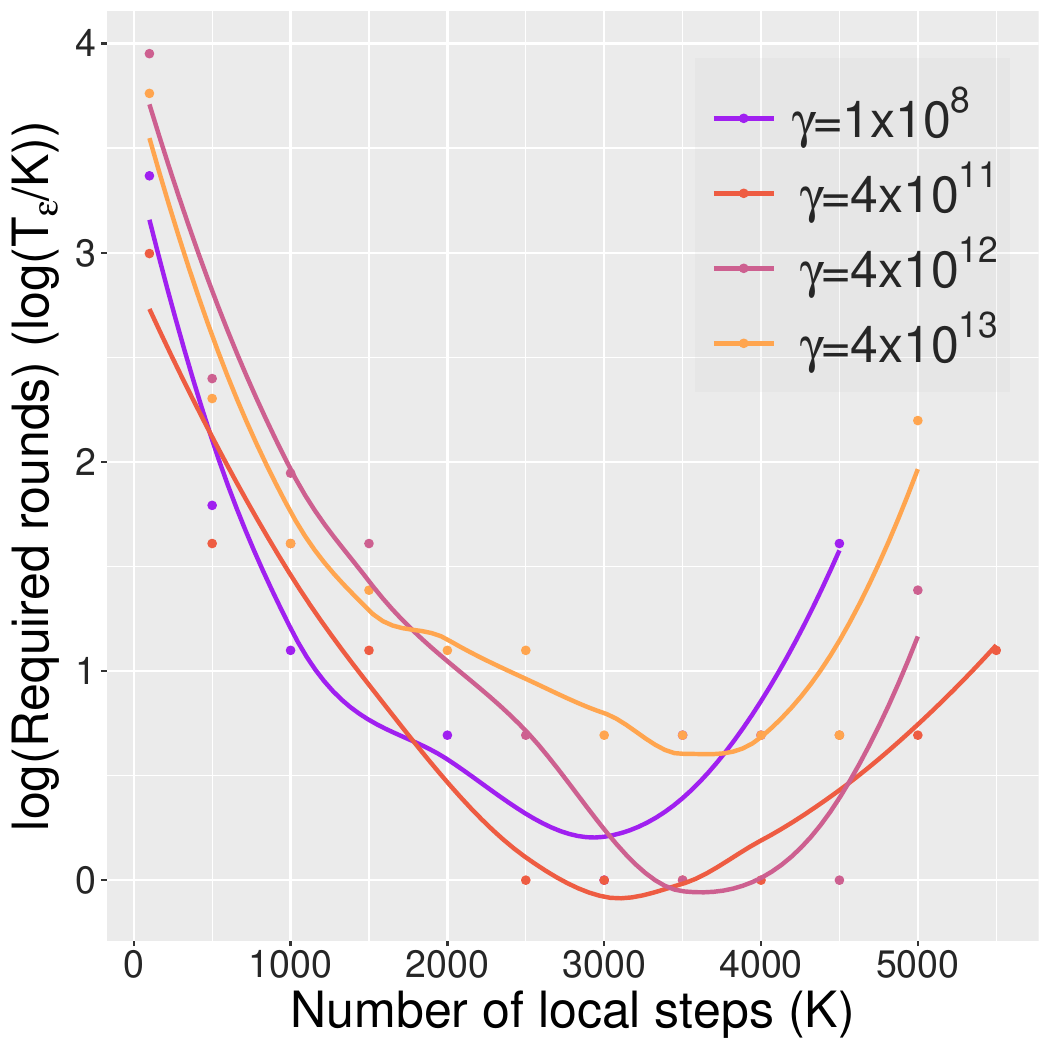}
    \end{minipage}%
    }%
    \subfigure[Study of $\gamma$]{
    \begin{minipage}[t]{0.19\linewidth}
    \centering
    \label{fig:alpha}
    \includegraphics[width=1.1in]{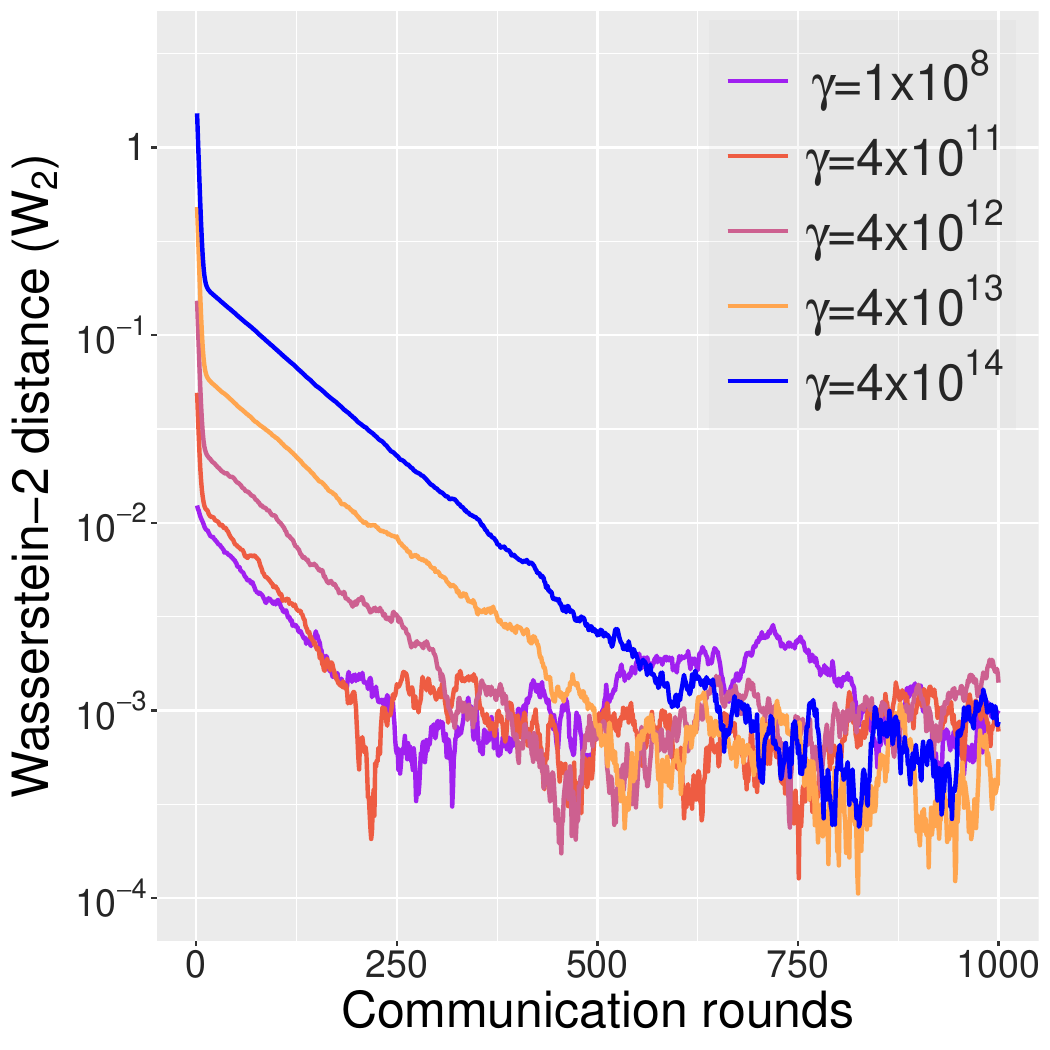}
    \end{minipage}%
    }%
    \subfigure[Study of $\rho$]{
    \begin{minipage}[t]{0.19\linewidth}
    \centering
    \label{fig:rho}
    \includegraphics[width=1.1in]{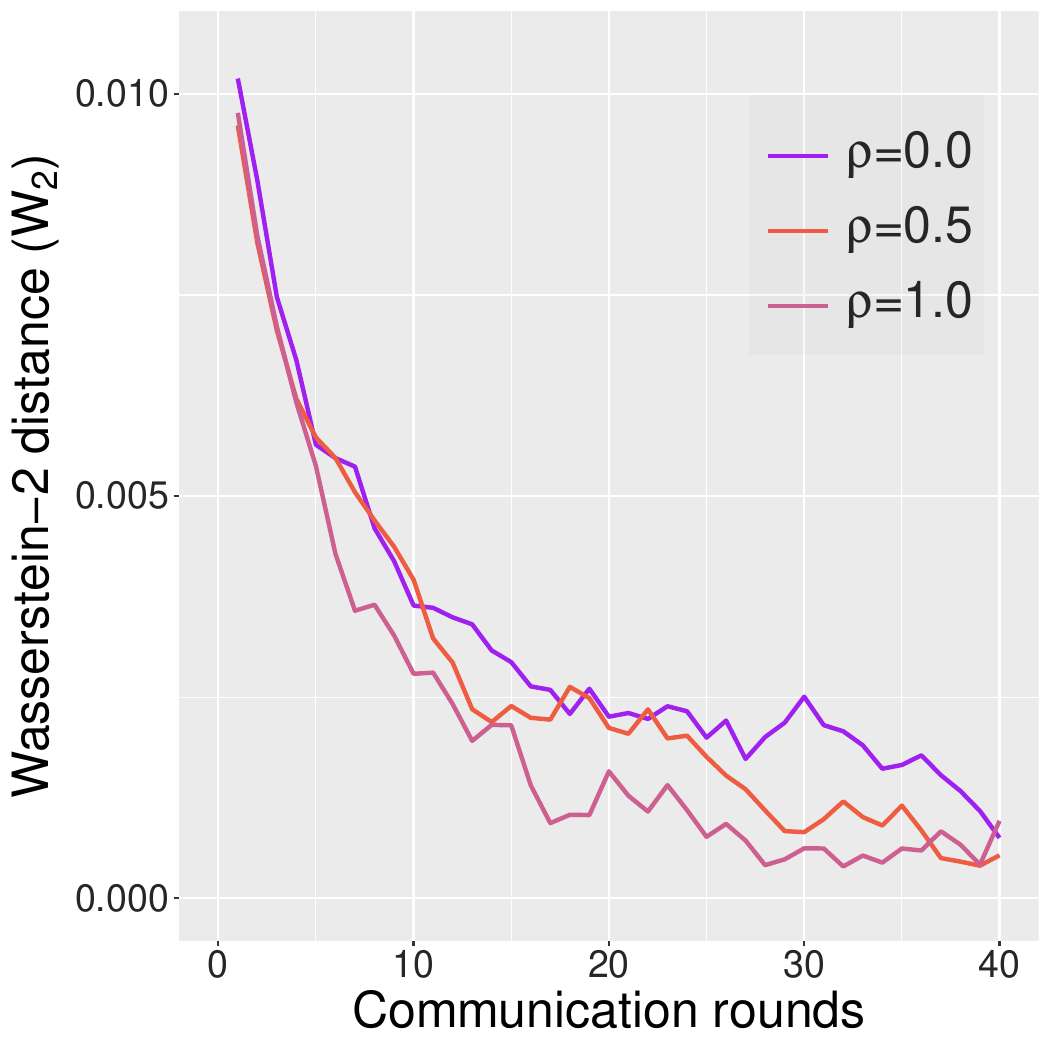}
    \end{minipage}%
    }%
    \subfigure[True density]{
    \begin{minipage}[t]{0.19\linewidth}
    \centering
    \label{fig:true_density}
    \includegraphics[width=1.1in]{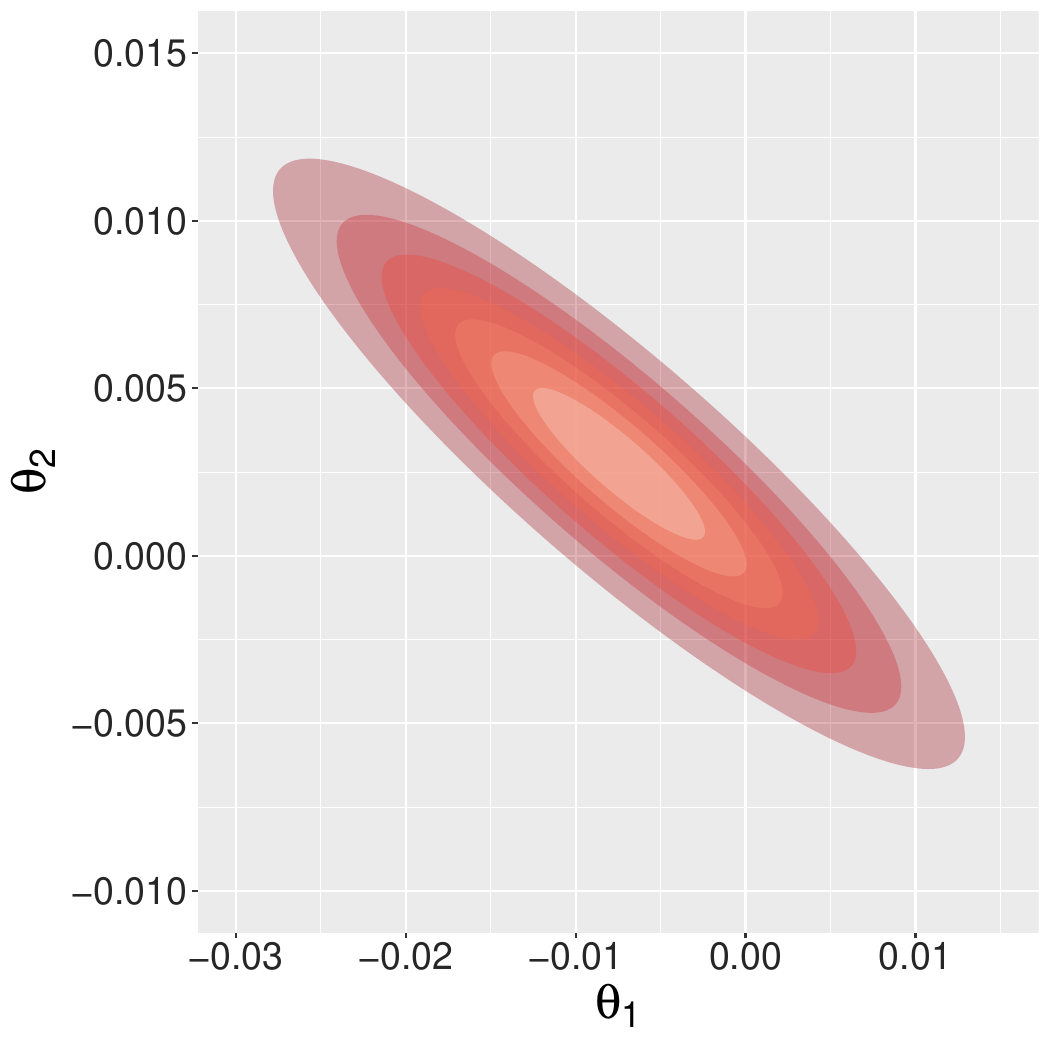}
    \end{minipage}%
    }%
    \subfigure[Empirical density]{
    \begin{minipage}[t]{0.19\linewidth}
    \centering
    \label{fig:empirical_density}
    \includegraphics[width=1.1in]{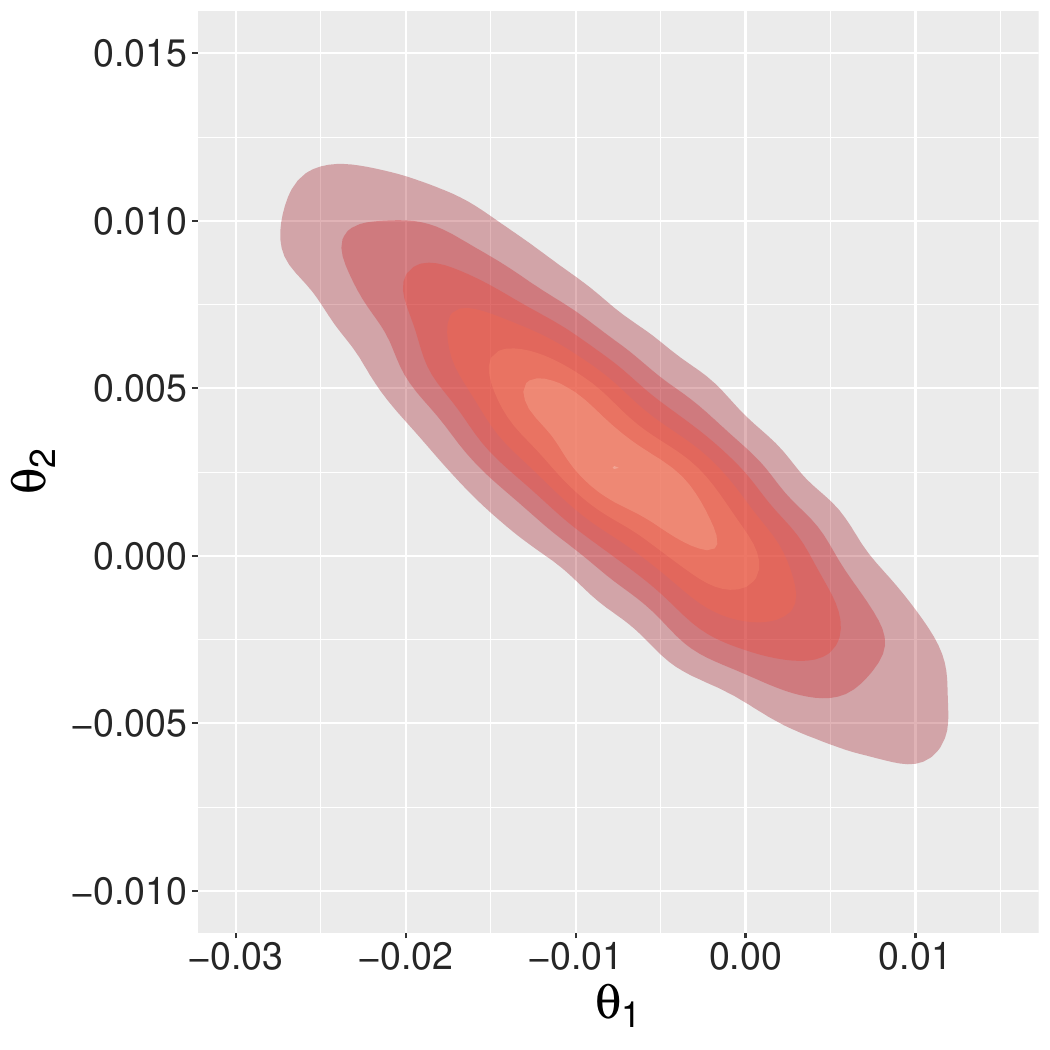}
    \end{minipage}%
    }%
  \vskip -0.0in
  \caption{Convergence of FA-LD based on full devices. In Figure \ref{fig:optimalK},  points may coincide.}
  \label{figure:full_device}
  \vskip -0.0in
\end{figure*}

\emph{Optimal local steps: } We study the choices of local step $K$ for Algorithm \ref{alg:alg_main_text_partial_main} based on $\rho=0$, full device, and different $\alpha$'s, which corresponds to different levels of data heterogeneity modelled by $\gamma$. We choose $\alpha=0, 1, 10, 100, 1000$ and the corresponding $\gamma$ is around $1\times 10^{8},4\times 10^{11}, 4\times 10^{12}, 4\times 10^{13}$, and $4\times 10^{14}$, respectively.  We fix $\eta=10^{-7}$. We evaluate the (log) number of communication rounds to achieve the  accuracy $\epsilon=10^{-3}$ and denote it by $T_{\epsilon}$. As shown in Figure \ref{fig:optimalK}, a small $K$ leads to an excessive amount of communication costs; by contrast, a large $K$ results in large biases, which in turn requires high communications. The optimal $K$ 
is around 3000 
and \emph{the communication savings can be as large as 30 times}. 

\emph{Data heterogeneity and correlated noise: } We study the impact of $\gamma$ on the convergence based on $\rho=0$, full device, different $\gamma$ from $\{1\times 10^{8},4\times 10^{11}, 4\times 10^{12}, 4\times 10^{13}$, and $4\times 10^{14}\}$. We set $K=10$. As shown in Figure \ref{fig:alpha}, the $W_2$ distances under different $\gamma$ all converge to some levels around $10^{-3}$ after sufficient computations. Nevertheless, a larger $\gamma$ does slow down the convergence, which suggests adopting more balanced data to facilitate the computations. In Figure \ref{fig:rho}, we study the impact of $\rho$ on the convergence of the algorithm. We choose $K=100$ and $\gamma=10^8$ and observe that a larger correlation slightly accelerates the computation, although it risks in privacy concerns.

\emph{Approximate samples: } In Figure \ref{fig:empirical_density}, we plot the empirical density according to the samples from Algorithm \ref{alg:alg_main_text_partial_main} with $\rho=0$, full device, $K=10$ and $\gamma=10^{8}$, $\eta=10^{-7}$. For comparison, we show the true density plot of the target distribution in Figure \ref{fig:true_density}. The empirical density approximates the true density very well, which indicates that the potential of FA-LD in federated learning.

\begin{figure*}[htbp]
    \vskip -0.0in
    \centering
    \subfigure[\scriptsize{Full devices: $S=50$}]{
    \begin{minipage}[t]{0.19\linewidth}
    \centering
    \label{full_device_baseline}
    \includegraphics[width=1.1in]{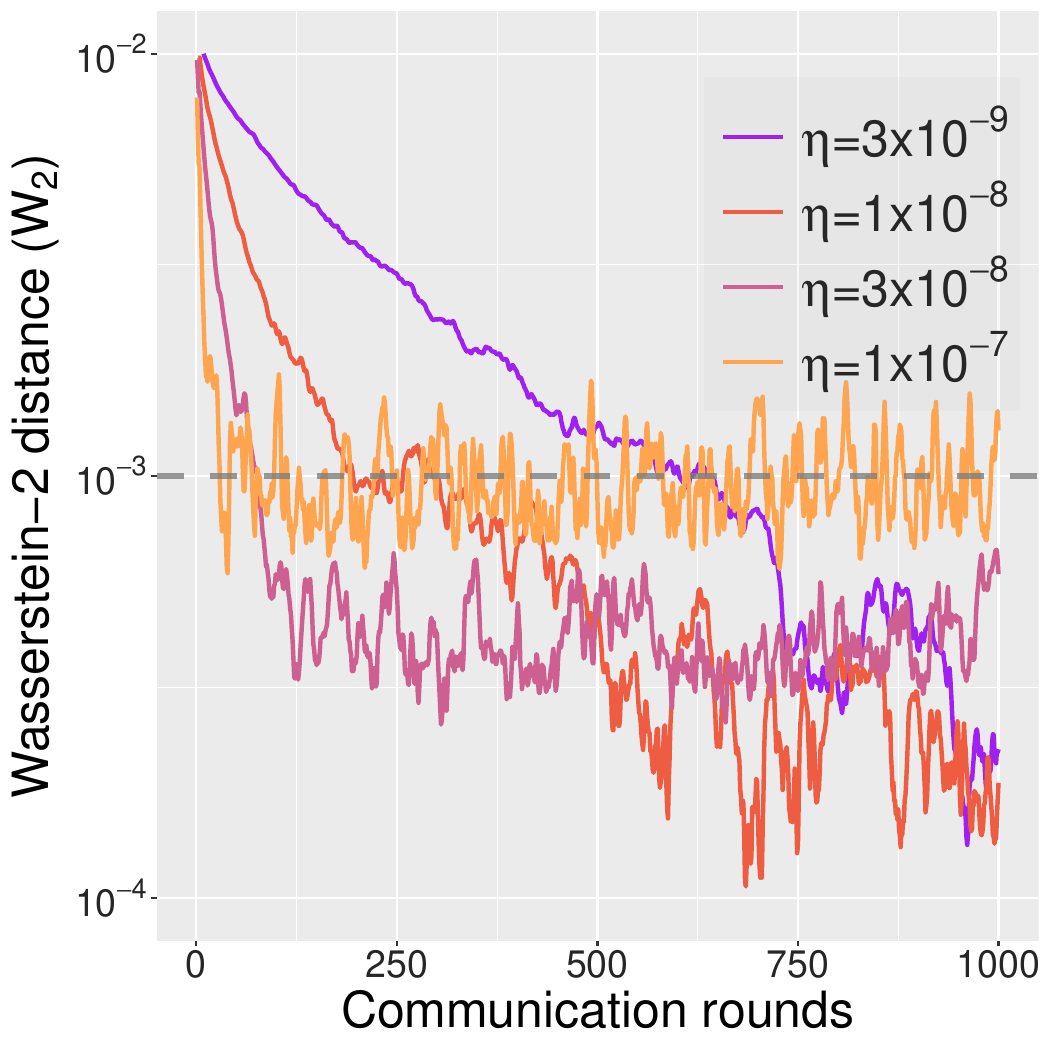}
    \end{minipage}%
    }%
    \subfigure[\scriptsize{Scheme I: $S=40$}]{
    \begin{minipage}[t]{0.19\linewidth}
    \centering
    \label{partial_device_s1_40}
    \includegraphics[width=1.1in]{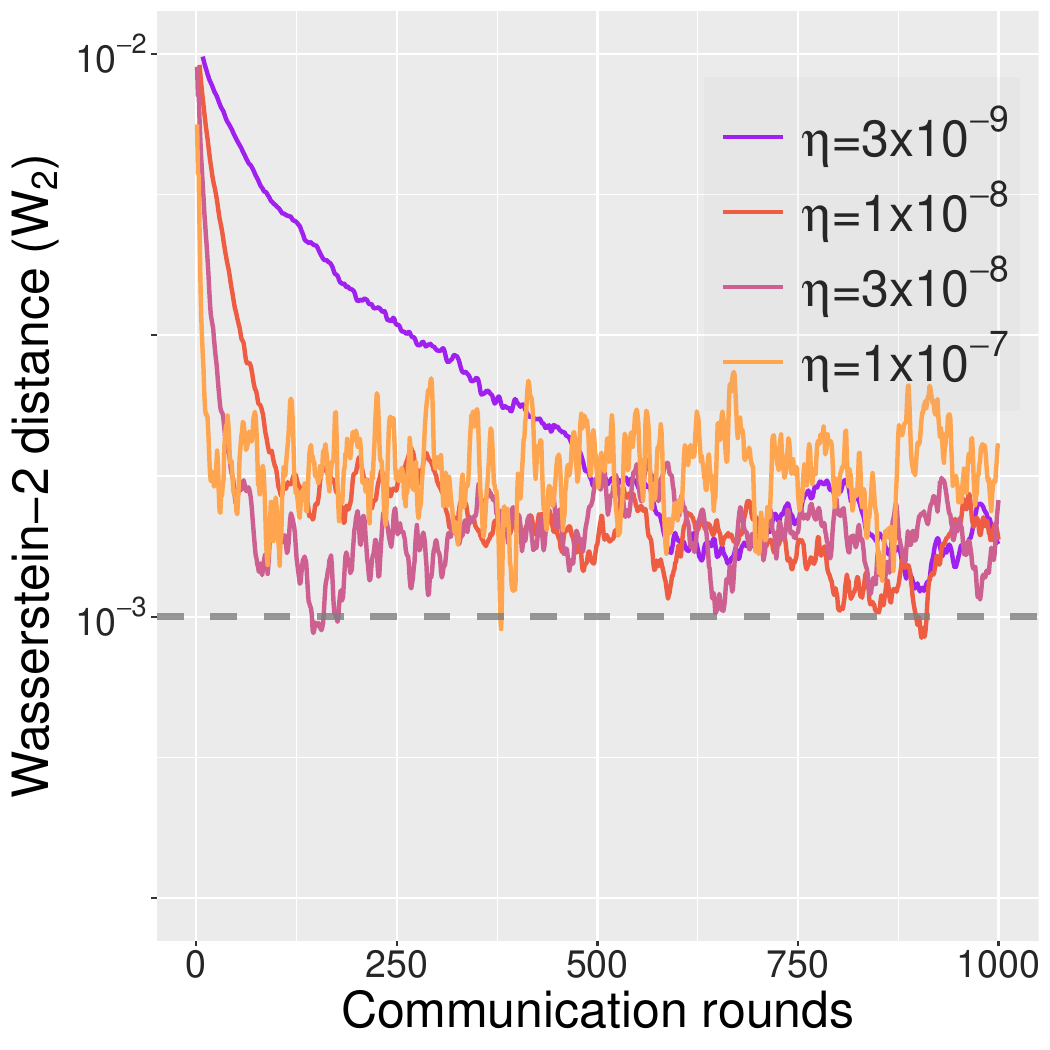}
    \end{minipage}%
    }%
    \subfigure[\scriptsize{Scheme II: $S=40$}]{
    \begin{minipage}[t]{0.19\linewidth}
    \centering
    \label{partial_device_s2_40}
    \includegraphics[width=1.1in]{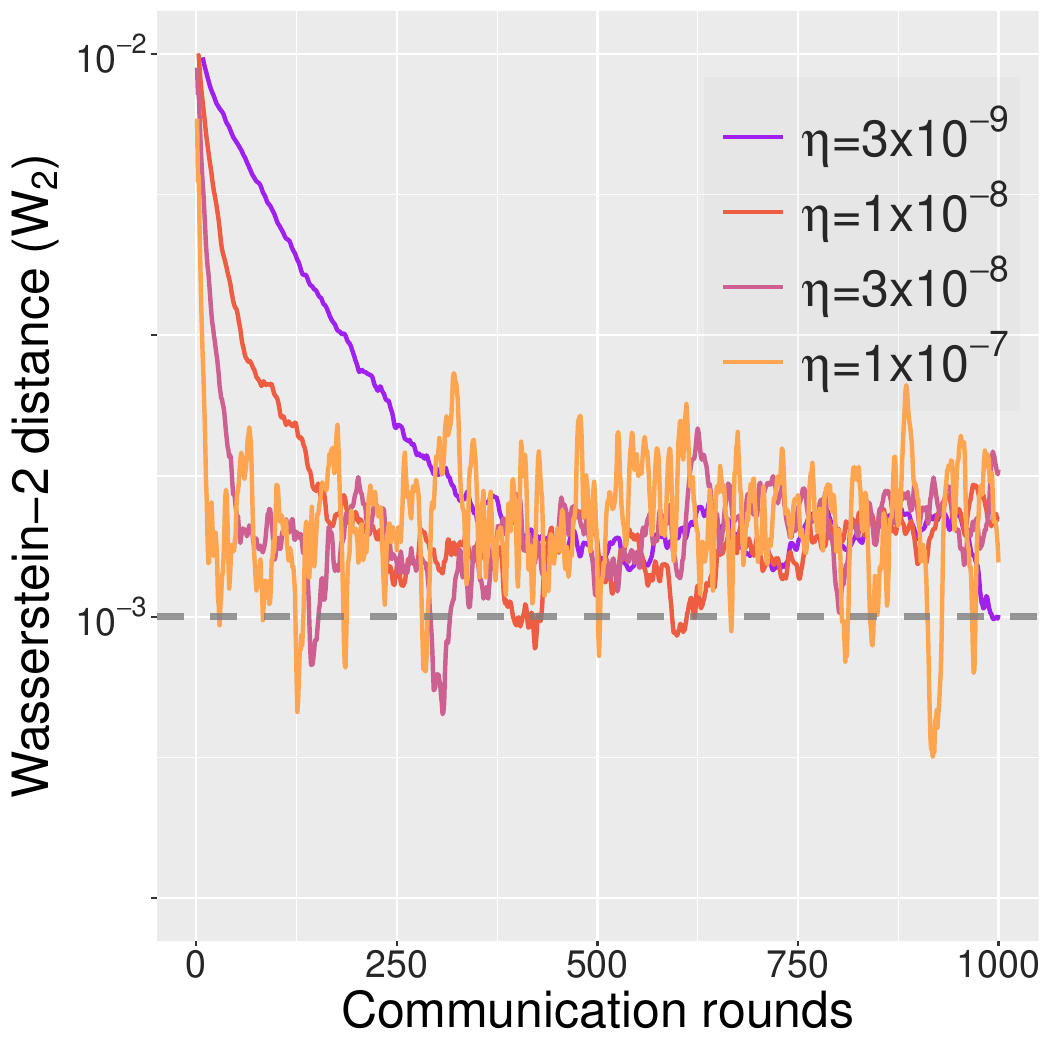}
    \end{minipage}%
    }%
    \subfigure[\scriptsize{Scheme I: $S=30$}]{
    \begin{minipage}[t]{0.19\linewidth}
    \centering
    \label{partial_device_s1_30}
    \includegraphics[width=1.1in]{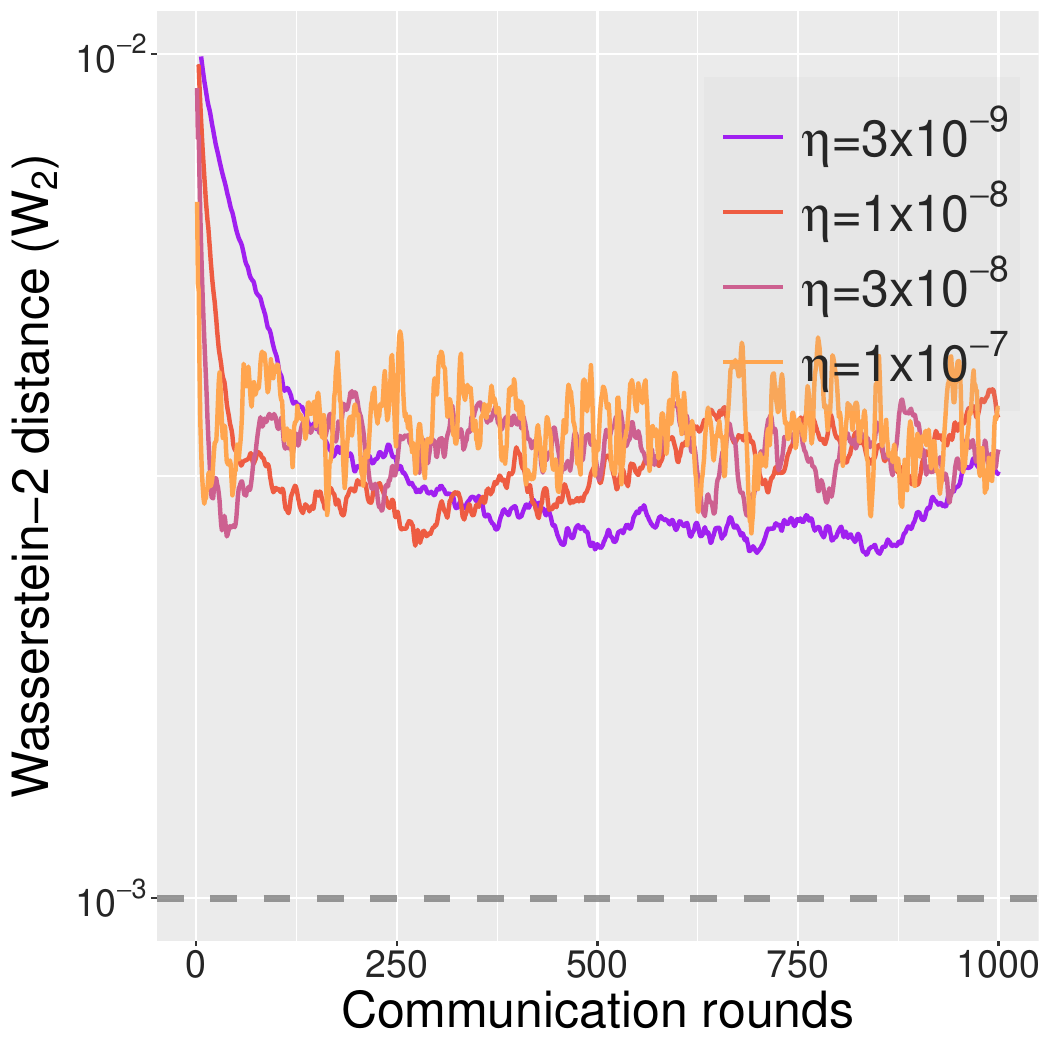}
    \end{minipage}%
    }%
    \subfigure[\scriptsize{Scheme II: $S=30$}]{
    \begin{minipage}[t]{0.19\linewidth}
    \centering
    \label{partial_device_s2_30}
    \includegraphics[width=1.1in]{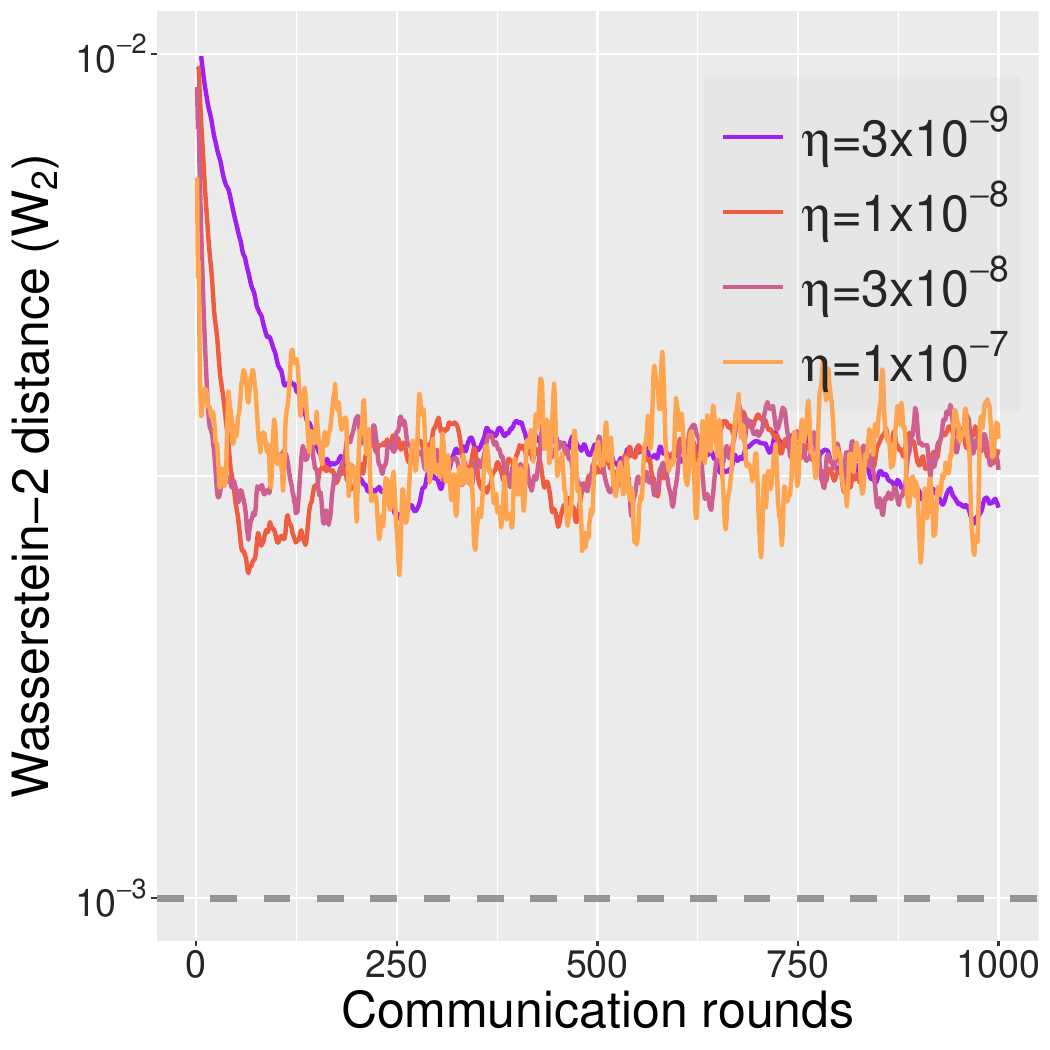}
    \end{minipage}%
    }%
    \vskip -0.0in
  \caption {Convergence of FA-LD based on different device-sampling schemes. }
  \label{figure:partial_device}
    \vskip 0.05in
\end{figure*}

\emph{Partial device participation}: We study the convergence of two popular device-sampling schemes I and II. We fix the number of local steps $K=100$ and the total devices $N=50$. We try to sample $S$ devices based on different fixed learning rates $\eta$. The full device updates are also presented for a fair evaluation. As shown in Figure \ref{full_device_baseline}, larger learning rates converge faster but lead to larger biases; small learning rates, by contrast, yield diminishing biases consistently, where is in accordance with Theorem \ref{main_paper_theorem}. However, in partial device scenarios, the bias becomes much less dependent on the learning rate in the long run. We observe in Figure \ref{partial_device_s1_40}, Figure \ref{partial_device_s2_40}, Figure \ref{partial_device_s1_30}, and Figure \ref{partial_device_s2_30} that the bias caused by partial devices becomes dominant as we decrease the number of partial devices $S$ for both schemes. Unfortunately, such a phenomenon still exists even when the algorithms converge, which suggests that the proposed partial device updates may be only appropriate for the early period of the training or simulation tasks with low accuracy demand.

\subsection{(Fashion) MNIST} 


To evaluate the performance of FA-LD under different local steps $K$ on real-world datasets, we conducted experiments using the MNIST and Fashion-MNIST datasets. We applied FA-LD to train a logistic regression model with the cross entropy loss. To ensure fairness in our evaluation, we randomly split the training dataset into $N=10$ subsets of equal size, creating 10 clients. In each experimental setting, we collected one parameter sample after every 10 communication rounds. We then averaged the predicted probabilities made by all previously collected parameter samples. This allowed us to calculate three test statistics: accuracy, Brier Score (BS) \citep{brier1950verification}, and Expected Calibration Error (ECE) \citep{guo2017calibration} on the test dataset. We tune the step sizes $\eta$ for the best performance and plot the curves of those test statistics against communication rounds under different local steps $K=1, 10, 20, 50, 100$ in Figure \ref{figure:MNIST}. 

We set the temperature parameter $\tau$ to a value of 0.05. During the training process, we calculated the stochastic gradient of the energy function at each step using a batch size of 200 for each client. To facilitate a clear observation of the convergence behavior under different local step values $K$, we introduced a warmup period of 500 communication rounds for each experiment. 


Based on the findings depicted in Figure \ref{figure:MNIST}, it is evident that under the \emph{same communication budget}, FA-LD with $K=1$ (equivalent to the standard SGLD algorithm) performs the poorest in terms of all three test statistics. This outcome highlights the importance of incorporating multiple local updates in federated learning settings. Furthermore, it is worth noting that the optimal local step value $K$ may differ across different test statistics. For instance, in the case of the MNIST dataset, the optimal $K$ range for achieving the highest accuracy lies between 50 and 100 (refer to Figure \ref{fig:M-accu}). On the other hand, the optimal $K$ value for minimizing the Brier Score (BS) is approximately 20 (see Figure \ref{fig:M-brier}). These results indicate that the choice of $K$ should be carefully considered and tailored to the specific evaluation metric of interest.

\begin{figure*}[htbp]
\vskip -0.0in
    \centering
    \subfigure[Accuracy (M)]{
    \begin{minipage}[t]{0.16\linewidth}
    \centering
    \label{fig:M-accu}
    \includegraphics[width=\linewidth]{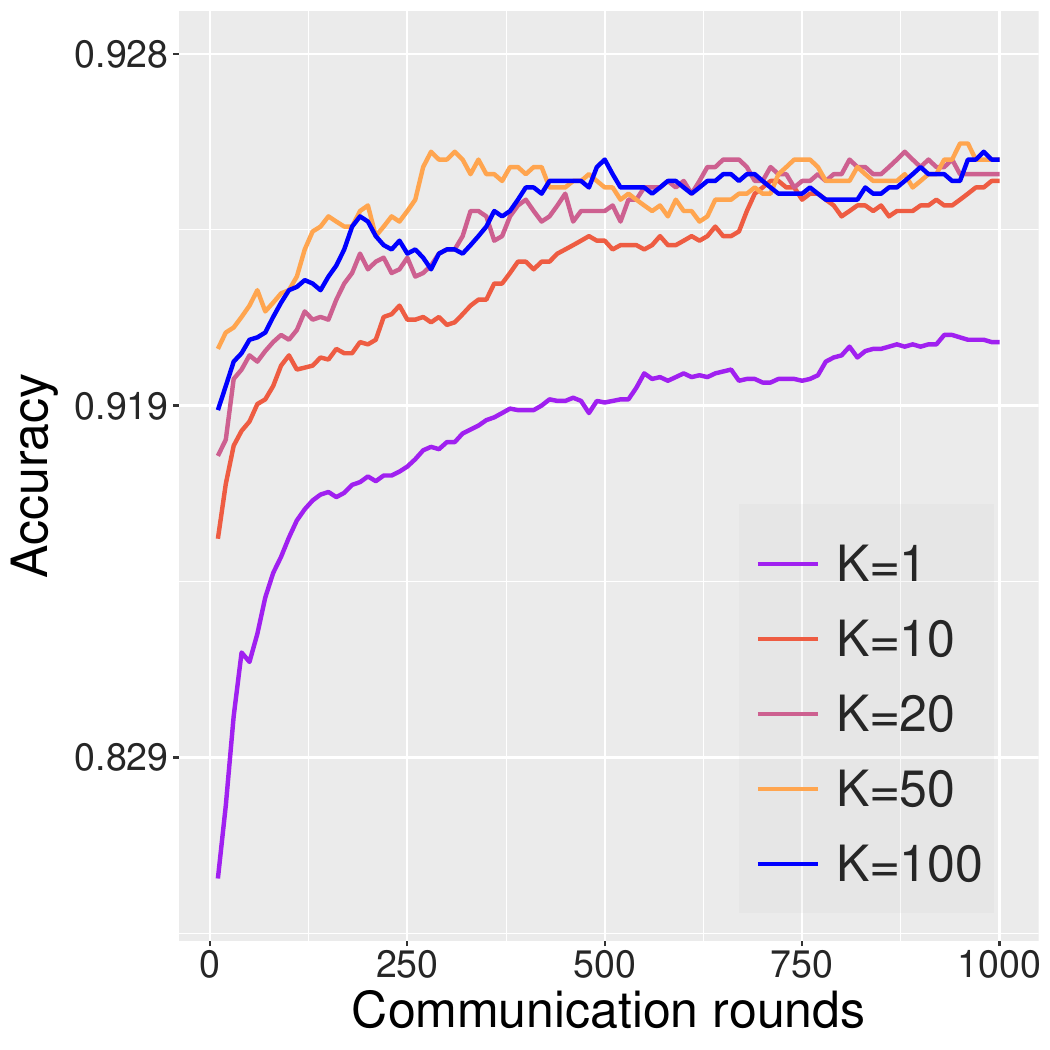}
    \end{minipage}%
    }%
    \subfigure[BS (M)]{
    \begin{minipage}[t]{0.16\linewidth}
    \centering
    \label{fig:M-brier}
    \includegraphics[width=\linewidth]{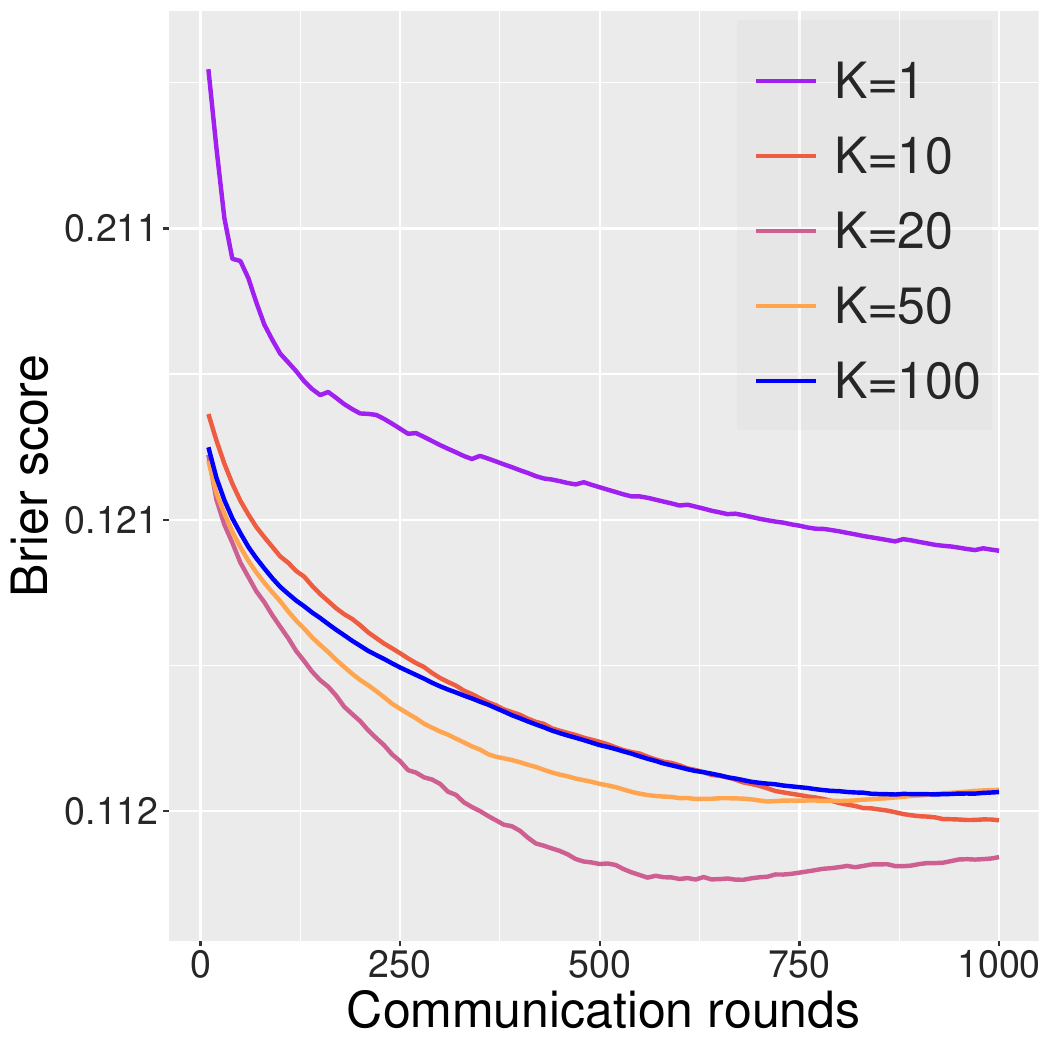}
    \end{minipage}%
    }%
    \subfigure[ECE (M)]{
    \begin{minipage}[t]{0.16\linewidth}
    \centering
    \label{fig:M-ECE}
    \includegraphics[width=\linewidth]{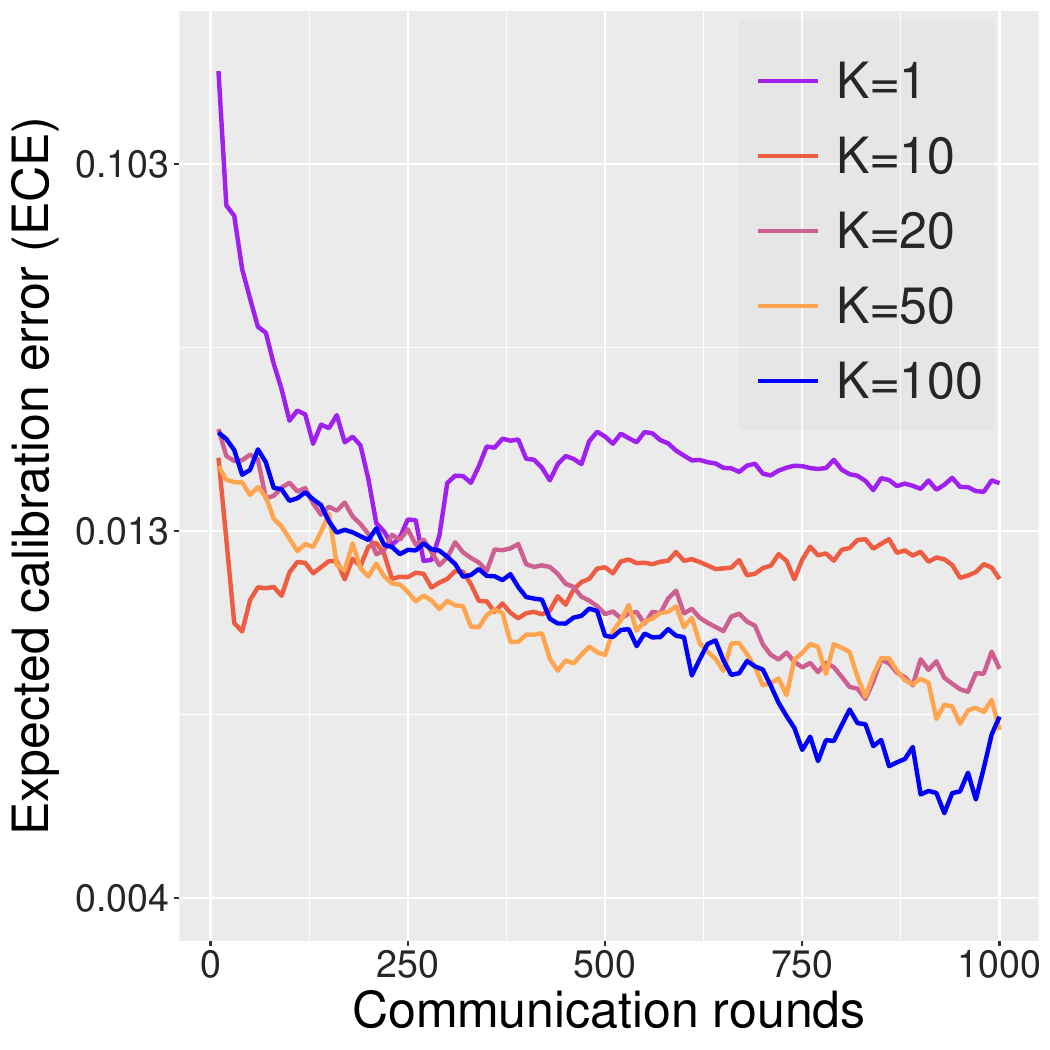}
    \end{minipage}%
    }%
    \subfigure[Accuracy (F)]{
    \begin{minipage}[t]{0.16\linewidth}
    \centering
    \label{fig:FM-accu}
    \includegraphics[width=\linewidth]{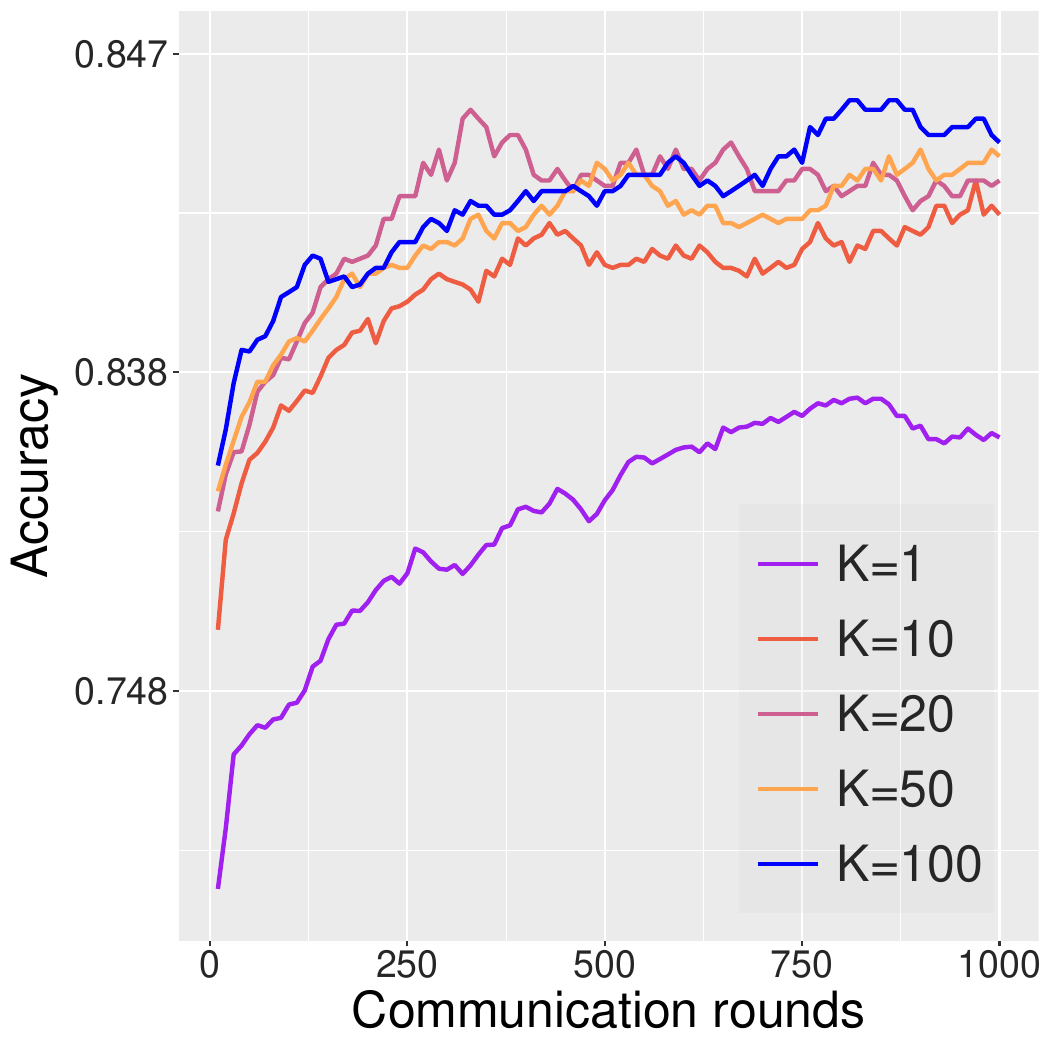}
    \end{minipage}%
    }%
    \subfigure[BS (F)]{
    \begin{minipage}[t]{0.16\linewidth}
    \centering
    \label{fig:FM-brier}
    \includegraphics[width=\linewidth]{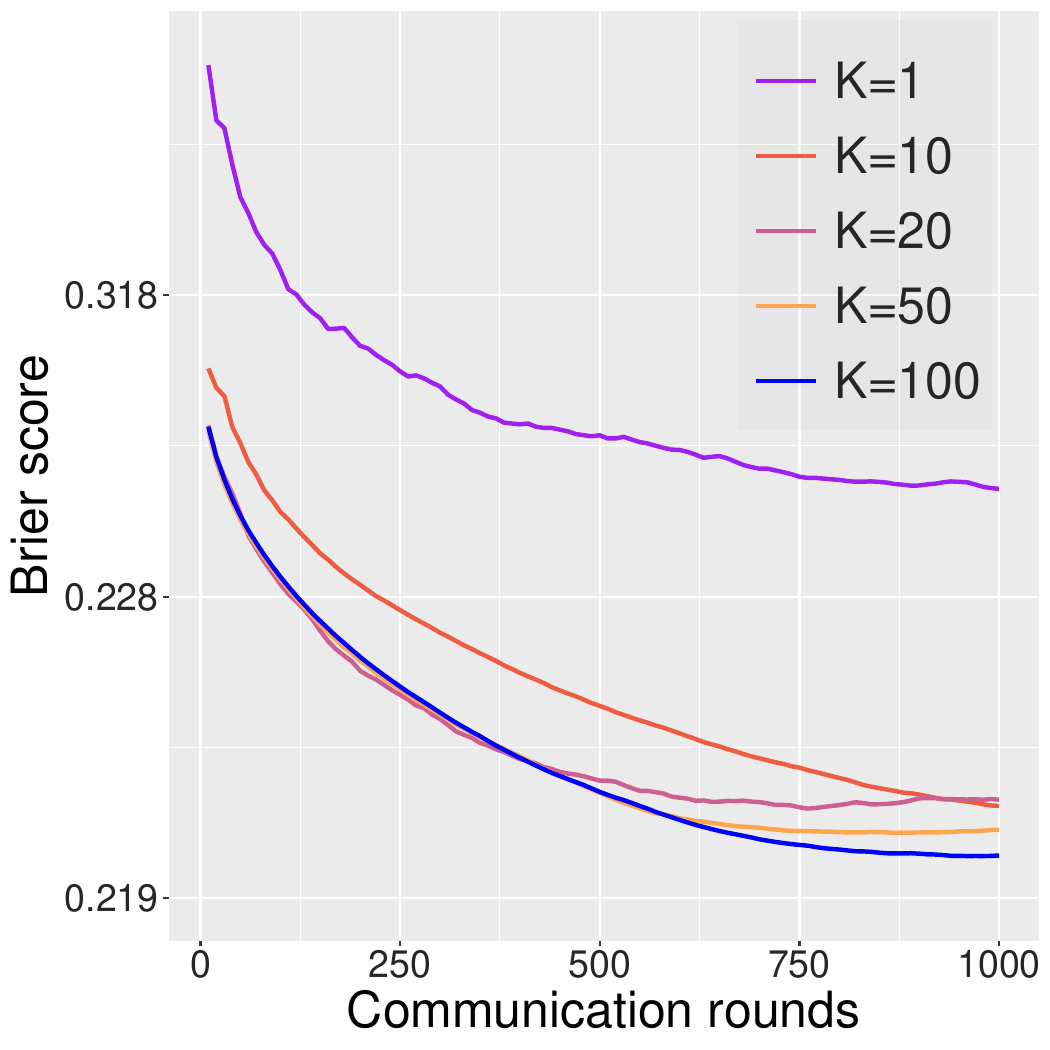}
    \end{minipage}%
    }%
    \subfigure[ECE (F)]{
    \begin{minipage}[t]{0.16\linewidth}
    \centering
    \label{fig:FM-ECE}
    \includegraphics[width=\linewidth]{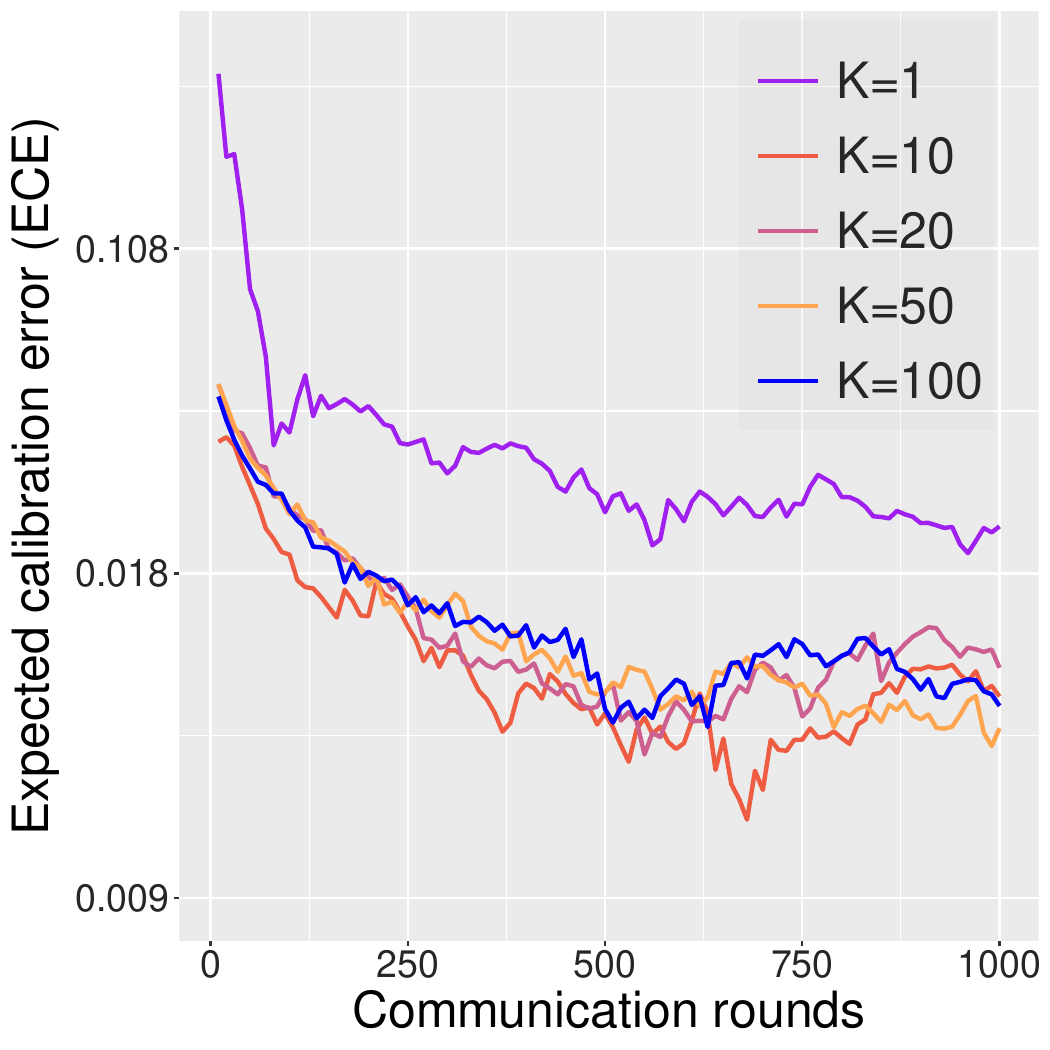}
    \end{minipage}%
    }%
  \vskip -0.0in
  \caption{Convergence of FA-LD on the MNIST (M) and Fashion-MNIST (F) dataset.}
  \label{figure:MNIST}
  \vskip -0.0in
\end{figure*}

Based on the observations from Figure \ref{figure:MNIST_half}, we can conclude that for the MNIST dataset, FA-LD with $K=1$ performs the worst across all three test statistics. Similarly, for the Fashion-MNIST dataset, FA-LD with $K=1$ exhibits the poorest performance in terms of accuracy and BS, and it does not yield the best results in terms of ECE. These findings underscore the benefits of incorporating multiple local updates in federated learning, particularly when operating under a fixed communication budget. Among the local step values $K=1, 10, 20, 50, 100$, our analysis, as depicted in Figure \ref{fig:M-half-accu}, \ref{fig:M-half-brier}, and \ref{fig:M-half-ECE}, reveals that for the MNIST dataset, the optimal choice for the local step $K$ is 20, considering accuracy, BS, and ECE. In contrast, for the Fashion-MNIST dataset, the optimal $K$ value is 20 for accuracy, 10 for BS, and 50 for ECE, as evident from Figure \ref{fig:FM-half-accu}, \ref{fig:FM-half-brier}, and \ref{fig:FM-half-ECE}, respectively. These results emphasize the importance of selecting an appropriate $K$ value that aligns with the specific evaluation metric of interest in order to achieve optimal performance.

It is important to note that the optimal choice of the local step value $K$ can differ when considering the presence or absence of a warmup period, as illustrated in Figure \ref{figure:MNIST}. This observation holds significant implications for selecting the appropriate $K$ value in federated learning scenarios. For instance, let's consider the Fashion-MNIST dataset. Without a warmup period, the optimal $K$ value for minimizing the Brier Score (BS) is 100, as indicated in Figure \ref{fig:FM-brier}. However, when a warmup period consisting of the first 500 communication rounds is introduced, the optimal $K$ value for BS shifts to 10, as depicted in Figure \ref{fig:FM-half-brier}. This discrepancy suggests that the communication budget also influences the determination of the optimal local step $K$. 
This observation underscores the importance of considering the impact of the communication budget and warmup periods when determining the optimal local step value. It is crucial to assess the interplay between these factors to make informed decisions and achieve the best possible performance.

\begin{figure*}[htbp]
    \vskip -0.0in
    \centering
    \subfigure[Accuracy (M)]{
    \begin{minipage}[t]{0.16\linewidth}
    \centering
    \label{fig:M-half-accu}
    \includegraphics[width=\linewidth]{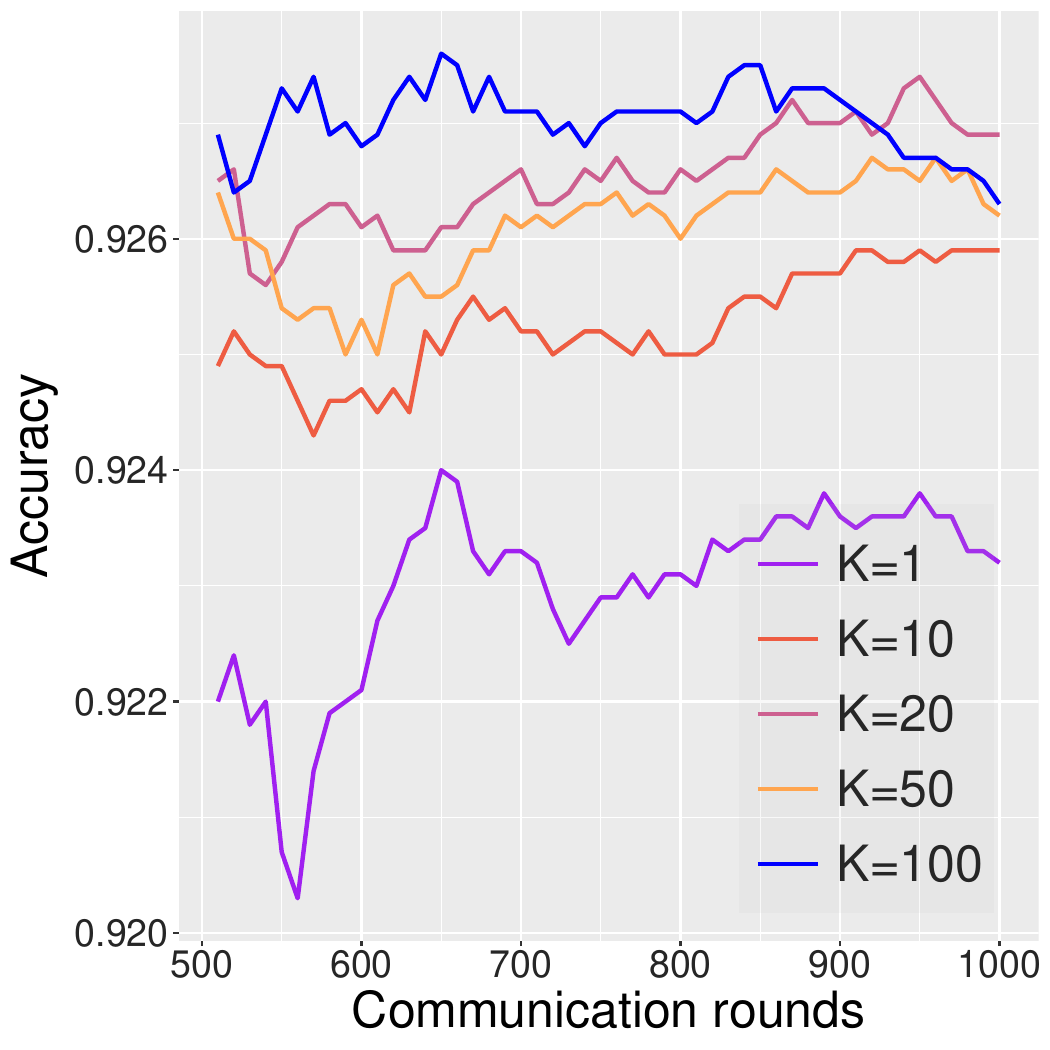}
    \end{minipage}%
    }%
    \subfigure[BS (M)]{
    \begin{minipage}[t]{0.16\linewidth}
    \centering
    \label{fig:M-half-brier}
    \includegraphics[width=\linewidth]{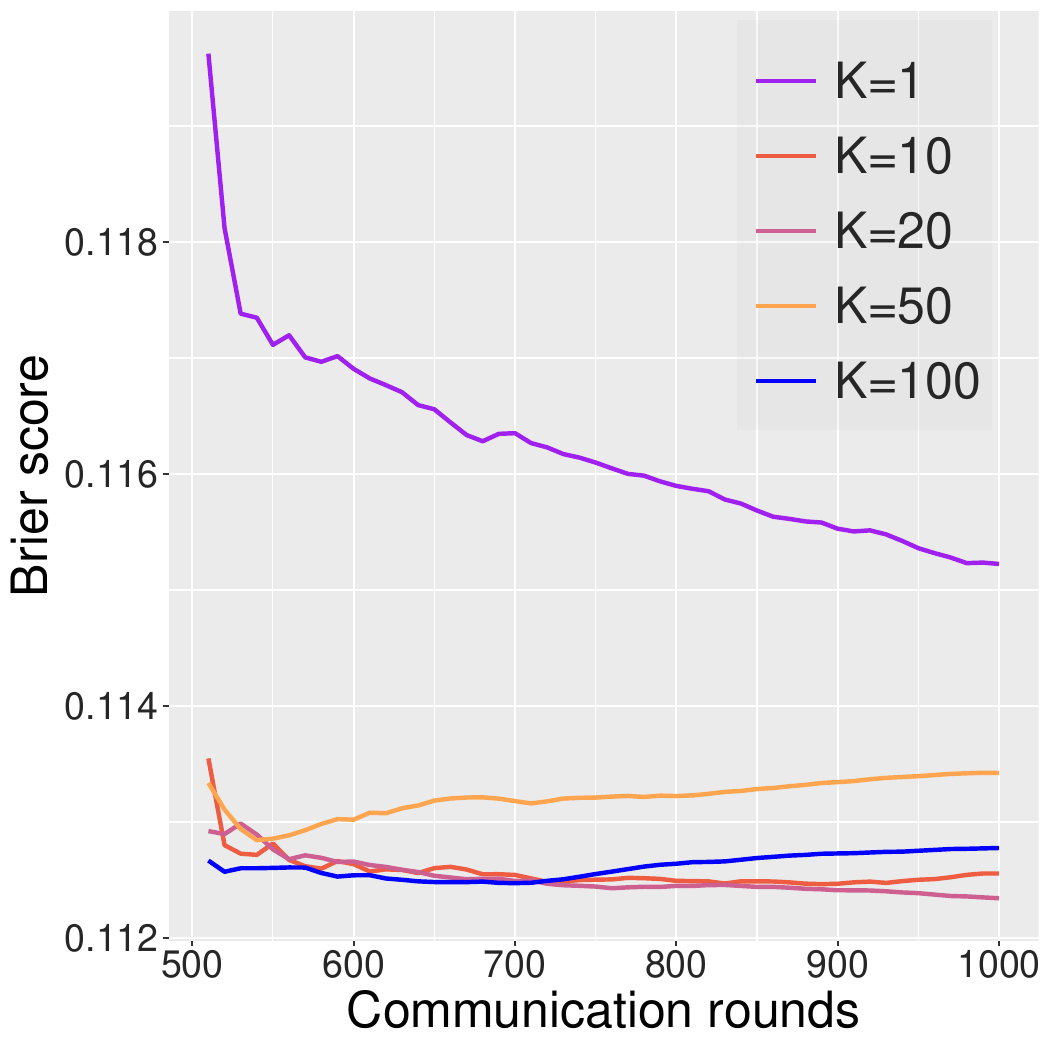}
    \end{minipage}%
    }%
    \subfigure[ECE (M)]{
    \begin{minipage}[t]{0.16\linewidth}
    \centering
    \label{fig:M-half-ECE}
    \includegraphics[width=\linewidth]{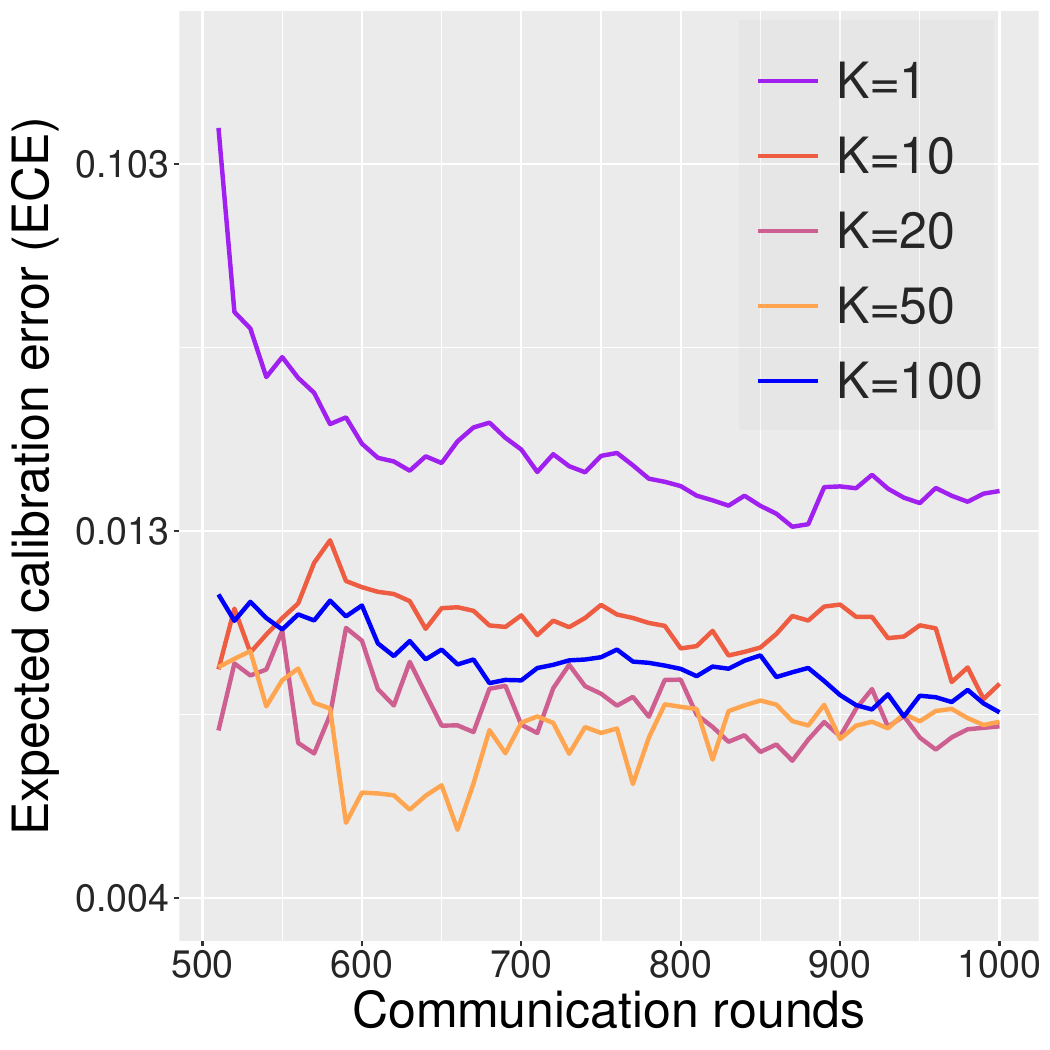}
    \end{minipage}%
    }%
    \subfigure[Accuracy (F)]{
    \begin{minipage}[t]{0.16\linewidth}
    \centering
    \label{fig:FM-half-accu}
    \includegraphics[width=\linewidth]{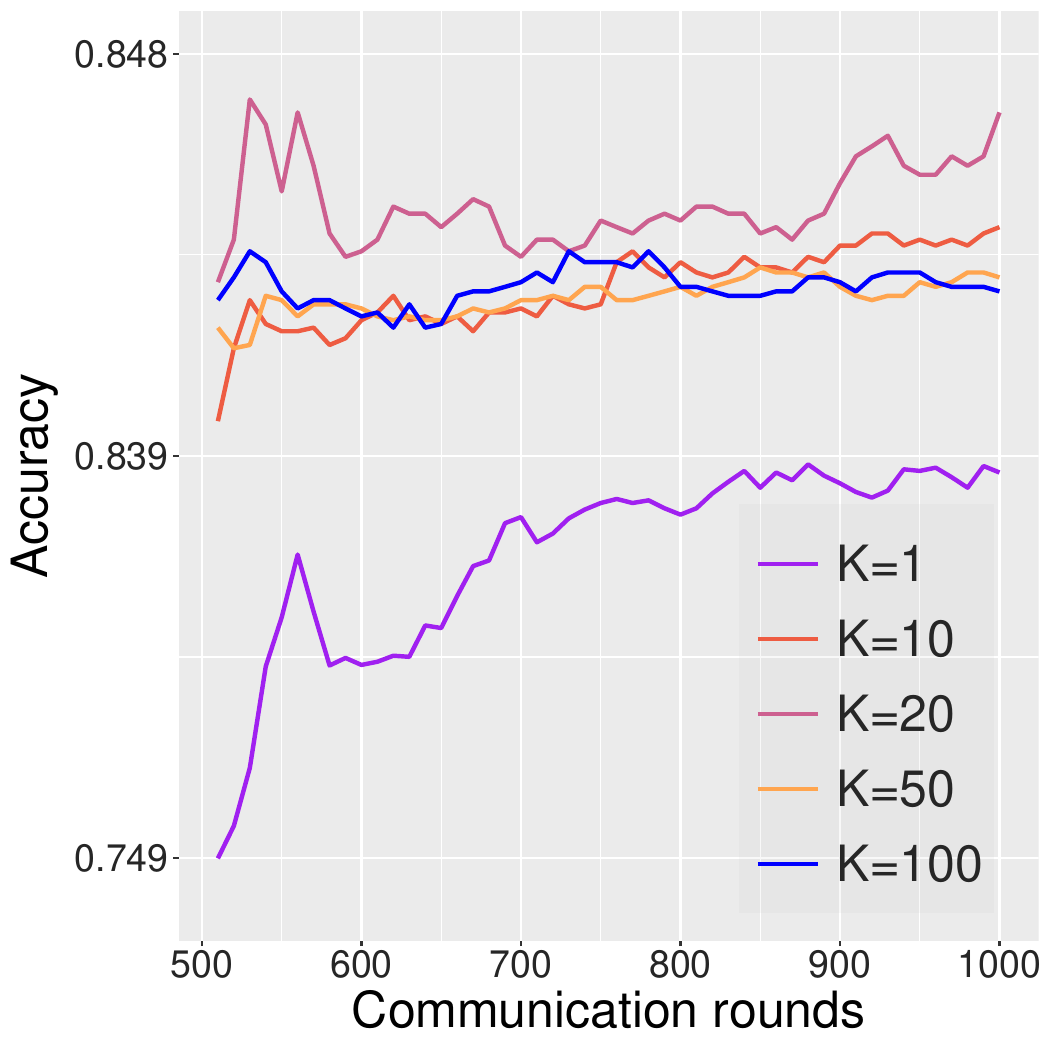}
    \end{minipage}%
    }%
    \subfigure[BS (F)]{
    \begin{minipage}[t]{0.16\linewidth}
    \centering
    \label{fig:FM-half-brier}
    \includegraphics[width=\linewidth]{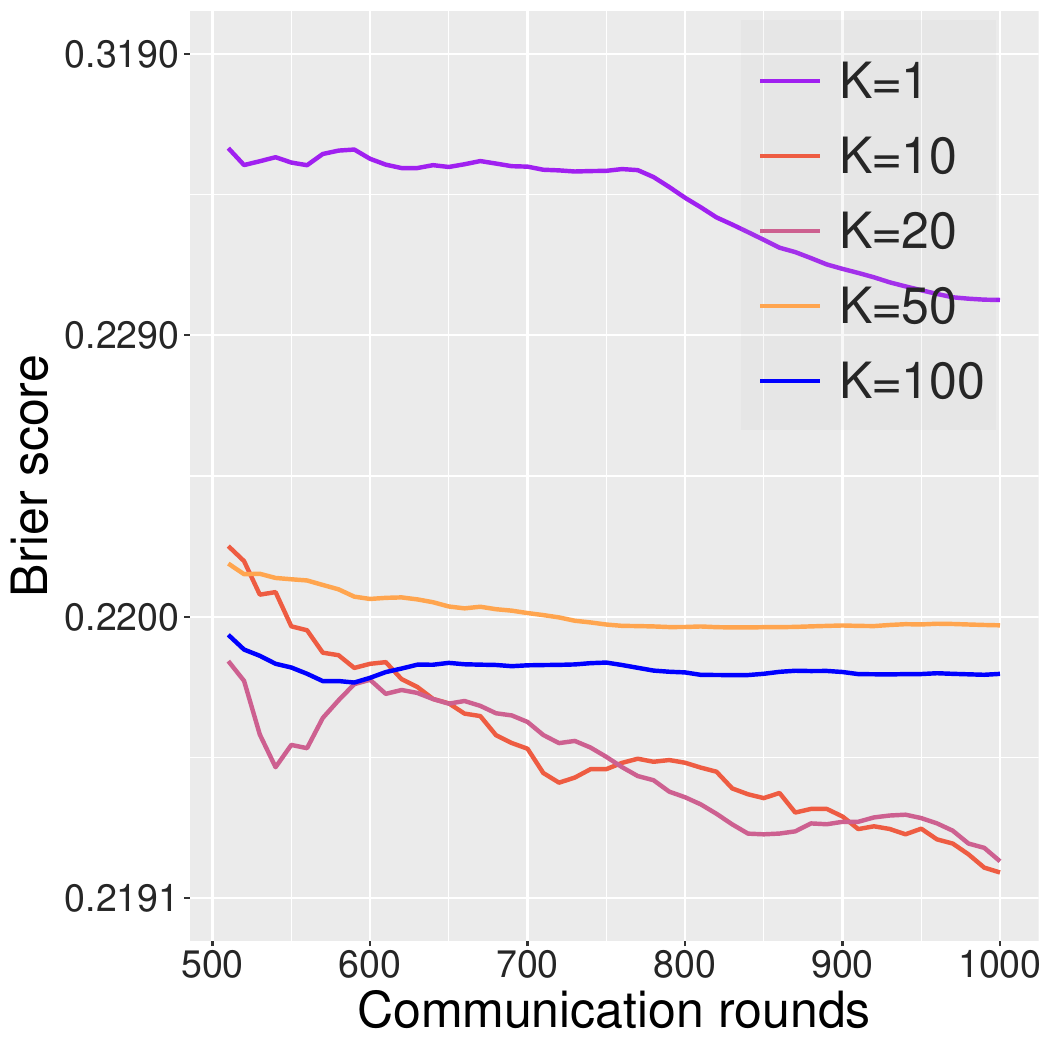}
    \end{minipage}%
    }%
    \subfigure[ECE (F)]{
    \begin{minipage}[t]{0.16\linewidth}
    \centering
    \label{fig:FM-half-ECE}
    \includegraphics[width=\linewidth]{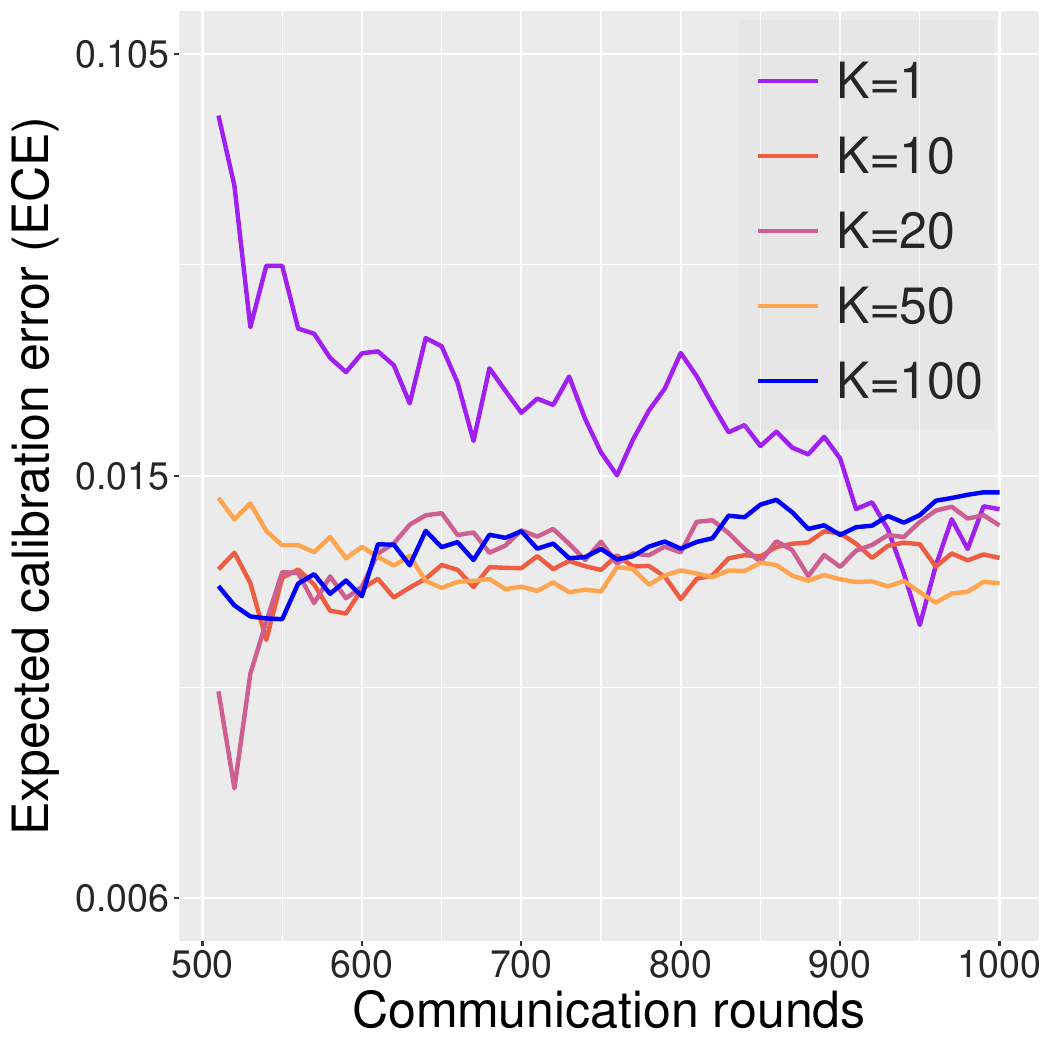}
    \end{minipage}%
    }%
    \vskip -0.0in
  \caption{Convergence of FA-LD on the MNIST (M) and Fashion-MNIST (M) dataset with a warmup period of the first 500 communication rounds.}
  \label{figure:MNIST_half}
    \vskip -0.0in
\end{figure*}

\section{Conclusion}\label{sec:concl}


We propose a first convergence analysis for federated averaging Langevin dynamics (FA-LD) with distributed clients. 
The theoretical guarantees yield a concrete guidance on the selection of the optimal number of local updates to minimize communication costs. In addition, the convergence highly depends on the data heterogeneity and the injected noises, where the latter also inspires us to consider correlated injected noise and partial device updates to balance between differential privacy and prediction accuracy with theoretical guarantees.

\bibliography{ref}
\bibliographystyle{tmlr}

\appendix

\newpage

\begin{Large}
\begin{center}
    \textbf{Supplimentary Material for \textit{``On Convergence of Federated Averaging Langevin Dynamics''}}
\end{center}
\end{Large}

\paragraph{Roadmap.}
In Section~\ref{sec:preli}, we layout the formulation of the algorithm, basic notations, and definitions. In Section~\ref{sec:full_device_participation}, we present the main convergence analysis for full device participation. We discuss the optimal number of local updates based on a fixed learning rate, the acceleration achieved by varying learning rates, and the privacy-accuracy trade-off through correlated noises. In Section~\ref{sec:partial_device_participation}, we analyze the convergence of partial device participation through two device-sampling schemes. In Section~\ref{sec:bouding_contraction_discretization_divergence}, we provide lemmas to upper bound the contraction, discretization and divergence for proving the main convergence results. In Section~\ref{sec:uniform_upper_bound}, we include supporting lemmas to prove results in the previous section. In Section~\ref{sec:initial_condition}, we establish the initial condition. In Section~\ref{dp_guarantee}, we prove differential privacy guarantees.

\section{Preliminaries}\label{sec:preli}

\subsection{Basic notations and backgrounds} 

Let $N$ denote the number of clients. Let $T_{\epsilon}$ denote the number of global steps to achieve the precision $\epsilon$. Let $K$ denote the number of local steps. For each $c \in [N]:=\{1,2,\cdots, N\}$, we use $f^c$ and $\nabla f^c$ denote the loss function and gradient of the function $f^c$ in client $c$. Notably, $\nabla f$ is not a standard gradient operator acting on $ f$ when multiple local steps are adopted ($K>1$). For the stochastic gradient oracle, we denote by $\nabla \tilde f^c(\cdot)$ the \emph{unbiased} estimate of the exact gradient $\nabla f^c$ of client $c$. In addition, we denote $p_c$ as the weight of the $c$-th client such that $p_c\geq 0$ and $\sum_{c=1}^N p_c=1$. $\xi_k^c$ is an independent standard $d$-dimensional Gaussian vector at iteration $k$ for each client $c\in[N]$ and $\dot\xi_k$ is a unique Gaussian vector shared by all the clients.

\begin{algorithm*}[h]\caption{Federated averaging Langevin dynamics algorithm (FA-LD). Denote by $\theta_k^c$ the model parameter in the $c$-th client at the $k$-th step. Denote the immediate result of one step SGLD update from $\theta_k^c$ by $\beta_k^c$. $\xi_k^c$ is an independent standard $d$-dimensional Gaussian vector at iteration $k$ for each client $c\in[N]$. A global synchronization is conducted every $K$ steps. This is a transformed version of Algorithm~\ref{alg:alg_main_text_partial_main} with $\rho=0$ and full device participation for ease of analysis.}\label{alg:alg_main_text_independent_noise}
\begin{equation}\label{local_client}
    \beta_{k+1}^c=\theta_k^c-\eta\nabla \tilde f^c(\theta_k^c)+\sqrt{2\eta\tau/p_c}\xi_k^c,
\end{equation}
\begin{equation}  
\label{synchronization}
\theta_{k+1}^c=\left\{  
             \begin{array}{lr}  
             \beta_{k+1}^c \qquad\qquad\qquad \text{if } k+1 \text{ mod } K\neq 0 \\  
              & \\
             \sum_{c=1}^N p_c \beta_{k+1}^c \ \qquad \text{if } k+1 \text{ mod } K=0.
             \end{array}  
\right.  
\end{equation} 
\end{algorithm*}

Inspired by \citet{lhy+20}, we define two virtual sequences 
\begin{equation}
\label{virtual_seq}
\beta_k=\sum_{c=1}^N p_c \beta_k^c, \qquad \theta_k=\sum_{c=1}^N p_c \theta_k^c,
\end{equation}
which are \emph{both inaccessible when $k \text{ mod } K\neq 0$}. For the gradients and injected noise, we also define 
\begin{equation}
\label{sum_grad}
\nabla f(\theta_k)=\sum_{c=1}^N p_c \nabla f^c(\theta_k^c), \quad {\bm{\widetilde Z}_k}=\sum_{c=1}^N p_c \nabla \tilde f^c(\theta_k^c),\quad \btheta_k=(\theta_k^1,\cdots, \theta_k^N), \quad \xi_k=\sum_{c=1}^N \sqrt{p_c} \xi_k^c.
\end{equation}

In what follows, it is clear that $\E{{\bm{\widetilde Z}_k}}=\sum_{c=1}^N p_c \E{\nabla \tilde f^c(\theta^c)}={\sum_{c=1}^N p_c {\nabla  f^c(\theta^c)}}:={\bm{Z}_k}$ for any $\theta^c\in\R^d$ and any $c\in[N]$. Summing Eq.(\eqref{local_client}) from clients $c=1$ to $N$ and combining Eq.(\eqref{virtual_seq}) and Eq.(\eqref{sum_grad}), we have
\begin{align}
\label{fed_avg_langevin_dynamics_preliminary}
    \beta_{k+1}&=\theta_k-\eta { \bm{\widetilde Z}_k}+\sqrt{2\eta\tau}\xi_k.
\end{align}
Moreover, we always have $\beta_k=\theta_k$ whether $k+1 \text{ mod } K=0$ or not by Eq.(\eqref{synchronization}) and Eq.(\eqref{virtual_seq}). In what follows, we can write
\begin{equation}
\label{fed_avg_langevin_dynamics}
\theta_{k+1}=\theta_k-\eta { \bm{\widetilde Z}_k}+\sqrt{2\eta\tau}\xi_k,
\end{equation}
which resembles the SGLD algorithm \citep{Welling11} except that the construction of stochastic gradients is different and $\theta_k$ is \emph{not accessible when $k\text{ mod } K\neq 0$}. {Our derivation of Eq.(\eqref{fed_avg_langevin_dynamics}) is motivated by \citep{lhy+20}, where a similar federated algorithm based on SGD in developed in section A.1.}

\subsection{Assumptions and definitions}

\begin{assumption}[Smoothness, restatement of Assumption \ref{def:smooth_main}]\label{def:smooth} For each $c\in [N]$, we say $f^c$ is $L$-smooth if for some $L>0$
\begin{align*}
f^c(y)\leq f^c(x)+\langle \nabla f^c(x),y-x \rangle+\frac{L}{2}\| y-x \|^2_2\quad \forall x, y\in \R^d.
\end{align*}
\end{assumption}

Note that the above assumption is equivalent to saying that
\begin{align*}
\| \nabla f^c(y)-\nabla f^c(x) \|_2 \leq L \| y-x \|_2,\quad \forall x, y\in \R^d.
\end{align*}

\begin{assumption}[Strong convexity, restatement of Assumption \ref{def:strong_convex_main}]\label{def:strong_convex}
For each $c\in [N]$, $f^c$ is $m$-strongly convex if for some $m>0$
\begin{align*}
f^c(x)\geq f^c(y)+\langle \nabla f^c(y),x-y \rangle + \frac{m}{2} \| y-x \|_2^2\quad \forall x, y\in \R^d.
\end{align*}
\end{assumption}

An alternative formulation for strong convexity is that
\begin{align*}
\langle \nabla f^c(x)-\nabla  f^c(y), x-y\rangle \geq m \lrn{x-y}_2^2\quad \forall x, y\in \R^d.
\end{align*}

\begin{assumption}[Bounded variance, restatement of Assumption \ref{def:variance_main}]\label{def:variance}
For each $c\in [N]$, the variance of noise in the stochastic gradient $\nabla \tilde f^c(x)$ in each client is upper bounded such that 
\begin{align*}
\mathbb{E}[ \| \nabla \tilde f^c(x) - \nabla f^c(x) \|_2^2] \leq \sigma^2 d,\quad \forall x\in \R^d.
\end{align*}
\end{assumption}

The bounded variance in the stochastic gradient is a rather standard assumption and has been widely used in \citet{ccbj18, dk19, lhy+20}. Extension of bounded variance to unbounded cases such as $\mathbb{E}[ \| \nabla \tilde f^c(x) - \nabla f^c(x) \|_2^2]\leq \delta (L^2 x^2 + B^2)$ for some $M$ and $\delta\in[0,1)$ is quite straightforward and has been adopted in assumption A.4 stated in \citet{Maxim17}. The proof framework remains the same.

\paragraph{Quality of non-i.i.d data} Denote by $\theta_*$ the global minimum of $f$. Next, we quantify the degree of the non-i.i.d data by $\gamma:=\max_{c\in[N]}\lrn{\nabla f^c(\theta_*)}_2$, which is a non-negative constant and yields a smaller scale if the data is more evenly distributed.

\begin{definition}\label{def:H_kappa_gamma}
We define parameter $T_{c, \rho}$ $H_{\rho}$, $\kappa$ and $\gamma^2$
\begin{align*}
    T_{c,\rho}: = & ~ \tau(\rho^2+(1-\rho^2)/p_c),\\
    H_{\rho}: = & ~  \underbrace{ \mathcal{D}^2}_{\small{\mathrm{initialization}}}+\underbrace{ \frac{1}{m}\max_{c\in[N]} T_{c,\rho}}_{\small{\mathrm{injected~noise}}} +\underbrace{\frac{\gamma^2}{m^2d}}_{\small{\mathrm{data~heterogeneity}}}+\underbrace{\frac{\sigma^2}{m^2}}_{\small{\mathrm{stochastic~noise}}},\\
    \kappa := & ~ L / m , \\
    \gamma^2 : = & ~ \max_{c \in [N]} \| \nabla f^c (\theta_*) \|_2^2 .
\end{align*}
\end{definition}

\section{Full device participation}\label{sec:full_device_participation}

\subsection{One-step update}
\paragraph{Wasserstein distance}

We define the 2-Wasserstein distance between a pair of Borel probability measures $\mu$ and $\nu$ on $\R^d$ as follows  
\begin{align*}
    W_2(\mu, \nu):=\inf_{\textcolor{black}{\Gamma}\in \text{Couplings}(\mu, \nu)}\left(\int\|\bbeta_{\mu}-\bbeta_{\nu}\|_2^2 d \textcolor{black}{\Gamma}(\bbeta_{\mu}, \bbeta_{\nu})\right)^{\frac{1}{2}},
\end{align*}
where $\|\cdot\|_2$ denotes the $\ell_2$ norm on $\mathbb{R}^d$ and the pair of random variables $(\bbeta_{\mu}, \bbeta_{\nu})\in \R^d\times\R^d$ is a coupling with the marginals following $\mathcal{L}(\bbeta_{\mu})=\mu$ and $\mathcal{L}(\bbeta_{\nu})=\nu$. $\mathcal{L}(\cdot)$ denotes a distribution of a random variable.

The following result 
provides a crucial contraction property based on distributed clients with infrequent synchronizations. 
\begin{lemma}[Dominated contraction property, restatement of Lemma \ref{contraction_main}]
\label{contraction}
Assume assumptions \ref{def:smooth} and \ref{def:strong_convex} hold. For any learning rate $\eta \in (0, \frac{1}{L+m}]$, any $\bar\theta, \{\theta^c\}_{c=1}^N\in\mathbb{R}^d$, 
we have
\begin{align*}
\small
&\quad\lrn{{\bar\theta}-\theta-\eta(\nabla f({\bar\theta})-{\bm{Z}})}_2^2\leq (1-\eta m) \cdot \|{\bar\theta}-\theta \|_2^2+4\eta L\sum_{c=1}^N p_c \cdot  \|\theta^c-\theta \|_2^2,
\end{align*}
\end{lemma}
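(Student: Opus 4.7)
My plan is to expand the squared norm on the left, isolate the inner product $\mathcal{I}=\langle\beta-\theta,\nabla f(\beta)-\bnabla f(\btheta)\rangle$ flagged in the sketch, and bound $\mathcal{I}$ by combining (i) the standard co-coercivity inequality applied to each local $f^c$, (ii) a carefully tuned Young's inequality on the cross-term created by mismatch between $u^c:=\beta^c-\theta^c$ and $u:=\beta-\theta$, and (iii) Jensen's inequality on the remaining $\|\cdot\|_2^2$.

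Writing also $g^c=\nabla f^c(\beta^c)-\nabla f^c(\theta^c)$, I would begin with $\|u-\eta\sum_c p_c g^c\|_2^2=\|u\|_2^2-2\eta\mathcal{I}+\eta^2\|\sum_c p_c g^c\|_2^2$, apply Jensen to the last term to get $\eta^2\sum_c p_c\|g^c\|_2^2$, and decompose $\mathcal{I}=\sum_c p_c\langle u^c,g^c\rangle+\sum_c p_c\langle u-u^c,g^c\rangle$. The co-coercivity inequality for an $L$-smooth, $m$-strongly convex $f^c$ lower-bounds each $\langle u^c,g^c\rangle$ by $\tfrac{mL}{m+L}\|u^c\|_2^2+\tfrac{1}{m+L}\|g^c\|_2^2$, while Young's inequality with parameter $a=\tfrac{2}{m+L}-\eta$ (strictly positive because $\eta\le \tfrac{1}{L+m}$) gives $\langle u-u^c,g^c\rangle\ge -\tfrac{1}{2a}\|u-u^c\|_2^2-\tfrac{a}{2}\|g^c\|_2^2$. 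The choice of $a$ is exactly the one that makes the three sources of $\|g^c\|_2^2$ (cross-term, co-coercivity, Jensen) cancel, since $\eta a-\tfrac{2\eta}{m+L}+\eta^2=0$.

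After cancellation, what remains is
\begin{align*}
\|u-\eta\textstyle\sum_c p_c g^c\|_2^2 \le \|u\|_2^2 - \tfrac{2\eta mL}{m+L}\textstyle\sum_c p_c\|u^c\|_2^2 + \eta(m+L)\textstyle\sum_c p_c\|u-u^c\|_2^2,
\end{align*}
using the bound $1/a\le m+L$. Then I would invoke the Pythagorean-type identity $\sum_c p_c\|u^c\|_2^2=\|u\|_2^2+\sum_c p_c\|u^c-u\|_2^2$, which holds because $\sum_c p_c u^c=u$, to extract a clean contraction $(1-\tfrac{2\eta mL}{m+L})\|u\|_2^2\le(1-\eta m)\|u\|_2^2$ (the inequality following from $L\ge m$). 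The leftover coefficient on $\sum_c p_c\|u-u^c\|_2^2$ simplifies to $\eta\tfrac{m^2+L^2}{m+L}\le 2\eta L$, and the elementary bound $\|u-u^c\|_2^2=\|(\beta^c-\beta)-(\theta^c-\theta)\|_2^2\le 2(\|\beta^c-\beta\|_2^2+\|\theta^c-\theta\|_2^2)$ delivers the claimed prefactor of $4\eta L$.

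The main obstacle is the simultaneous tuning of the Young parameter $a$: it must exactly absorb the $\|g^c\|_2^2$ terms so that no uncontrolled gradient-difference terms survive, while at the same time $1/a$ has to stay $O(L)$ so that the divergence coefficient does not blow up as $\eta\to 0$. The hypothesis $\eta\le 1/(L+m)$ is precisely what makes these two constraints compatible, and it is the reason a first naive Jensen-then-contract argument (which would give a factor $3(1-\eta m)$ in front of $\|\beta-\theta\|_2^2$) is not tight enough.
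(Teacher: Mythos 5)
Your proof is correct and follows essentially the same route as the paper's: both expand the square, split $\mathcal{I}$ into the diagonal part $\langle\beta^c-\theta^c,\nabla f^c(\beta^c)-\nabla f^c(\theta^c)\rangle$ (handled by Nesterov's co-coercivity) plus the local/global mismatch part (handled by Young's inequality), and use Jensen to recombine, with the hypothesis $\eta\le\tfrac{1}{L+m}$ absorbing the gradient-difference terms. The only difference is bookkeeping: the paper fixes the Young parameter so as to leave a nonpositive remainder $\eta(\eta-\tfrac{1}{m+L})\|\nabla f(\beta)-\bnabla f(\btheta)\|_2^2$ that it then discards, whereas you tune $a=\tfrac{2}{m+L}-\eta$ to cancel those terms exactly and keep the exact variance identity for $\sum_c p_c\|u^c\|_2^2$ — both land on the same $(1-\eta m)$ contraction and $4\eta L$ divergence constant.
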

where $\theta=\sum_{c=1}^N p_c \theta^c${, although $\theta$ is not accessible when $k \text{ mod } K\neq 0$ as discussed in Eq.(\ref{virtual_seq})}; ${\bm{Z}}=\sum_{c=1}^N p_c \nabla f^c(\theta^c)$. We postpone the proof into Section \ref{DCP}. 

The following result ensures a bounded gap between $\bar\theta_{s}$ and $\bar\theta_{\eta\lfloor\frac{s}{\eta} \rfloor}$ in $\ell_2$ norm for any $s\geq 0$ and $c\in[N]$. We postpone the proof of Lemma~\ref{lem:discretization} into Section~\ref{dis_eroor}.

\begin{lemma}[Discretization error]\label{lem:discretization}
Assume assumptions  \ref{def:smooth}, \ref{def:strong_convex}, and \ref{def:variance} hold. For any $s\geq 0$, any learning rate $\eta \in (0 , 2/m)$ and $\lrn{\theta_0^c-\theta_*}_2^2\leq d\mathcal{D}^2$ for any $c\in[N]$, the iterates of $(\bar \theta_s)$ based on the continuous dynamics of Eq.(\eqref{continuous_dynamics_main}) satisfy the following estimate
\begin{align*}
    \E{ \big\| \bar\theta_{s} - \bar\theta_{\eta\lfloor\frac{s}{\eta} \rfloor} \big\|_2^2} \leq \textcolor{black}{2\eta^2 d\kappa L\tau}+16\eta d\tau .
\end{align*}
\end{lemma}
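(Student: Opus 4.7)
The plan is to treat $\bar\theta^{\cancel{c}}_s$ as the standard Langevin diffusion $\bar\theta_s$ of Eq.~(\ref{continuous_dynamics}), which is justified by the discussion in Section~\ref{connection_to_fdLD} (the continuous-time synchronization forces $\bar\theta_s^c \equiv \bar\theta_s$ for every $c$, which also explains the canceled $/p_c$ in the stated bound). Writing $t_0 := \eta\lfloor s/\eta\rfloor$ and integrating the SDE over $[t_0,s]$, we obtain the mild-form identity
\begin{align*}
\bar\theta_s - \bar\theta_{t_0} \;=\; -\int_{t_0}^{s} \nabla f(\bar\theta_u)\,\d u \;+\; \sqrt{2\tau}\,(\overline{W}_s - \overline{W}_{t_0}),
\end{align*}
and the proof reduces to bounding the second moments of the drift integral and of the Brownian increment separately via the elementary inequality $\|a+b\|_2^2 \le 2\|a\|_2^2 + 2\|b\|_2^2$.

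For the Brownian increment, I would use $\mathbb{E}\|\overline{W}_s - \overline{W}_{t_0}\|_2^2 = d(s-t_0) \le d\eta$, which produces the $O(\eta d\tau)$ contribution (with a small constant that is absorbed into the $16\eta d\tau$ in the claim). For the drift integral, first apply Cauchy--Schwarz in time to get $\bigl\|\int_{t_0}^{s}\nabla f(\bar\theta_u)\d u\bigr\|_2^2 \le \eta\int_{t_0}^{s}\|\nabla f(\bar\theta_u)\|_2^2 \d u$, and then use $L$-smoothness together with $\nabla f(\theta_*)=0$ to bound the integrand pointwise by $L^2\|\bar\theta_u-\theta_*\|_2^2$.

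The crucial input is now a second-moment estimate under the stationary law $\pi\propto e^{-f/\tau}$. Since the initialization of the auxiliary process is $\bar\theta_0\sim\pi$, stationarity gives $\bar\theta_u\sim\pi$ for all $u\ge 0$, and the $m$-strong log-concavity of $\pi$ yields $\mathbb{E}_{\pi}\|\bar\theta - \theta_*\|_2^2 \le d\tau/m$ by the standard concentration estimate for strongly log-concave measures (this is where the factor $\kappa = L/m$ enters: combining with the $L^2$ from smoothness gives $L^2 d\tau/m = \kappa L d\tau$). Putting the pieces together, the drift term contributes $O(\eta^2 d \kappa L \tau)$ and the Brownian term contributes $O(\eta d\tau)$, matching the claimed bound.

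\textbf{Main obstacle.} The only nontrivial step is the stationary second-moment bound $\mathbb{E}_\pi\|\bar\theta-\theta_*\|_2^2 \le d\tau/m$; everything else is essentially a textbook It\^o-isometry plus Cauchy--Schwarz calculation. The proof should therefore either invoke the corresponding uniform-in-time bound already established (the Section~\ref{sec:uniform_upper_bound} lemmas suggest such estimates are proved separately), or derive it directly from strong convexity by integrating $\langle \nabla f(\theta), \theta-\theta_*\rangle \ge m\|\theta-\theta_*\|_2^2$ against $\pi$ and using integration by parts $\mathbb{E}_\pi[\langle\nabla f,\theta-\theta_*\rangle] = \tau d$. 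I would take the integration-by-parts route, since it is self-contained and makes the dimension dependence transparent.
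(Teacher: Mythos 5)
Your proposal is correct and follows essentially the same route as the paper: the same mild-form decomposition over $[\eta\lfloor s/\eta\rfloor, s]$, Young plus Cauchy--Schwarz on the drift, $L$-smoothness with $\nabla f(\theta_*)=0$, and the stationary second-moment bound $\E_{\pi}\|\bar\theta-\theta_*\|_2^2\le d\tau/m$ to produce the $\kappa L d\tau$ factor. The only cosmetic differences are that the paper controls the supremum over $s\in[k\eta,(k+1)\eta)$ via the Burkholder--Davis--Gundy inequality (whence the constant $16$) where It\^o isometry at fixed $s$ suffices for the statement as written, and that the paper cites an external result for the stationary moment bound where you give the equivalent self-contained integration-by-parts derivation.
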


The following result  
shows that given a finite number of local steps $K$, the divergence between $\theta^c$ in local client and $\theta$ in the center is bounded in $\ell_2$ norm. Notably, since the non-differentiable Brownian motion leads to a lower order term $O(\eta)$ instead of $O(\eta^2)$ in $\ell^2$ norm, a na\"{i}ve proof may lead to a crude upper bound.  We delay the proof of Lemma~\ref{divergence} into Section~\ref{bounded_divergence}.
\begin{lemma}[Bounded divergence, restatement of Lemma \ref{divergence_main}]\label{divergence}
Assume assumptions  \ref{def:smooth}, \ref{def:strong_convex}, and \ref{def:variance} hold. For any learning rate $\eta \in (0 , 2/m)$ and $\lrn{\theta_0^c-\theta_*}_2^2\leq d\mathcal{D}^2$ for any $c\in[N]$, we have the $\ell_2$ upper bound of the divergence between local clients and the center as follows
\begin{align*}
    \sum_{c=1}^N p_c\E{\|\theta_k^c-\theta_k \|_2^2}&\leq 112(K-1)^2\eta^2 d L^2 H_{\rho} +8(K-1)\eta d \tau(\rho^2 + N(1-\rho^2)),\notag
\end{align*}
where $H_{\rho}, \kappa$ and $\gamma^2$ are defined as Definition~\ref{def:H_kappa_gamma}. 
\end{lemma}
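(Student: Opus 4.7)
The plan is to leverage the synchronization structure: between two consecutive synchronizations, $\theta^c_t$ evolves via local Langevin steps starting from a common point $\theta^c_{k_0}=\theta_{k_0}$, where $k_0:=K\lfloor k/K\rfloor$ and $k-k_0\leq K-1$. This suggests the decomposition $\theta_k^c-\theta_k = (\theta_k^c-\theta_{k_0}^c)-(\theta_k-\theta_{k_0})$ together with the elementary inequality $\|a-b\|_2^2\leq 2\|a\|_2^2 + 2\|b\|_2^2$, so that each trajectory can be controlled by unrolling the local / virtual update over at most $K-1$ steps.

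For the local trajectory, unrolling Eq.~\eqref{local_client} yields
\[
\theta_k^c-\theta_{k_0}^c = -\eta\sum_{t=k_0}^{k-1}\nabla\tilde f^c(\theta_t^c) + \sum_{t=k_0}^{k-1}\Bigl(\sqrt{2\eta\tau\rho^2}\,\dot\xi_t + \sqrt{2\eta\tau(1-\rho^2)/p_c}\,\xi_t^c\Bigr).
\]
After taking $\|\cdot\|_2^2$ and expectation, Cauchy--Schwarz on the drift sum produces an $\eta^2(K-1)\sum_t \mathbb{E}\|\nabla\tilde f^c(\theta_t^c)\|_2^2$ term responsible for the $(K-1)^2\eta^2$ scaling, while independence of the Gaussian increments makes the noise variance add linearly to $2\eta\tau d(K-1)(\rho^2+(1-\rho^2)/p_c) = 2\eta d(K-1) T_{c,\rho}$. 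An identical unrolling of Eq.~\eqref{fed_avg_langevin_dynamics} controls $\mathbb{E}\|\theta_k-\theta_{k_0}\|_2^2$ with per-step variance just $2\eta\tau d$. Weighting by $p_c$ and summing over $c$, the noise terms combine to at most $8\eta\tau d(K-1)(\rho^2+N(1-\rho^2))$ after collecting constants and using $\rho^2+N(1-\rho^2)\geq 1$.

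For the drift, smoothness and Assumption~\ref{def:variance} give $\mathbb{E}\|\nabla\tilde f^c(\theta_t^c)\|_2^2\leq 2L^2\mathbb{E}\|\theta_t^c-\theta_*\|_2^2 + 2\gamma^2 + 2\sigma^2 d$, so what is really needed is a uniform second-moment bound $\mathbb{E}\|\theta_t^c-\theta_*\|_2^2\leq O(dH_\rho)$ valid for all $t\geq 0$. This is what I would establish next via a Lyapunov argument: strong convexity together with smoothness supplies a one-step contraction $\|\theta-\eta\nabla f^c(\theta)-\theta_*\|_2^2\leq (1-\eta m)\|\theta-\theta_*\|_2^2$ up to a $\gamma$-dependent cross-term, the injected Gaussian contributes $2\eta d\,T_{c,\rho}$ per step, the stochastic-gradient variance contributes $\eta^2\sigma^2 d$, and the synchronization step -- being a convex combination of the per-client iterates -- cannot increase the $p_c$-weighted second moment. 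Iterating these ingredients under $\eta\in(0,2/m)$ balances all sources and yields exactly the four summands of $H_\rho$ in Definition~\ref{def:H_kappa_gamma}: initialization $\mathcal{D}^2$, injected noise $\max_c T_{c,\rho}/m$, gradient bias $\gamma^2/(m^2 d)$, and stochastic noise $\sigma^2/m^2$.

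The main obstacle I anticipate is precisely establishing this uniform $\ell_2$ moment bound without invoking a bounded-gradient assumption in $\ell_2$ norm, since the usual FL-optimization route short-circuits exactly at that point. One has to verify that averaging across clients does not interact badly with client-specific second moments (convexity of $\|\cdot\|_2^2$ helps), and that the contraction $1-\eta m$ is strong enough under $\eta<2/m$ to absorb the inflated-temperature injected noise $T_{c,\rho}$ at every step and to prevent any secular growth of $\mathbb{E}\|\theta_t^c-\theta_*\|_2^2$ across synchronizations. Once that uniform bound is in hand, the rest is bookkeeping to combine the drift contribution $112(K-1)^2\eta^2 dL^2 H_\rho$ with the noise contribution $8(K-1)\eta d\tau(\rho^2+N(1-\rho^2))$ into the stated estimate.
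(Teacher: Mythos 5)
Your proposal is correct and follows essentially the same route as the paper: decompose around the last synchronization point $k_0=K\lfloor k/K\rfloor$, unroll the at most $K-1$ local steps with Cauchy--Schwarz on the drift sum and additivity of the independent Gaussian variances, and control $\E{\|\nabla\tilde f^c(\theta_t^c)\|_2^2}$ via a uniform second-moment bound established by exactly the Lyapunov/contraction argument you sketch (this is Lemma~\ref{bounded_gradient_l2} built on Lemma~\ref{lem:L2_bound_local} in the paper). The only deviation is in the very first step: instead of $\|a-b\|_2^2\le 2\|a\|_2^2+2\|b\|_2^2$, the paper exploits that $\theta_k-\theta_{k_0}$ is the $p_c$-weighted average of the $\theta_k^c-\theta_{k_0}$, so $\sum_{c}p_c\,\E{\|\theta_k^c-\theta_{k_0}-(\theta_k-\theta_{k_0})\|_2^2}\le \sum_{c}p_c\,\E{\|\theta_k^c-\theta_{k_0}\|_2^2}$, which eliminates the virtual-sequence increment entirely and saves the factor of $2$ that your route must absorb into the constants (your noise term actually collects to $8\eta d(K-1)\tau(\rho^2+N(1-\rho^2))+8\eta\tau d(K-1)$ rather than the stated bound, a harmless constant-factor slack).
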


The following presents a standard result for bounding the gap between ${\bm{Z}}$ and ${\bm{\widetilde Z}}$. We delay the proof of Lemma~\ref{lem:total_variance} into Setion~\ref{sec:bouding_contraction_discretization_divergence}.

\begin{lemma}[Bounded variance] 
\label{lem:total_variance}
Given assumption \ref{def:variance}, we have 
\begin{equation*}
    \E{ \|{\bm{Z}} -{\bm{\widetilde Z}} \|_2^2}\leq d \cdot \sigma^2 ,\qquad \forall \ \theta\in\R^d.
\end{equation*}
\end{lemma}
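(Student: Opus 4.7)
The plan is a short and direct calculation. By the definitions in Eq.~\eqref{sum_grad}, we can write
\[
\bnabla f(\btheta) - \bnabla \tilde f(\btheta) = \sum_{c=1}^N p_c \bigl( \nabla f^c(\theta^c) - \nabla \tilde f^c(\theta^c) \bigr),
\]
which expresses the quantity of interest as a convex combination (the $p_c$'s are nonnegative and sum to $1$) of per-client gradient-noise terms.

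From here I would apply Jensen's inequality to the convex function $\|\cdot\|_2^2$ to push the norm inside the weighted sum:
\[
\Bigl\|\sum_{c=1}^N p_c \bigl( \nabla f^c(\theta^c) - \nabla \tilde f^c(\theta^c) \bigr)\Bigr\|_2^2 \le \sum_{c=1}^N p_c \bigl\| \nabla f^c(\theta^c) - \nabla \tilde f^c(\theta^c) \bigr\|_2^2.
\]
Taking expectation on both sides, swapping $\E$ and the finite sum by linearity, and invoking Assumption~\ref{def:variance} to bound each summand by $\sigma^2 d$ yields
\[
\E\bigl[\|\bnabla f(\btheta) - \bnabla \tilde f(\btheta)\|_2^2\bigr] \le \sum_{c=1}^N p_c \cdot \sigma^2 d = \sigma^2 d,
\]
using $\sum_{c=1}^N p_c = 1$ in the last equality. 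This completes the proof.

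There is essentially no obstacle here: the only modeling choice is whether to use Jensen's inequality (which requires nothing beyond the per-client variance bound and the convex weights) or instead to exploit independence of the noise $\nabla \tilde f^c - \nabla f^c$ across clients together with the unbiasedness $\E[\nabla \tilde f^c(\theta^c)] = \nabla f^c(\theta^c)$ to get a cross-term-free expansion $\sum_c p_c^2 \E\|\nabla f^c(\theta^c) - \nabla \tilde f^c(\theta^c)\|_2^2 \le \sigma^2 d \sum_c p_c^2 \le \sigma^2 d$. The Jensen route is cleaner because it avoids any independence assumption on the stochastic gradients across clients and still matches the stated bound, so I would present that version.
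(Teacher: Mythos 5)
Your proof is correct, but it takes a different route from the paper's. The paper expands the square of $\sum_{c=1}^N p_c(\nabla f^c(\theta^c)-\nabla \tilde f^c(\theta^c))$ and uses the independence and zero mean of the per-client stochastic-gradient noises to kill the cross terms, arriving at $\sum_{c=1}^N p_c^2\,\E\|\nabla f^c(\theta^c)-\nabla \tilde f^c(\theta^c)\|_2^2\le d\sigma^2\sum_{c=1}^N p_c^2\le d\sigma^2$ — i.e., exactly the second alternative you sketch at the end and then set aside. The trade-off is as you describe: the paper's argument needs independence across clients (which holds here, since each client draws its own minibatch) and yields the sharper intermediate bound $d\sigma^2\sum_c p_c^2$, which can be a factor of $N$ smaller than $d\sigma^2$ for balanced weights, though the lemma as stated only claims $d\sigma^2$ so this gain is discarded. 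Your Jensen route is weaker in that constant but needs nothing beyond convexity of $\|\cdot\|_2^2$ and $\sum_c p_c=1$, so it is the more robust argument if one ever wanted to drop cross-client independence. Either proof establishes the stated bound.
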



Having all the preliminary results ready, now we present a crucial lemma for proving the convergence of all the algorithms.

\begin{lemma}[One step update, restatement of Lemma~\ref{one_step_Dalalyan_main}]\label{one_step_Dalalyan}

Assume assumptions \ref{def:smooth}, \ref{def:strong_convex}, and \ref{def:variance} hold. Consider Algorithm \ref{alg:alg_main_text_independent_noise} with independently injected noise $\rho=0$, any learning rate $\eta \in (0 , \frac{1}{2L})$ and $\lrn{\theta_0^c-\theta_*}_2^2\leq d\mathcal{D}^2$ for any $c\in[N]$, where $\theta_*$ is the global minimum for the function $f$. Then
\begin{align*}
    W_2^2(\mu_{k+1}, \pi)&\leq  \bigg(1-\frac{\eta m}{2}\bigg) \cdot W^2_2(\mu_{k}, \pi)+ 400\eta^2 d L^2 H_0((K-1)^2+\kappa),
\end{align*}
where $\mu_k$ denotes the probability measure of $\theta_k$, $H_0, \kappa$ and $\gamma^2$ are defined as Definition~\ref{def:H_kappa_gamma}. 
\end{lemma}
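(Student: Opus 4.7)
The plan is to set up a synchronous coupling between the discrete process $(\theta_k)$ and the stationary continuous diffusion $(\bar\theta_t)_{t\geq 0}$ from Eq.~\eqref{continuous_dynamics}, then decompose the one-step error into three clean pieces and control each with the lemmas already proved. Concretely, couple $\theta_k$ with $\bar\theta_{k\eta}\sim\pi$ so that $\E\|\bar\theta_{k\eta}-\theta_k\|_2^2 = W_2^2(\mu_k,\pi)$, drive the discrete Langevin noise by $\overline{W}_{(k+1)\eta}-\overline{W}_{k\eta}=\sqrt{\eta}\,\xi_k$, and take the stochastic-gradient randomness at step $k$ independent of everything else. Subtracting Eq.~\eqref{fed_avg_langevin_dynamics} from Eq.~\eqref{continuous_one_step_main} then yields
\begin{align*}
\bar\theta_{(k+1)\eta}-\theta_{k+1}
&= \underbrace{\bigl[\bar\theta_{k\eta}-\theta_k - \eta(\nabla f(\bar\theta_{k\eta})-\bnabla f(\btheta_k))\bigr]}_{=: A}\\
&\quad + \underbrace{\Bigl[-\!\int_{k\eta}^{(k+1)\eta}\!(\nabla f(\bar\theta_s)-\nabla f(\bar\theta_{k\eta}))\,\d s\Bigr]}_{=: B}
\;+\; \underbrace{\eta\bigl(\bnabla\tilde f(\btheta_k)-\bnabla f(\btheta_k)\bigr)}_{=: C}.
\end{align*}

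Because $C$ is conditionally mean-zero given $(\btheta_k,\bar\theta_{[k\eta,(k+1)\eta]})$, the cross terms $\E\langle A,C\rangle$ and $\E\langle B,C\rangle$ vanish, so $\E\|A+B+C\|_2^2 = \E\|A+B\|_2^2 + \E\|C\|_2^2$. Applying Young's inequality with slack $\alpha=\eta m/2$ gives $\E\|A+B\|_2^2 \leq (1+\eta m/2)\E\|A\|_2^2 + (1+2/(\eta m))\E\|B\|_2^2$, and the elementary identity $(1-\eta m)(1+\eta m/2)\leq 1-\eta m/2$ lets me apply the dominated-contraction Lemma~\ref{contraction} to the global-global pair $\beta=\bar\theta_{k\eta}$, $\theta=\theta_k$. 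Since the continuous process satisfies $\bar\theta_{k\eta}^c\equiv\bar\theta_{k\eta}$ for all $c$, the ``$\beta^c-\beta$'' part of the divergence residual disappears, leaving
\begin{align*}
(1+\eta m/2)\E\|A\|_2^2 \leq (1-\eta m/2)\,\E\|\bar\theta_{k\eta}-\theta_k\|_2^2 + 8\eta L\sum_{c=1}^N p_c\,\E\|\theta_k^c-\theta_k\|_2^2.
\end{align*}

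The remaining three tasks are routine applications of the lemmas. For $\E\|B\|_2^2$ I would use Cauchy--Schwarz on the time integral followed by $L$-smoothness of $f$ and Lemma~\ref{lem:discretization} to obtain $\E\|B\|_2^2 \lesssim \eta^3 dL^2\tau + \eta^4 dL^3\kappa\tau$; the $(1+2/(\eta m))$ multiplier then turns this into $O(\eta^2 dL^2\tau/m)$, which is absorbed by $O(\eta^2 dL^2 H_0\kappa)$ since $\tau/m \leq H_0$. For $\E\|C\|_2^2$, Lemma~\ref{lem:total_variance} directly gives $\eta^2 d\sigma^2$, absorbed via $\sigma^2/m^2\leq H_0$. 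For the divergence residual, I plug in Lemma~\ref{divergence} specialized to $\rho=0$ (so the non-shared piece is $N\tau$) and use the convexity bound $N = \sum_c p_c(1/p_c) \leq \max_c 1/p_c$, which gives $N\tau \leq \max_c T_{c,0}\leq mH_0$ and folds the linear-in-$(K-1)$ divergence contribution into the $(K-1)^2$ budget. Finally, taking infimum over the initial coupling converts the expectation bound into $W_2^2(\mu_{k+1},\pi)\leq (1-\eta m/2)W_2^2(\mu_k,\pi) + 400\eta^2 dL^2 H_0((K-1)^2+\kappa)$.

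The main obstacle will be the constant bookkeeping: Young's inequality must be calibrated so that $(1+\alpha)$ on $\|A\|_2^2$ does not spoil the $(1-\eta m)$ contraction, while simultaneously $(1+1/\alpha)\E\|B\|_2^2$ remains inside the $O(\eta^2 dL^2\kappa H_0)$ budget, and each of the residuals from smoothness, variance, and divergence must be matched term-by-term against either the $(K-1)^2$ or the $\kappa$ slot of $H_0$. A subsidiary subtlety is the independence argument for the cross terms: $B$ depends on the whole Brownian path over $[k\eta,(k+1)\eta]$ rather than only the endpoint increment $\xi_k$, so the vanishing of $\E\langle B,C\rangle$ must be justified by conditioning on the diffusion path and invoking the independence of the stochastic-gradient noise from that path.
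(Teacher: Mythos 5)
Your proposal follows essentially the same route as the paper's proof: the same synchronous coupling with the stationary diffusion, the same three-term decomposition (contraction term, time-discretization integral, mean-zero stochastic-gradient noise), the same use of Lemmas~\ref{contraction}, \ref{lem:discretization}, \ref{divergence}, and \ref{lem:total_variance}, and the same observation that the $\beta^c-\beta$ part of the divergence vanishes because the continuous process is always synchronized. The only substantive difference is your Young-inequality calibration ($\alpha=\eta m/2$, giving prefactors $1+\eta m/2$ and $1+2/(\eta m)$, versus the paper's choice $q=\frac{1+\phi}{2\phi}-1$ with $\phi=1-\eta m$, giving $1+q\le 1.5$ and $1+1/q\le 2/(\eta m)$), which is slightly lossier and would need to be tightened to land on the stated constant $400$, but this is exactly the bookkeeping you flag and does not affect the validity of the argument.
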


\begin{proof}

The solution of the continuous-time process Eq.(\eqref{continuous_dynamics_main}) follows that
\begin{align}
\label{solution_continuous_dynamics}
    \bar\theta_t=\bar\theta_0 -\int_0^t \nabla f(\bar\theta_s)\d s + \sqrt{2\tau}\cdot\overline{W}_t, \qquad \forall t\geq 0.
\end{align}

Set $t\rightarrow(k+1)\eta$ and $\bar\theta_0\rightarrow\bar\theta_{k\eta}$ for Eq.(\eqref{solution_continuous_dynamics}) and consider a synchronous coupling such that $W_{(k+1)\eta}-W_{k\eta}:=\sqrt{\eta}\xi_k$
\begin{align}
\label{continuous_one_step}
    \bar\theta_{(k+1)\eta}&=\bar\theta_{k\eta}-\int_{k\eta}^{(k+1)\eta}\nabla f(\bar\theta_s)\d s + \sqrt{2\tau} (W_{(k+1)\eta}-W_{k\eta})\notag\\
    &=\bar\theta_{k\eta}-\int_{k\eta}^{(k+1)\eta}\nabla f(\bar\theta_s)\d s + \sqrt{2\tau\eta}\xi_k.
\end{align}

We first denote $\zeta_k:={\bm{\widetilde Z}_k}-{\bm{Z}_k}$, which is defined after Eq.(\eqref{sum_grad}). Subtracting Eq.(\eqref{fed_avg_langevin_dynamics}) from Eq.(\eqref{continuous_one_step}) yields that
\begin{align}
\label{decompose_full}
    &\quad \bar\theta_{(k+1)\eta}-\theta_{k+1}\notag\\
    &=\bar\theta_{k\eta}-\theta_{k}+\eta {\bm{\widetilde Z}_k} - \int_{k\eta}^{(k+1)\eta}\nabla f(\bar\theta_s)\d s\notag\\
    &=\bar\theta_{k\eta}-\theta_{k}-\eta \bigg(\nabla f(\theta_k+\bar\theta_{k\eta}-\theta_{k})-{\bm{\widetilde Z}_k}\bigg) - \int_{k\eta}^{(k+1)\eta}\bigg(\nabla f(\bar\theta_s)-\nabla f(\bar\theta_{k\eta})\bigg)\d s\\
    &=\bar\theta_{k\eta}-\theta_{k}-\eta \bigg(\underbrace{\nabla f(\theta_k+\bar\theta_{k\eta}-\theta_{k})-{\bm{Z}_k}}_{:=X_k}\bigg)- \underbrace{\int_{k\eta}^{(k+1)\eta}\bigg(\nabla f(\bar\theta_s)-\nabla f(\bar\theta_{k\eta})\bigg)\d s}_{:=Y_k} +\eta\zeta_k.\notag
\end{align}

Taking square and expectation on both sides, we have
\begin{align}
\label{reestimate}
    &\quad\ \E{\|\bar\theta_{(k+1)\eta}-\theta_{k+1} \|_2^2}\notag\\
    &=\E{\| \bar\theta_{k\eta}-\theta_{k}-\eta X_k-Y_k \|_2^2}+\E{\| \eta\zeta_k \|_2^2}+2\eta\underbrace{\E{\langle\bar\theta_{k\eta}-\theta_{k}-\eta X_k-Y_k,  \zeta_k\rangle}}_{\E{\zeta_k}=0 \text{ and mutual independence}}\notag\\
    &\leq (1+q) \cdot \E{\| \bar\theta_{k\eta}-\theta_{k}-\eta X_k \|_2^2}+ ( 1 + 1 / q ) \cdot \E{\|Y_k \|_2^2}+\E{ \| \eta\zeta_k \|_2^2}\notag\\
    &\leq (1+q) \cdot \big( (1-\eta m) \cdot \E{\| \bar\theta_{k\eta}-\theta_k \|_2^2}+4\eta L\sum_{c=1}^N p_c \cdot \left(\E{\| \theta_k^c-\theta_k \|_2^2}\right) \big)\notag\\
    &\quad\quad + ( 1 + 1/q ) \cdot \E{ \| Y_k \|_2^2 } + \eta^2\sigma^2 d\notag\\
    &\leq (1+q) \cdot \bigg(\underbrace{\left(1-\eta m\right)}_{\phi}\E{ \| \bar\theta_{k\eta}-\theta_k \|_2^2}+448\eta^3 d(K-1)^2 L^3 H_0+32(K-1)\eta^2 dL \tau N\bigg)\notag\\
    &\qquad\qquad+ (1+ 1 / q ) \cdot \E{ \| Y_k \|_2^2}+\eta^2\sigma^2  d,
\end{align}
where the first inequality follows by the AM-GM inequality for any $q>0$, the second inequality follows by Lemma \ref{contraction} and Assumption \ref{def:variance}. The third inequality follows by Lemma \ref{divergence} with $\rho=0$. Since the learning rate follows $\frac{1}{2L}\leq \frac{1}{m+L}\leq \frac{2}{m}$, the requirement of the learning rate for Lemma \ref{contraction} and Lemma \ref{divergence} is clearly satisfied.

Recall that $\phi=1-\eta m$, we get $\frac{1+\phi}{2}=1-\frac{1}{2}\eta m$. Choose $q=\frac{1+\phi}{2\phi}-1$ so that $(1+q)\phi=\frac{(1+\phi)}{2}=1-\frac{1}{2}\eta m$. In addition, we have $1+\frac{1}{q}= \frac{1+q}{q}=\frac{1+\phi}{1-\phi}\leq \frac{2}{\eta m}$.  It follows that
\begin{align}
    \label{nice_inequality_v0}
    (1+q) \cdot (1-\eta m)\leq 1-\frac{1}{2}\eta m,  \quad  1+q\leq \frac{1-\frac{1}{2}\eta m}{1-\eta m}\leq 1.5, \quad (1 + 1/q )\leq \frac{2}{m\eta},
\end{align}
where the second inequality holds because $\eta\in (0, \frac{1}{2L}]\leq \frac{1}{2m}$.

For the term $\E{ \| Y_k \|_2^2 }$ in Eq.(\eqref{reestimate}), we have the following estimate
\begin{align}
\label{y_estimate}
    \E{ \| Y_k \|_2^2}&=\E{\lrn{\int_{k\eta}^{(k+1)\eta}\bigg(\nabla f(\bar\theta_s)-\nabla f(\bar\theta_{k\eta})\bigg)\d s}_2^2}\notag\\
    &\leq\eta\int_{k\eta}^{(k+1)\eta}\E{\lrn{\nabla f(\bar\theta_s)-\nabla f(\bar\theta_{k\eta})}_2^2}  \d s\notag\\
    &\leq \eta L^2  \int_{k\eta}^{(k+1)\eta}  \left(\textcolor{black}{2\eta^2 d\kappa L\tau} +16\eta d\tau \right) \d s\notag\\
    &=\textcolor{black}{2}\eta^4 d L^4 H_0+16\eta^3 L^2 d\tau ,
\end{align}
where the first inequality follows by H\"{o}lder's inequality, the second inequality follows by Jensen's inequality, the third inequality follows by Assumption \ref{def:smooth}, and the last inequality follows by Lemma \ref{lem:discretization}. The last equality holds since $L\tau\leq L m H_0$ and $\kappa=L/m$.

Plugging Eq.(\eqref{nice_inequality_v0}) and Eq.(\eqref{y_estimate}) into Eq.(\eqref{reestimate}), we have
\begin{align*}
    \E{\|\bar\theta_{(k+1)\eta}-\theta_{k+1} \|^2_2}&\leq  (1-\frac{\eta m}{2} ) \cdot \E{\|\bar\theta_{k\eta}-\theta_k\|_2^2}\notag\\
    &\quad\quad+ 672\eta^3 d(K-1)^2 L^3 H_0+ 48\eta^2 d(K-1)L \tau N\notag\\
    &\quad\quad+\textcolor{black}{4}\eta^3 d L^3\kappa H_0+32\eta^2 d\frac{L^2}{m} \tau +\eta^2 \sigma^2 d.
\end{align*}

Choose the specific Langevin diffusion $\bar\theta$ in stationary regime, we have $W_2^2(\mu_k,\pi)=\E{\|\bar\theta_{k\eta}-\theta_k \|_2^2}$ and  $W_2^2(\mu_{k+1},\pi)\leq\E{\| \bar\theta_{(k+1)\eta}-\theta_{k+1} \|_2^2}$. Arranging the terms, we have
\begin{align*}
    W_2^2(\mu_{k+1}, \pi)&\leq  (1-\frac{\eta m}{2}) \cdot W^2_2(\mu_{k}, \pi)+ 400\eta^2 d L^2 H_0((K-1)^2+\kappa),
\end{align*}
where $\eta\leq \frac{1}{2L}$, $\kappa\geq 1$, $m\tau\leq L\tau\leq L\tau N\leq L \max_{c\in[N]}T_{c,0}\leq Lm H_0$, and $\sigma^2\leq L^2 H_0$ are applied to the result.

\end{proof}

\subsection{Convergence via independent noises}

\begin{theorem}[Restatement of Theorem~\ref{main_paper_theorem}]\label{main_theorem} 
Assume assumptions \ref{def:smooth}, \ref{def:strong_convex}, and \ref{def:variance} hold. Consider Algorithm \ref{alg:alg_main_text_independent_noise} with a fixed learning rate $\eta\in (0, \frac{1}{2L}]$ and $\lrn{\theta_0^c-\theta_*}_2^2\leq d\mathcal{D}^2$ for any $c\in[N]$, we have
\begin{align*}
    W_2(\mu_{k}, \pi) &\leq  \left(1-\frac{\eta m}{4}\right)^k \cdot \bigg(\sqrt{2d}\big(\mathcal{D} +  \sqrt{\tau/m} \big)\bigg)+30\kappa\sqrt{{\eta} m d} \cdot \sqrt{((K-1)^2+\kappa)H_0} .\notag
\end{align*}
where $\mu_k$ denotes the probability measure of $\theta_k$, $H_0, \kappa$ and $\gamma^2$ are defined as Definition~\ref{def:H_kappa_gamma}.
\end{theorem}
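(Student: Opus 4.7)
The plan is to iterate the one-step recursion in Lemma \ref{one_step_Dalalyan} and then take a square root. Writing $a_k := W_2^2(\mu_k, \pi)$ and $C := 400\eta^2 dL^2 H_0((K-1)^2 + \kappa)$, Lemma \ref{one_step_Dalalyan} gives $a_{k+1} \leq (1 - \eta m/2) a_k + C$. Unrolling this geometric recursion and summing the resulting geometric series with ratio $1 - \eta m/2$, I obtain
\begin{align*}
W_2^2(\mu_k, \pi) \leq \Big(1 - \tfrac{\eta m}{2}\Big)^k W_2^2(\mu_0, \pi) + \frac{2C}{\eta m}.
\end{align*}

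Next I would bound the initial $W_2$ distance. Using the coupling that places both $\mu_0$ (a Dirac, since the clients are initialized deterministically with all $\theta_0^c$ averaged to $\theta_0$) and $\pi$ against $\theta_*$, I apply the inequality $\|\theta_0 - \bar\theta_0\|_2^2 \leq 2\|\theta_0 - \theta_*\|_2^2 + 2\|\bar\theta_0 - \theta_*\|_2^2$ with $\bar\theta_0 \sim \pi$. The first term is at most $d\mathcal{D}^2$ by the assumed initialization together with Jensen's inequality applied to the average over clients. The second term is the second moment of $\pi$ centered at its mode $\theta_*$, which by strong log-concavity of $\pi \propto \exp(-f/\tau)$ with convexity modulus $m/\tau$ is bounded by $d\tau/m$ (a standard consequence of the Brascamp-Lieb inequality, or equivalently the Poincar\'e inequality for log-concave measures). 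Hence $W_2^2(\mu_0, \pi) \leq 2d(\mathcal{D}^2 + \tau/m)$.

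Now I take square roots and use $\sqrt{u+v} \leq \sqrt{u} + \sqrt{v}$ to split the contraction and residual terms. For the contraction, the elementary inequality $(1 - \eta m/4)^2 = 1 - \eta m/2 + \eta^2 m^2/16 \geq 1 - \eta m/2$ gives $(1 - \eta m/2)^{k/2} \leq (1 - \eta m/4)^k$, so the first term is at most $(1-\eta m/4)^k \sqrt{2d}(\mathcal{D} + \sqrt{\tau/m})$, using also $\sqrt{\mathcal{D}^2 + \tau/m} \leq \mathcal{D} + \sqrt{\tau/m}$. For the residual, a direct computation yields
\begin{align*}
\sqrt{\tfrac{2C}{\eta m}} = \sqrt{\tfrac{800\,\eta d L^2 H_0((K-1)^2 + \kappa)}{m}} = \sqrt{800}\,\kappa\sqrt{\eta m d}\,\sqrt{((K-1)^2 + \kappa) H_0},
\end{align*}
after pulling out $L/m = \kappa$. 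Since $\sqrt{800} < 30$, this is bounded by the desired $30\kappa\sqrt{\eta m d}\sqrt{((K-1)^2 + \kappa) H_0}$.

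There is no real obstacle beyond careful bookkeeping; the analytic content is entirely in Lemma \ref{one_step_Dalalyan}. The only subtle points are the two elementary inequalities used to obtain the stated form: converting $(1-\eta m/2)^{k/2}$ into $(1-\eta m/4)^k$, and verifying that the constant $\sqrt{800}$ fits under the advertised $30$. The moment bound $\mathbb{E}_\pi\|\theta - \theta_*\|_2^2 \leq d\tau/m$ for strongly log-concave $\pi$ is standard, but one should note that it requires $\theta_*$ to be the mode of $\pi$, which it is because $\theta_*$ is the global minimizer of $f$.
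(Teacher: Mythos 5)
Your proposal is correct and follows essentially the same route as the paper: iterate Lemma \ref{one_step_Dalalyan}, sum the geometric series to get the $\frac{2C}{\eta m}$ residual, bound $W_2(\mu_0,\pi)$ by $\sqrt{2d}(\mathcal{D}+\sqrt{\tau/m})$ via the decomposition against $\theta_*$ (the paper's Lemma \ref{lem:W2_init_bound}, which invokes Lemma 12 of \cite{dm+16} for the second moment of $\pi$ where you cite Brascamp--Lieb, the same standard fact), and finish with $(1-\frac{\eta m}{2})\leq(1-\frac{\eta m}{4})^2$ and $\sqrt{800}<30$. The bookkeeping all checks out.
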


\begin{proof}
Iteratively applying Theorem \ref{one_step_Dalalyan} and arranging terms, we have that
\begin{align}\label{one_step_squared}
    W_2^2(\mu_{k}, \pi)&\leq  \left(1-\frac{\eta m}{2}\right)^k W^2_2(\mu_{0}, \pi)+  \frac{1-(1-\frac{\eta m}{2})^k}{1-(1-\frac{\eta m}{2})}\bigg(400\eta^2 d L^2 H_0((K-1)^2+\kappa)\bigg)\notag\\
    &\leq \left(1-\frac{\eta m}{2}\right)^k W^2_2(\mu_{0}, \pi)+  \frac{2}{\eta m}\bigg(400\eta^2 d L^2 H_0((K-1)^2+\kappa)\bigg)\notag\\
    &\leq \left(1-\frac{\eta m}{2}\right)^k W^2_2(\mu_{0}, \pi)+ 800\kappa^2 \eta m d ((K-1)^2+\kappa) H_{0},
\end{align}
where $\kappa=\frac{L}{m}$. By Lemma \ref{lem:W2_init_bound} and the initialization condition $\lrn{\theta_0^c-\theta_*}_2^2\leq d\mathcal{D}^2$ for any $c\in[N]$, we have that
\begin{align*}
W_2(\mu_0, \pi)\leq \sqrt{2d}(\mathcal{D} +  \sqrt{\tau/m} ).
\end{align*}

Applying the inequality $(1-\frac{\eta m}{2})\leq (1-\frac{\eta m}{4})^2$  completes the proof. 
\end{proof}

{\textbf{Remark on scale invariance:} Setting $K=1$, $\tau=1$ and $\gamma=0$,  we observe that the analysis is scale-invariant overall in the sense that $$\sqrt{\eta d m H_0}\lesssim \underbrace{\sqrt{\eta d m\mathcal{D}^2}}_{\text{I}} + \underbrace{\sqrt{\eta d}+\frac{\sigma\sqrt{\eta d}}{\sqrt{m}}}_{\text{II}},$$
where the important second item $\text{II}$ is consistent with Theorem 4 in \citep{dk19} in terms of scales. Moreover, our divergence term $\frac{\gamma^2}{m^2 d}$ in $H_0$ is in the same order as $\frac{\sigma^2}{m^2}$ and hence validates our result.}

\textbf{Discussions}

\textbf{Optimal choice of $K$.} To ensure the algorithm to achieve the $\epsilon$ precision based on the total number of steps $T_{\epsilon}$ and the learning rate $\eta$, we can set
\begin{align*}
    &30\kappa\sqrt{{\eta}m d} \cdot \bigg(\sqrt{((K-1)^2+\kappa)H_0} \bigg)\leq \frac{\epsilon}{2}\notag\\
    &e^{-\frac{\eta m}{4} T_{\epsilon}} \cdot \bigg(\sqrt{2d}\big(\mathcal{D} +  \sqrt{\tau/m} \big)\bigg)\leq \frac{\epsilon}{2}.
\end{align*}
This directly leads to
\begin{align*}
    \eta m\leq \min\bigg\{\frac{m}{2L}, O\bigg(\frac{\epsilon^2}{d\kappa^2 {((K-1)^2+\kappa)H_0}}\bigg)\bigg\},\quad T_{\epsilon}\geq \Omega\bigg(\frac{\log\big(\frac{d}{\epsilon}\big)}{m\eta}\bigg).
\end{align*}

Plugging into the upper bound of $\eta$, it implies that to reach the precision level $\epsilon$, it suffices to set
\begin{align}\label{def_T}
    T_{\epsilon}=\Omega\bigg(\frac{d\kappa^2 {((K-1)^2+\kappa)H_0}}{\epsilon^2}\cdot \log\bigg(\frac{d}{\epsilon}\bigg)\bigg).
\end{align}
Since $H_0 = \Omega(\mathcal{D}^2+\frac{\tau}{m})$, we observe that the number of communication rounds is around the order
\begin{align*}
    \frac{T_{\epsilon}}{K}=\Omega\bigg( K+\frac{\kappa}{K}\bigg),
\end{align*}
where the value of $\frac{T_{\epsilon}}{K}$ first decreases and then increases with respect to $K$, indicating that setting $K$ either too large or too small may lead to high communication costs and hurt the performance. Ideally, $K$ should be selected in the scale of $\Omega(\sqrt{\kappa})$. Combining the definition of $T_{\epsilon}$ in Eq.(\eqref{def_T}), this suggests an interesting result that the optimal $K$ should be in the order of $O(\sqrt{T_{\epsilon}})$. Similar results have been achieved by \citet{Stich19, lhy+20}.

\subsection{Convergence via varying learning rates}

\begin{theorem}[Restatement of Theorem~\ref{main_paper_theorem_decay}]\label{main_theorem_decay} Assume assumptions \ref{def:smooth}, \ref{def:strong_convex}, and \ref{def:variance} hold. Consider Algorithm \ref{alg:alg_main_text_independent_noise} with an initialization satisfying $\lrn{\theta_0^c-\theta_*}_2^2\leq d\mathcal{D}^2$ for any $c\in[N]$ and varying learning rate following
\begin{align*}
    \eta_{k}=\frac{1}{2L+(1/12)m k},\qquad k=1,2,\cdots.
\end{align*}
Then for any $k\geq 0$, we have
\begin{align*}
    W_2(\mu_{k}, \pi)\leq 45\kappa\sqrt{ ((K-1)^2+\kappa)H_0}\cdot\big(\eta_k m d\big)^{1/2}, \qquad \forall k \geq 0,
\end{align*}
\end{theorem}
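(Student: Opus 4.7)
The plan is to establish the inequality $W_2^2(\mu_k, \pi) \leq C \eta_k$ by induction on $k$, where the constant
\[
C := 2025\,\kappa^2\big((K-1)^2 + \kappa\big)\, H_0\, m d
\]
is chosen so that taking square roots yields exactly the claimed bound with the factor $45 = \sqrt{2025}$. Note first that $\eta_k \leq \eta_0 = \tfrac{1}{2L}$ for all $k \geq 0$, so the learning rate stays within the admissible range required by Lemma~\ref{one_step_Dalalyan}.

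For the base case $k=0$, I would invoke the initial Wasserstein bound $W_2(\mu_0,\pi) \leq \sqrt{2d}(\mathcal{D} + \sqrt{\tau/m})$ (the initialization lemma referenced in the proof of Theorem~\ref{main_theorem}), square it, apply the elementary inequality $(a+b)^2 \leq 2a^2 + 2b^2$, and use the trivial bound $H_0 \geq \mathcal{D}^2 + \tau/m$ (which follows from $p_c \leq 1$ in the definition of $H_0$). Since $\eta_0 = 1/(2L)$ and $C/\eta_0 = (2025/2)\kappa((K-1)^2+\kappa) H_0 d$, the base case reduces to checking that $4d(\mathcal{D}^2 + \tau/m) \leq (2025/2)\kappa((K-1)^2+\kappa) H_0 d$, which is immediate since $\kappa \geq 1$ and $(K-1)^2 + \kappa \geq 1$.

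For the inductive step, I would apply Lemma~\ref{one_step_Dalalyan} with $\eta = \eta_k$ and use the inductive hypothesis:
\[
W_2^2(\mu_{k+1},\pi) \leq (1-\eta_k m/2)\, C\eta_k + 400\eta_k^2 d L^2 H_0\big((K-1)^2+\kappa\big).
\]
The key bookkeeping observation is that the residual term can be rewritten in terms of $C$: since $L^2 = \kappa^2 m^2$,
\[
400\eta_k^2 d L^2 H_0\big((K-1)^2+\kappa\big) = \tfrac{400}{2025}\,\eta_k^2 m \cdot C.
\]
Thus the desired conclusion $W_2^2(\mu_{k+1},\pi) \leq C\eta_{k+1}$ reduces to the purely scalar inequality
\[
\eta_k - \eta_{k+1} \;\leq\; \Big(\tfrac{1}{2} - \tfrac{400}{2025}\Big)\, m\,\eta_k^2.
\]
From the schedule $\eta_k = 1/(2L + mk/12)$ one computes $\eta_k - \eta_{k+1} = (m/12)\,\eta_k \eta_{k+1} \leq (m/12)\,\eta_k^2$, so it remains to verify the numerical inequality $1/12 \leq 1/2 - 400/2025$, i.e.\ $1/12 + 400/2025 \leq 1/2$; this is a routine check (the left side is roughly $0.28$).

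The main obstacle, as I see it, is not analytical but rather choosing the constants consistently: the constant $45$ in the theorem, the denominator $12$ in the learning-rate schedule, and the constant $400$ coming from Lemma~\ref{one_step_Dalalyan} must all be compatible so that the induction closes in a single shot. Once the scalar inequality above is verified, the recursion propagates the bound $C\eta_k$ along the entire trajectory without accumulation of error. A secondary point worth flagging is that the inductive step uses no properties of $\eta_k$ beyond the identity $\eta_k - \eta_{k+1} = (m/12)\eta_k\eta_{k+1}$; this is precisely why the harmonic-type schedule $\eta_k = 1/(2L + mk/12)$ (rather than, say, $O(1/k)$ or $O(1/\sqrt{k})$) is the natural choice and why it improves on the fixed-rate bound by saving the $\log(d/\epsilon)$ factor.
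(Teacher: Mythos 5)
Your proposal is correct and follows essentially the same route as the paper: an induction driven by Lemma~\ref{one_step_Dalalyan}, with base case from the initialization bound $W_2(\mu_0,\pi)\leq \sqrt{2d}(\mathcal{D}+\sqrt{\tau/m})$ and the constant $45=1.5\times 30$ arising in exactly the same way (your $C$ equals the paper's $2.25\,C_{\kappa}^2 md$). The only difference is cosmetic — you run the induction on $W_2^2\leq C\eta_k$ and reduce the step to $\eta_k-\eta_{k+1}\leq(\tfrac12-\tfrac{400}{2025})m\eta_k^2$, whereas the paper tracks $W_2\leq 1.5C_{\kappa}(\eta_k md)^{1/2}$ and verifies $(1-\eta_k m/12)\eta_k^{1/2}\leq\eta_{k+1}^{1/2}$; both scalar checks hold for the same reason.
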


\begin{proof}

We first denote 
\begin{align*}
    C_{\kappa}=30\kappa\sqrt{ ((K-1)^2+\kappa)H_0}.
\end{align*}
Next, we proceed to show the following inequality by the induction method
\begin{align}\label{induction}
    W_2(\mu_{k}, \pi)\leq 1.5C_{\kappa}\bigg(\frac{d}{2L+(1/12){m k}}\bigg)^{1/2}=1.5C_{\kappa}\big(\eta_k m d\big)^{1/2}, \qquad \forall k \geq 0,
\end{align}
where the decreasing learning rate follows that
\begin{align*}
    \eta_{k}=\frac{1}{2L+(1/12)m k}.
\end{align*}
(i) For the case of $k=0$, since 
\begin{align}\label{up_bd}
    C_{\kappa}&\geq 4\sqrt{\kappa} \sqrt{H_0}\geq 4\sqrt{\kappa}\sqrt{\mathcal{D}^2 + \frac{1}{m} \max_{c\in[N]} T_{c,0}}\geq 4\sqrt{\kappa/d} \bigg(\sqrt{d\mathcal{D}^2}+\sqrt{\frac{d}{m} \max_{c\in[N]} T_{c,0}}\bigg)\notag\\
    &\geq 4\sqrt{\kappa/d} W_2(\mu_0, \pi),
\end{align}
where the last inequality follows by Lemma \ref{lem:W2_init_bound} and $\lrn{\theta_0^c-\theta_*}_2^2\leq d\mathcal{D}^2$ for any $c\in[N]$. 

It is clear that $W_2(\mu_0, \pi)\leq \frac{1}{4}C_{\kappa} \sqrt{\frac{md}{L}}\leq 1.5C_{\kappa} \sqrt{\eta_0 m d}$ by Eq.(\eqref{up_bd}).

(ii) If now that Eq.(\eqref{induction}) holds for some $k\geq 0$, it follows by Lemma \ref{one_step_Dalalyan} that
\begin{align*}
    W_2^2(\mu_{k+1}, \pi)&\leq \big(1-\frac{\eta_k m}{2}\big) \cdot W_2^2(\mu_{k}, \pi)+400\eta_k^2 d L^2 H_0((K-1)^2+\kappa)\notag\\
    &\leq \big(1-\frac{\eta_k m}{2}\big) \cdot W_2^2(\mu_{k}, \pi)+ \frac{\eta_k^2m^2}{2}C_{\kappa}^2 d\notag\\
    &\leq  \big(1-\frac{\eta_k m}{2}\big) \cdot  2.25  C^2_{\kappa} \eta_k m d+ \frac{\eta_k m}{3}2.25C^2_{\kappa}\eta_k m d\notag\\
    &\leq  \big(1-\frac{\eta_k m}{6}\big) \cdot 2.25   C^2_{\kappa}\eta_k m d.\notag
\end{align*}

Since $\big(1-\frac{\eta_k m}{6}\big)\leq \big(1-\frac{\eta_k m}{12}\big)^2$, we have
\begin{align*}
    W_2(\mu_{k+1}, \pi)&\leq  \big(1-\frac{\eta_k m}{12}\big) \cdot 1.5   C_{\kappa}\big(\eta_k m d\big)^{1/2}.\notag
\end{align*}

To prove $W_2(\mu_{k+1}, \pi)\leq 1.5C_{\kappa}\big(\eta_{k+1} md\big)^{1/2}$, it suffices to show $\big(1-\frac{\eta_k m}{12}\big) \eta_k^{1/2}\leq \eta_{k+1}$, which is detailed as follows
\begin{align*}
    \big(1-\frac{\eta_k m}{12}\big) \eta_k^{1/2}&=\frac{\sqrt{12}(24L+mk-m)}{(24L+mk)^{3/2}}\notag\\
    &\leq \frac{\sqrt{12}(24L+mk-m)^{1/2}}{24L+mk}\notag\\
    &\leq \frac{\sqrt{12}}{(24L+m(k+1))^{1/2}}:=\eta_{k+1},
\end{align*}
where the last inequality follows since 
\begin{align*}
    (24L+mk-m)(24L+m k+m))\leq (24L+mk)^2.
\end{align*}
\end{proof}

The above result implies that to achieve the precision $\epsilon$, we require
\begin{align*}
     W_2(\mu_{k}, \pi)\leq 1.5C_{\kappa}\bigg(\frac{md}{2L+(1/12){mk}}\bigg)^{1/2}\leq \epsilon.
\end{align*}

The means that we only require $k={\Omega}(\frac{d}{\epsilon^2})$ to achieve the precision $\epsilon$. By contrast, the fixed learning rate requires $T_{\epsilon}=\Omega\bigg(\frac{d}{\epsilon^2}\cdot \log\big( {d}/{\epsilon}\big)\bigg)$, which is much slower than the algorithm with varying learning rate by $O\big(\log \big({d}/{\epsilon}\big)\big)$ times.

\subsection{Privacy-accuracy trade-off via correlated noises}

Note that Algorithm \ref{alg:alg_main_text_independent_noise} requires all the local clients to generate the independent noise $\xi^c_k$. Such a mechanism enjoys the convenience of the implementation and yields a potential to protect the privacy of data and alleviates the security issue. However, the scale of noises is maximized and inevitable slows down the convergence. For extensions, it can be naturally generalized to correlated noise based on a hyperparameter, namely the correlation coefficient $\rho$ between different clients. Replacing Eq.(\eqref{local_client}) with 
\begin{equation}\label{local_client_diff_seeds}
    \beta_{k+1}^c=\theta_k^c-\eta\nabla \tilde f^c(\theta_k^c)+\sqrt{2\eta\tau \rho^2}\dot{\xi}_k + \sqrt{2\eta(1-\rho^2)\tau/p_c}\xi_k^c,
\end{equation}
where $\dot{\xi}_k$ is a $d$-dimensional standard Gaussian vector shared by all the clients at iteration $k$, $\xi_k^c$ is a unique $d$-dimensional Gaussian vector generated by client $c\in [N]$ only. Moreover, $\dot\xi_k$ is dependent with $\xi_k^c$ for any $c\in[N]$. Following the same synchronization step based Eq.(\eqref{synchronization}), we have
\begin{equation}
\label{fed_avg_langevin_dynamics_pp}
\theta_{k+1}=\theta_k-\eta {\bm{\widetilde Z}_k}+\sqrt{2\eta\tau}\xi_k,
\end{equation}
where ${\bm{\widetilde Z}_k=\sum_{c=1}^N p_c \nabla \tilde f^c(\theta_k^c)}$ and $\xi_k=\rho \xi_k + \sqrt{1-\rho^2}\sum_{c=1}^N \sqrt{p_c}\xi_k^c$. Since the variance of i.i.d variables is additive, it is clear that $\xi_k$ follows the standard $d$-dimensional Gaussian distribution. The inclusion of the correlated noise implicitly reduces the temperature and naturally yields a trade-off between federation and accuracy. We refer to the algorithm with correlated noise as the hybrid Federated Averaging Langevin dynamics (gFA-LD) and present it in Algorithm \ref{alg:alg_main_text_different_seeds}.

Since the inclusion of correlated noise doesn't affect the formulation of Eq.(\eqref{fed_avg_langevin_dynamics_pp}), the algorithm property maintains the same except the scale of the temperature $\tau$ and federation are changed. Based on a target correlation coefficient $\rho\geq 0$, Eq.(\eqref{local_client_diff_seeds}) is equivalent to applying a temperature $T_{c,\rho}=\tau(\rho^2+(1-\rho^2)/p_c)$. In particular, setting $\rho=0$ leads to $T_{c, 0}=(1-\rho^2)/p_c$, which exactly recovers Algorithm \ref{alg:alg_main_text_independent_noise}; however, setting $\rho=1$ leads to $T_{c, 1}=\tau$, where the injected noise in local clients is reduced by $1/p_c$ times. Now we adjust the analysis as follows
\begin{theorem}[Restatement of Theorem \ref{correlated_noise_main}]\label{correlated_noise_supp} Assume assumptions \ref{def:smooth}, \ref{def:strong_convex}, and \ref{def:variance} hold.  Consider Algorithm \ref{alg:alg_main_text_different_seeds} with a correlation coefficient $\rho\in[0, 1]$, a fixed learning rate $\eta\in (0, \frac{1}{2L}]$ and $\lrn{\theta_0^c-\theta_*}_2^2\leq d\mathcal{D}^2$ for any $c\in[N]$, we have
\begin{align*}
    W_2(\mu_{k}, \pi) &\leq  \left(1-\frac{\eta m}{4}\right)^k \cdot \bigg(\sqrt{2d}\big(\mathcal{D} +  \sqrt{\tau/m} \big)\bigg)+30\kappa\sqrt{{\eta} m d} \cdot \sqrt{((K-1)^2+\kappa)H_{\rho}},\notag
\end{align*}
where $\mu_k$ denotes the probability measure of $\theta_k$, $H_{\rho}, \kappa$ and $\gamma^2$ are defined as Definition~\ref{def:H_kappa_gamma}.
\end{theorem}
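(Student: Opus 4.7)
The plan is to follow the same structure as the proof of Theorem \ref{main_theorem}, observing that when correlated noise is introduced the only substantive change is in the per-client noise variance, while the aggregated one-step dynamics keeps the same form. First I would verify that the synchronized update Eq.(\eqref{fed_avg_langevin_dynamics_pp}) is formally identical to Eq.(\eqref{fed_avg_langevin_dynamics}): because $\dot\xi_k$ is independent of $\xi_k^c$ and the $\xi_k^c$ are mutually independent standard Gaussians, the combined noise $\xi_k = \rho\dot\xi_k + \sqrt{1-\rho^2}\sum_{c=1}^N\sqrt{p_c}\,\xi_k^c$ has variance $\rho^2 I_d + (1-\rho^2)\sum_{c=1}^N p_c I_d = I_d$, so $\xi_k$ remains standard $d$-dimensional Gaussian. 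In particular, the continuous-time auxiliary process $(\bar\theta_t)$ in Eq.(\eqref{continuous_dynamics}) and the discretization estimate Lemma~\ref{lem:discretization} are unchanged by the choice of $\rho$, since synchronization is taken to happen at every time in the continuous-time limit (see Section \ref{connection_to_fdLD}).

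The second step is to recognize that the local update Eq.(\eqref{local_client_diff_seeds}) has per-client Gaussian increment with effective variance $2\eta\,T_{c,\rho}$, where $T_{c,\rho}=\tau(\rho^2+(1-\rho^2)/p_c)$. Consequently, when computing $\sum_{c=1}^N p_c \E{\|\theta_k^c-\theta_k\|_2^2}$ the Brownian contribution is controlled by $\tau(\rho^2+N(1-\rho^2))$ rather than $\tau N$, while the drift contribution is governed by $\max_{c\in[N]} T_{c,\rho}$ through the bound $\E{\|\theta_k^c-\theta_*\|_2^2}\leq dH_\rho$. Lemma~\ref{divergence} already states the resulting bound in exactly this generality, so this step is a direct invocation rather than a re-derivation.

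Third, I would mimic the proof of Lemma~\ref{one_step_Dalalyan} under synchronous coupling $W_{(k+1)\eta}-W_{k\eta}=\sqrt{\eta}\,\xi_k$. The decomposition Eq.(\eqref{decompose_full}) applies verbatim because the marginal dynamics of $\theta_{k+1}$ is unchanged; Lemma~\ref{contraction} and Lemma~\ref{lem:total_variance} are also unaffected by $\rho$. Substituting the $\rho$-aware divergence bound and using $\max_{c}T_{c,\rho}/m\leq H_\rho$, $\sigma^2/m^2\leq H_\rho$, and $\gamma^2/(m^2 d)\leq H_\rho$ to absorb residual terms, we obtain the one-step recursion
\begin{align*}
    W_2^2(\mu_{k+1},\pi)\leq \bigl(1-\tfrac{\eta m}{2}\bigr)\,W_2^2(\mu_k,\pi)+400\,\eta^2 d L^2 H_\rho\bigl((K-1)^2+\kappa\bigr),
\end{align*}
which is the exact analog of Lemma~\ref{one_step_Dalalyan} with $H_0$ replaced by $H_\rho$.

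Finally, I would iterate this recursion geometrically as in Eq.(\eqref{one_step_squared}), use Lemma~\ref{lem:W2_init_bound} together with $\|\theta_0^c-\theta_*\|_2^2\leq d\mathcal{D}^2$ to bound $W_2(\mu_0,\pi)\leq\sqrt{2d}(\mathcal{D}+\sqrt{\tau/m})$, and take square roots using $(1-\eta m/2)\leq (1-\eta m/4)^2$ to arrive at the stated bound. The only real bookkeeping obstacle is to confirm that every occurrence of $\tau N$ or $\tau$ in the original argument is correctly replaced by the appropriate $T_{c,\rho}$-expression and then absorbed into $m H_\rho$; once the substitution $H_0\mapsto H_\rho$ is tracked consistently, no new analytic ideas are required and the structural argument carries over verbatim.
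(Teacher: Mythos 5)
Your proposal is correct and follows essentially the same route as the paper: the paper's own proof of Theorem~\ref{correlated_noise_supp} is a one-line remark that the argument of Theorem~\ref{main_theorem} carries over with $H_0$ replaced by $H_\rho$, and your writeup is a faithful (and more detailed) expansion of exactly that substitution. All the key checkpoints you identify --- the aggregated noise remaining standard Gaussian, Lemma~\ref{divergence} already being stated in $\rho$-general form, and the absorption $\tau(\rho^2+N(1-\rho^2))\leq \max_c T_{c,\rho}\leq mH_\rho$ --- are the right ones and hold as claimed.
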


\begin{proof}
The proof follows the same techniques as in Theorem \ref{main_theorem} except that $H_0$ is generalized to $H_{\rho}$ to accommodate to the changes of the \emph{injected noise}. The details are omitted.
\end{proof}

\begin{algorithm*}[h]\caption{Hybrid federated averaging Langevin dynamics algorithm. Denote by $\theta_k^c$ the model parameter in the $c$-th client at the $k$-th step. Denote the immediate result of one step SGLD update from $\theta_k^c$ by $\beta_k^c$. $\xi_k^c$ is an independent standard $d$-dimensional Gaussian vector at iteration $k$ for each client $c\in[N]$ and $\dot{\xi}_k$ is a $d$-dimensional standard Gaussian vector shared by all the clients. $\rho$ denotes the correlation coefficient of the injected noises. A global synchronization is conducted every $K$ steps. This is a  clean version of Algorithm~\ref{alg:alg_main_text_partial_main} based on full device updates for ease of analysis.}\label{alg:alg_main_text_different_seeds}
\begin{equation*}
    \beta_{k+1}^c=\theta_k^c-\eta\nabla \tilde f^c(\theta_k^c)+\sqrt{2\eta\tau \rho^2}\dot\xi_k + \sqrt{2\eta(1-\rho^2)\tau/p_c}\xi_k^c,
\end{equation*}
\begin{equation*}  
\theta_{k+1}^c=\left\{  
             \begin{array}{lr}  
             \beta_{k+1}^c \qquad\qquad\qquad \text{if } k+1 \text{ mod } K\neq 0 \\  
              & \\
             \sum_{c=1}^N p_c \beta_{k+1}^c \ \qquad \text{if } k+1 \text{ mod } K=0.
             \end{array}  
\right.  
\end{equation*} 
\end{algorithm*}

\section{Partial device participation}\label{sec:partial_device_participation}

Full device participation enjoys appealing convergence properties. However, it suffers from the straggler's effect in real-world applications, where the communication is limited by the slowest device. Partial device participation handles this issue by only allowing a small portion of devices in each communication and greatly increased the communication efficiency 
in a federated network.

\subsection{Unbiased sampling schemes}
\label{unbiased_sampling_schems_appendix}
The first device-sampling scheme \text{I} \citep{LS20} selects a total of $S$ devices, where the $c$-th device is selected with a probability $p_c$. The first theoretical justification for convex optimization has been proposed by \citet{lhy+20}. 

\paragraph{(Scheme \text{I}: with replacement).}
Assume $\mathcal{S}_k=\{n_1, n_2, \cdots, n_S\}$, where $n_j\in [N]$ is a random number that takes a value of $c$ with a probability $p_c$ for any $j\in\{1,2,\cdots, S\}$. The synchronization step follows that $\theta_{k}=\frac{1}{S}\sum_{c\in \mathcal{S}_k}\theta_{k}^c$.

Another strategy is to uniformly select $S$ devices without replacement. We follow  \citet{lhy+20} and assume $S$ indices are selected uniformly without replacement and the synchronization step is the same as before. In addition, the convergence also requires an additional assumption on balanced data \citep{lhy+20}. 
\paragraph{(Scheme \text{II}: without replacement).}  Assume $\mathcal{S}_k=\{n_1, n_2, \cdots, n_S\}$, where $n_j\in [N]$ is a random number that takes a value of $c$ with a probability $\frac{1}{S}$ for any $j\in\{1,2,\cdots, S\}$. Assume the data is balanced such that $p_1=\cdots=p_N=\frac{1}{N}$. The synchronization step follows that $\theta_{k}=\frac{N}{S}\sum_{c\in \mathcal{S}_k} p_c\theta_{k}^c=\frac{1}{S}\sum_{c\in \mathcal{S}_k} \theta_{k}^c$.


\begin{algorithm*}[h]\caption{Hybrid federated averaging Langevin dynamics algorithm with partial device participation. $\xi_k^c$ is the independent Gaussian vector proposed by each client $c\in[N]$ and $\dot{\xi}_k$ is a unique Gaussian vector shared by all the clients. $\rho$ denotes the correlation coefficient. A global synchronization is conducted every $K$ steps. $\mathcal{S}_k$ is a subset that contains $S$ indices according to a device-sampling rule based on scheme \text{I} or \text{II}. This is a clean version of Algorithm~\ref{alg:alg_main_text_partial_main} for ease of analysis.}\label{alg:alg_main_text_partial}
\begin{equation*}
    \beta_{k+1}^c=\theta_k^c-\eta\nabla \tilde f^c(\theta_k^c)+\sqrt{2\eta\tau \rho^2}\dot\xi_k + \sqrt{2\eta(1-\rho^2)\tau/p_c}\xi_k^c,
\end{equation*}
\begin{equation*}  
\theta_{k+1}^c=\left\{  
             \begin{array}{lr}  
             \beta_{k+1}^c \qquad\qquad\qquad\quad\text{if } k+1 \text{ mod } K\neq 0 \\  
              & \\
             \sum_{c\in \mathcal{S}_{k+1}} \frac{1}{S} \beta_{k+1}^c \ \qquad \text{if } k+1 \text{ mod } K=0.
             \end{array}  
\right.  
\end{equation*} 
\end{algorithm*}

\begin{lemma}[Unbiased sampling scheme]\label{unbiased_scheme}
For any $k \text{ mod } K=0$ based on scheme \text{I} or \text{II}, we have
\begin{align*}
    \E{\theta_k}=\E{\sum_{c\in \mathcal{S}_k} \theta_k^c}=\beta_k:=\sum_{c=1}^N p_c \beta_k^c.
\end{align*}
\end{lemma}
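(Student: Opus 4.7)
}
The plan is to verify the claimed identity by a direct computation for each of the two sampling schemes separately, using only linearity of expectation together with the definition of each sampling rule. Since the synchronization step at time $k$ (with $k \bmod K=0$) sets $\theta_k = \frac{1}{S}\sum_{c\in\mathcal{S}_k}\beta_k^c$ (under both Scheme I and Scheme II, per Algorithm \ref{alg:alg_main_text_partial}), the task reduces to showing $\E{\frac{1}{S}\sum_{c\in\mathcal{S}_k}\beta_k^c}=\sum_{c=1}^N p_c \beta_k^c$. Crucially, conditioned on the full history up to step $k-1$, the local values $\{\beta_k^c\}_{c=1}^N$ are deterministic, while the random set $\mathcal{S}_k$ is independent of these values, so the expectation is purely over the randomness of the sampling rule.

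For Scheme I, I would write $\mathcal{S}_k=\{n_1,\ldots,n_S\}$ with $n_j$ independent, each taking value $c\in[N]$ with probability $p_c$. Then by linearity
\begin{align*}
\E{\frac{1}{S}\sum_{j=1}^S \beta_k^{n_j}} = \frac{1}{S}\sum_{j=1}^S \E{\beta_k^{n_j}} = \frac{1}{S}\sum_{j=1}^S \sum_{c=1}^N p_c\, \beta_k^c = \sum_{c=1}^N p_c \beta_k^c,
\end{align*}
which matches the right-hand side $\beta_k$.

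For Scheme II, I would use the balanced-data assumption $p_c=1/N$ and the standard fact that under uniform sampling of $S$ distinct indices from $[N]$, each $c\in[N]$ belongs to $\mathcal{S}_k$ with marginal probability $S/N$. Hence
\begin{align*}
\E{\frac{1}{S}\sum_{c\in\mathcal{S}_k}\beta_k^c} = \frac{1}{S}\sum_{c=1}^N \Pr[c\in\mathcal{S}_k]\,\beta_k^c = \frac{1}{S}\cdot\frac{S}{N}\sum_{c=1}^N \beta_k^c = \sum_{c=1}^N p_c \beta_k^c,
\end{align*}
again matching $\beta_k$.

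There is no substantial obstacle; the only subtlety is careful bookkeeping of the conditioning (so that the $\beta_k^c$ are frozen when we take expectation over $\mathcal{S}_k$) and recognizing that the ``$1/S$'' normalization together with the marginal inclusion probability $S/N$ in Scheme II cancels to give the weight $p_c=1/N$ required for an unbiased aggregate. I would state the result conditional on the $\sigma$-algebra generated by the local updates (so that the outer expectation by the tower property yields the unconditional form in the lemma), and note that no smoothness, convexity, or noise assumption is used—this is a purely combinatorial identity about the two sampling rules.
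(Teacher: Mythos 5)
Your proof is correct and follows essentially the same route as the paper: linearity of expectation over the sampling randomness, conditioning on the local iterates. Your handling of Scheme II via the marginal inclusion probability $\Pr[c\in\mathcal{S}_k]=S/N$ is slightly more careful than the paper's (which reuses the per-draw computation and simply notes $p_1=\cdots=p_N$), but the substance is identical.
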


\begin{proof}

According to the definition of scheme \text{I} or \text{II}, we have $\theta_{k}=\frac{1}{S}\sum_{c\in \mathcal{S}_k} \theta_{k}^c$. In what follows, $\E{\theta_k}=\frac{1}{S}\E{\sum_{c\in \mathcal{S}_k} \theta_{k}^c}=\frac{1}{S}\sum_{c_0\in\mathcal{S}_k}\sum_{c=1}^N p_c \beta_k^c=\sum_{c=1}^N p_c \beta_k^c$, where $p_1=p_2=\cdots=p_N$ for scheme \text{II} in particular.
\end{proof}

\subsection{Bounded divergence based on partial device}

\begin{lemma}[Bounded divergence based on partial device]\label{divergence_partial}
Assume assumptions  \ref{def:smooth}, \ref{def:strong_convex}, and \ref{def:variance} hold.  Consider Algorithm \ref{alg:alg_main_text_partial} with a correlation coefficient $\rho\in[0, 1]$, any learning rate $\eta \in (0 , 2/m)$ and $\lrn{\theta_0^c-\theta_*}_2^2\leq d\mathcal{D}^2$ for any $c\in[N]$, we have the following results

For Scheme \text{I}, the divergence between $\theta_k$ and $\beta_k$ is upper bounded by
\begin{align*}
    \E{\|\beta_k-\theta_k \|_2^2}&\leq \frac{112}{S}K^2\eta^2 dL^2H_{\rho} +\frac{8}{S}K\eta d \tau(\rho^2+N(1-\rho^2)).\notag
\end{align*}

For Scheme \text{II}, assuming the data is balanced such that $p_1=\cdots=p_N=\frac{1}{N}$, the divergence between $\theta_k$ and $\beta_k$ is upper bounded by
\begin{align*}
    \E{\|\beta_k-\theta_k \|_2^2}&\leq \frac{N-S}{S(N-1)} \bigg(112K^2\eta^2 dL^2H_{\rho} +8K\eta d \tau(\rho^2+N(1-\rho^2))\bigg).\notag
\end{align*}
where $H_{\rho}, \kappa$ and $\gamma^2$ are defined as Definition~\ref{def:H_kappa_gamma}. 
\end{lemma}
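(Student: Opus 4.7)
The plan is to condition on the local-update randomness (i.e.\ treat $\{\beta_k^c\}_{c=1}^N$ as fixed) and exploit that the sampling $\mathcal{S}_k$ is performed \emph{independently} of the local dynamics, so the only remaining randomness in $\theta_k = \frac{1}{S}\sum_{c\in \mathcal{S}_k}\beta_k^c$ is the device selection. By Lemma~\ref{unbiased_scheme}, $\E[\theta_k \mid \{\beta_k^c\}] = \beta_k$, so $\E\|\beta_k - \theta_k\|_2^2$ equals the variance of the sample mean, which I would then bound by a known moment of the empirical distribution $\{\beta_k^c\}$ and finally take the outer expectation.

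For Scheme \text{I}, the indices $n_1,\dots,n_S$ are i.i.d.\ with $\Pr[n_j=c]=p_c$. Independence gives, conditional on $\{\beta_k^c\}$,
\[
\E\bigl[\|\theta_k-\beta_k\|_2^2 \bigm| \{\beta_k^c\}\bigr] \;=\; \frac{1}{S}\,\E\bigl[\|\beta_k^{n_1}-\beta_k\|_2^2\bigm|\{\beta_k^c\}\bigr] \;=\; \frac{1}{S}\sum_{c=1}^N p_c\|\beta_k^c-\beta_k\|_2^2 .
\]
For Scheme \text{II}, sampling is uniform without replacement from $[N]$ (with $p_c=1/N$), so the standard finite-population correction for the variance of a without-replacement sample mean yields the factor $\frac{N-S}{N-1}$, giving
\[
\E\bigl[\|\theta_k-\beta_k\|_2^2 \bigm| \{\beta_k^c\}\bigr] \;=\; \frac{N-S}{S(N-1)}\,\frac{1}{N}\sum_{c=1}^N \|\beta_k^c-\beta_k\|_2^2 \;=\; \frac{C_S}{S}\sum_{c=1}^N p_c\|\beta_k^c-\beta_k\|_2^2 ,
\]
with $C_S = \frac{N-S}{N-1}$. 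In both cases it remains to bound the client-wise dispersion $\sum_c p_c\,\E\|\beta_k^c-\beta_k\|_2^2$ after taking the outer expectation.

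For this last step I would invoke Lemma~\ref{divergence} (bounded divergence), but applied to the pre-synchronization sequence $\{\beta_k^c\}$ rather than the post-synchronization $\{\theta_k^c\}$. The only change is that at a synchronization step $k$, each $\beta_k^c$ has been advanced through exactly $K$ local Langevin updates since the last sync (as opposed to at most $K-1$ for $\theta_k^c$ at a non-sync step), so replaying the proof of Lemma~\ref{divergence} verbatim replaces every $(K-1)$ by $K$ and gives
\[
\sum_{c=1}^N p_c \,\E\|\beta_k^c-\beta_k\|_2^2 \;\le\; 112\,K^2\eta^2 d L^2 H_\rho \;+\; 8\,K\eta d \tau(\rho^2+N(1-\rho^2)).
\]
Multiplying by $C_S/S$ and taking the outer expectation yields the two claimed bounds.

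The main obstacle is not in the variance decomposition itself (that is routine once independence of $\mathcal{S}_k$ and $\{\beta_k^c\}$ is spelled out), but in justifying the $K$-versus-$(K-1)$ swap cleanly. The proof of Lemma~\ref{divergence} uses a uniform $\ell_2$ moment bound on the local iterates together with a careful accounting of the $O(\eta)$ Brownian contribution; I need to check that the same uniform moment control (which relies on Assumption~\ref{def:strong_convex_main} and the stationary scale captured in $H_\rho$) transports to $\beta_k^c$, and that the divergence recursion remains stable for one extra local step. Once this is verified, the two estimates follow immediately by plugging in $C_S\in\{1,\tfrac{N-S}{N-1}\}$.
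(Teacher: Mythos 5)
Your proposal is correct and follows essentially the same route as the paper: both reduce $\E\|\beta_k-\theta_k\|_2^2$ to the sampling variance of the device-selection mean (i.i.d.\ computation for Scheme I, inclusion-probability/finite-population correction for Scheme II), and then bound the client-wise dispersion $\sum_c p_c\E\|\beta_k^c-\beta_k\|_2^2$ by centering at the last synchronization point $\theta_{k_0}$ and replaying the argument of Lemma~\ref{divergence} over the $K$ (rather than $K-1$) local steps, with the uniform moment bound of Lemma~\ref{bounded_gradient_l2} supplying the needed control on $\E\|\nabla\tilde f^c(\theta_k^c)\|_2^2$ for the extra step.
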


\begin{proof} We prove the bounded divergence for the two schemes, respectively.

For \textbf{scheme \text{I}} with replacement, $\bar\theta_{k}=\sum_{c\in \mathcal{S}_k} \frac{1}{S} \beta_{k}^c$ for a subset of indices $\mathcal{S}_k$. Taking expectation with respect to $\mathcal{S}_{k}$,
we have
\begin{align}\label{scheme_1}
    \E{\lrn{\theta_{k}-\beta_{k}}_2^2}=\frac{1}{S^2}\sum_{i=1}^S\E{\lrn{\beta_{k}^{n_i}-\beta_{k}}_2^2}=\frac{1}{S}\sum_{c=1}^N p_c \lrn{\beta_{k}^c-\beta_{k}}_2^2,
\end{align}
where the first equality follows by the independence and unbiasedness of $\theta_{k}^{n_i}$ for any $i\in [S]$. 

To further upper bound Eq.(\eqref{scheme_1}), we follow the same technique as in Lemma \ref{divergence}. Since $k\text{ mod } K=0$, $k_0=k-K$ is also the communication time, which yields the same $\theta_{k_0}^{c}$ for any $c\in[N]$. in what follows,
\begin{align}\label{scheme_1_step2}
    \sum_{c=1}^N p_c\lrn{\beta_{k}^c-\beta_{k}}_2^2&=\sum_{c=1}^N p_c \lrn{\beta_k^c-\theta_{k_0}-(\beta_k-\theta_{k_0})}_2^2\notag\\
&\leq \sum_{c=1}^N p_c \lrn{\beta_k^c-\theta_{k_0}}_2^2,
\end{align}
where the last inequality follows by $\beta_{k}=\sum_{c=1}^N p_c \beta_{k}^c$ and $\E{\lrn{x-\E{x}}_2^2}\leq \E{\lrn{x}_2^2}$. Combining Eq.(\eqref{scheme_1}) and Eq.(\eqref{scheme_1_step2}), we have
\begin{align*}
    \E{\lrn{\theta_{k}-\beta_{k}}_2^2}&\leq \frac{1}{S}\sum_{c=1}^N p_c \lrn{\beta_k^c-\theta_{k_0}}_2^2\notag\\
    &\leq \frac{1}{S}\sum_{c=1}^N p_c \lrn{\beta_k^c-\theta^c_{k_0}}_2^2\notag\\
    &\leq \frac{1}{S}\sum_{c=1}^N p_c \E{\sum_{k=k_0}^{k-1} 2K\eta^2\lrn{\nabla\tilde f^c(\theta_k^c)}_2^2 + 4K\eta d \tau\big(\rho^2+(1-\rho^2)/p_c\big)}\notag\\
&\leq \frac{1}{S}\sum_{c=1}^N p_c \left(\sum_{k=k_0}^{k-1} 2K\eta^2\E{\lrn{\nabla\tilde f^c(\theta_k^c)}_2^2}+4K\eta d \tau\big(\rho^2+(1-\rho^2)/p_c\big)\right)\notag\\
&\leq \frac{28}{S}K^2\eta^2 dL^2 H_{\rho} +\frac{4}{S}K\eta d \tau(\rho^2+N(1-\rho^2))\notag\\
\end{align*}
where the last inequality follows a similar argument as in Lemma \ref{divergence}.

For \textbf{scheme \text{II}}, given $p_1=p_2=\cdots=p_N=\frac{1}{N}$, we have $\theta_{k}=\frac{1}{S}\sum_{c\in \mathcal{S}_k} \beta_{k}^{c}$, which leads to
\begin{align*}
    &\quad\E{\lrn{\theta_{k}-\beta_{k}}_2^2}=\E{\lrn{\frac{1}{S}\sum_{c\in \mathcal{S}_k} \beta_{k}^{c}-\beta_{k}}_2^2}=\frac{1}{S^2}\E{\lrn{\sum_{c=1}^N \mathbb{I}_{c\in \mathcal{S}_k}(\beta_{k}^c-\beta_{k})}_2^2},
\end{align*}
where $\mathbb{I}_{A}$ is an indicator function that equals to 1 if the event $A$ happens.

Plugging the facts that $\mathbb{P}(c\in \mathcal{S}_{k})=\frac{S}{N}$ and $\mathbb{P}(c_1,c_2\in \mathcal{S}_{k})=\frac{S(S-1)}{N(N-1)}$ for any $c_1\neq c_2\in [N]$ into the above equation, we have
\begin{align*}
    &\quad\E{\lrn{\theta_{k}-\beta_{k}}_2^2}\notag\\
    &=\frac{1}{S^2}\bigg[\sum_{c\in [N]} \mathbb{P}(c\in \mathcal{S}_{k}) \lrn{\beta_{k}^c-\beta_{k}}_2^2+\sum_{c_1\neq c_2} \mathbb{P}(c_1,c_2\in \mathcal{S}_{k})\langle \beta_{k}^{c_1}-\beta_{k}, \beta_{k}^{c_2}-\beta_{k} \rangle \bigg]\notag\\
    &=\frac{1}{SN}\sum_{c=1}^N\lrn{\beta_{k}^c-\beta_{k}}_2^2+\sum_{c_1\neq c_2}\frac{S-1}{SN(N-1)} \langle \beta_{k}^{c_1}-\beta_{k}, \beta_{k}^{c_2}-\beta_{k} \rangle\notag\\
    &=\frac{1-\frac{S}{N}}{S(N-1)}\sum_{c=1}^N\lrn{\beta_{k}^c-\beta_{k}}_2^2,
\end{align*}
where the last equality holds since $\sum_{c\in[N]}\lrn{\beta_{k}^c-\beta_{k}}_2^2 +\sum_{c_1\neq c_2}\langle \beta_{k}^{c_1}-\beta_{k},\beta_{k}^{c_2}-\beta_{k}\rangle=\lrn{\beta_{k}-\beta_{k}}_2^2=0$.

Eventually, we have
\begin{align*}
    \E{\lrn{\theta_{k}-\beta_{k}}_2^2}&=\frac{N-S}{S(N-1)} \E{\frac{1}{N} \sum_{c=1}^N\lrn{\beta_{k}^c-\beta_{k}}_2^2}\notag\\
    &\leq\frac{N-S}{S(N-1)} \E{\frac{1}{N} \sum_{c=1}^N\lrn{\beta_{k}^c-\theta_{k_0}}_2^2}\notag\\
    &\leq \frac{N-S}{S(N-1)} \bigg(28 K^2\eta^2 dL^2 H_{\rho} +4K\eta d \tau\big(\rho^2+N(1-\rho^2)\big)\bigg),
\end{align*}
where the first inequality follows a similar argument as in Eq.(\eqref{scheme_1_step2}) and the last inequality follows by Lemma \ref{divergence}.

\end{proof}

\subsection{Convergence via partial device participation}

\begin{theorem}[Restatement of Theorem~\ref{thm:partial_II}]\label{theorem_partial} Assume assumptions \ref{def:smooth}, \ref{def:strong_convex}, and \ref{def:variance} hold. Consider Algorithm \ref{alg:alg_main_text_partial} with a correlation coefficient $\rho\in[0, 1]$, a fixed learning rate $\eta\in (0, \frac{1}{2L}]$ and $\lrn{\theta_0^c-\theta_*}_2^2\leq d\mathcal{D}^2$ for any $c\in[N]$, we have
\begin{align*}
    W_2(\mu_{k}, \pi) &\leq  \left(1-\frac{\eta m}{4}\right)^k \cdot \bigg(\sqrt{2d}\big(\mathcal{D} +  \sqrt{\tau/m} \big)\bigg)\notag\\
    &\qquad+30\kappa\sqrt{\eta m d } \cdot \sqrt{ H_{\rho}((K-1)^2+\kappa)}+2\sqrt{\frac{C_K d\tau}{Sm}(\rho^2+N(1-\rho^2)) C_S},
\end{align*}
where $C_K=\frac{\eta m K}{1-e^{-\frac{\eta m K}{2}}}$, $C_S=1$ for \emph{Scheme I} and $C_S=\frac{N-S}{N-1}$ for \emph{Scheme II}.
\end{theorem}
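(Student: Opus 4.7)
The plan is to extend the full-device one-step analysis of Lemma~\ref{one_step_Dalalyan} (equivalently Theorem~\ref{correlated_noise_supp}) by carefully accounting for the additional subsampling noise that appears only at synchronization times. Between consecutive synchronization rounds the partial-device iterate $\theta_k$ and the virtual full-device average $\beta_k=\sum_{c=1}^N p_c\beta_k^c$ evolve under identical local dynamics, so they differ only at steps with $k\bmod K=0$, where $\theta_k=\frac{1}{S}\sum_{c\in\mathcal{S}_k}\beta_k^c$ is an unbiased sample of $\beta_k$ by Lemma~\ref{unbiased_scheme} with variance bounded by Lemma~\ref{divergence_partial}.

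First I would replay the synchronous-coupling one-step bound from Lemma~\ref{one_step_Dalalyan}, but with the generalized temperature $H_{\rho}$ replacing $H_0$ in order to accommodate correlated noise exactly as in Theorem~\ref{correlated_noise_supp}. To insert the sampling-induced gap, I would apply an AM-GM split at each synchronization time,
\begin{align*}
    \E{\|\bar\theta_{(k+1)\eta}-\theta_{k+1}\|_2^2}\leq (1+q')\,\E{\|\bar\theta_{(k+1)\eta}-\beta_{k+1}\|_2^2}+(1+1/q')\,\E{\|\beta_{k+1}-\theta_{k+1}\|_2^2},
\end{align*}
choosing $q'$ of order $\eta m$ so that the $(1+q')$ factor perturbs the $(1-\eta m/2)$ contraction only in higher-order terms. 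The net effect is a one-step recursion
\begin{align*}
    W_2^2(\mu_{k+1},\pi)\leq \bigl(1-\tfrac{\eta m}{2}\bigr)\,W_2^2(\mu_k,\pi)+C_1\eta^2 dL^2 H_{\rho}\bigl((K-1)^2+\kappa\bigr)+\Delta_{k+1},
\end{align*}
where $\Delta_{k+1}$ vanishes unless $(k+1)\bmod K=0$ and, in that case, is bounded by $C_2 K\eta d\tau(\rho^2+N(1-\rho^2))C_S/S$ via Lemma~\ref{divergence_partial}, with $C_S=1$ for Scheme~I and $C_S=(N-S)/(N-1)$ for Scheme~II.

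Iterating this recursion reproduces the first two terms of the bound exactly as in Theorems~\ref{main_theorem} and~\ref{correlated_noise_supp}: the geometric decay of the initial $W_2^2(\mu_0,\pi)$ combined with Lemma~\ref{lem:W2_init_bound} gives the $(1-\eta m/4)^k\sqrt{2d}(\mathcal D+\sqrt{\tau/m})$ contribution, and the summed per-step discretization error, divided by $1-(1-\eta m/2)\asymp \eta m$, gives the $30\kappa\sqrt{\eta md}\sqrt{H_{\rho}((K-1)^2+\kappa)}$ middle term. For the partial-device piece, since $\Delta_j$ is supported on the arithmetic progression $j\in\{K,2K,\ldots\}$, its cumulative contribution satisfies
\begin{align*}
    \sum_{i\geq 0}\bigl(1-\tfrac{\eta m}{2}\bigr)^{iK}\Delta\leq \frac{\Delta}{1-\bigl(1-\tfrac{\eta m}{2}\bigr)^K}\leq \frac{\Delta}{1-e^{-\eta mK/2}}=\frac{\Delta\cdot C_K}{\eta mK}.
\end{align*}
Plugging in $\Delta\asymp K\eta d\tau(\rho^2+N(1-\rho^2))C_S/S$ produces a squared irreducible bias of order $\frac{C_K d\tau}{Sm}(\rho^2+N(1-\rho^2))C_S$; applying $\sqrt{a+b+c}\leq \sqrt{a}+\sqrt{b}+\sqrt{c}$ after taking square roots yields the claimed three-term bound.

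The main obstacle I anticipate is twofold. First, the AM-GM split must be tuned so that the extra $(1+q')$ factor never spoils the geometric contraction rate; the choice $q'\asymp\eta m$ makes this work but forces careful bookkeeping of dimensionless constants so that the middle term still comes out with the same $30\kappa$ prefactor as in the full-device theorems. Second, Lemma~\ref{divergence_partial} carries a companion term of order $K^2\eta^2 dL^2 H_{\rho}/S$ alongside the genuine $\eta$-independent bias; this piece is strictly dominated by the $(K-1)^2+\kappa$ middle term after being scaled by $C_S/S\leq 1$, but verifying this absorption uniformly over the admissible range of $K$, $S$, and $\rho$ is the most delicate accounting step of the proof.
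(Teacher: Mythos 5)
Your overall architecture matches the paper's: a one-step recursion with $H_{\rho}$ in place of $H_0$, an extra bias term supported only on synchronization times $k\equiv 0\ (\mathrm{mod}\ K)$ controlled by Lemma~\ref{divergence_partial}, and a geometric summation over that arithmetic progression producing the factor $\frac{1}{1-(1-\eta m/2)^K}\leq \frac{C_K}{\eta m K}$; the absorption of the $K^2\eta^2 dL^2H_{\rho}/S$ companion term from Lemma~\ref{divergence_partial} into the middle term is also how the paper proceeds. However, the way you splice in the sampling gap is a genuine flaw. You propose the AM--GM split
$\E{\|\bar\theta_{(k+1)\eta}-\theta_{k+1}\|_2^2}\leq (1+q')\E{\|\bar\theta_{(k+1)\eta}-\beta_{k+1}\|_2^2}+(1+1/q')\E{\|\beta_{k+1}-\theta_{k+1}\|_2^2}$ with $q'\asymp\eta m$. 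To preserve the $(1-\eta m/2)$ contraction you indeed need $q'\lesssim \eta m$, but then $(1+1/q')\asymp 1/(\eta m)$, so the per-round sampling variance $\asymp K\eta d\tau(\rho^2+N(1-\rho^2))C_S/S$ from Lemma~\ref{divergence_partial} is inflated to $\asymp K d\tau(\cdots)C_S/(Sm)$; after the geometric summation the squared irreducible bias becomes of order $\frac{C_K d\tau}{\eta m^2 S}(\cdots)C_S$, i.e.\ larger than the claimed $\frac{C_K d\tau}{Sm}(\cdots)C_S$ by a factor $1/(\eta m)$ that blows up as $\eta\to 0$. Your stated bound on $\Delta_{k+1}$ silently drops this $(1+1/q')$ factor, and no choice of $q'$ can simultaneously keep the contraction and the bias at the claimed orders.

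The fix is the one the paper uses, and you already have the ingredient in hand: by Lemma~\ref{unbiased_scheme} the sampling set $\mathcal{S}_{k+1}$ is drawn independently of the local iterates, so $\E{\theta_{k+1}\mid\{\beta_{k+1}^c\}_c,\bar\theta_{(k+1)\eta}}=\beta_{k+1}$ and the cross term $\E{\langle \bar\theta_{(k+1)\eta}-\beta_{k+1},\,\beta_{k+1}-\theta_{k+1}\rangle}$ vanishes \emph{exactly}. The decomposition is therefore an equality, $\E{\|\bar\theta_{(k+1)\eta}-\theta_{k+1}\|_2^2}=\E{\|\bar\theta_{(k+1)\eta}-\beta_{k+1}\|_2^2}+\E{\|\beta_{k+1}-\theta_{k+1}\|_2^2}$, no AM--GM weight is needed, the second term enters with coefficient $1$, and the rest of your summation goes through to give the stated three-term bound.
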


\begin{proof}

Note that 
\begin{align*}
&\quad\ \E{\lrn{\bar\theta_{(k+1)\eta}-\theta_{k+1}}_2^2}\notag\\
&= \E{\lrn{\bar\theta_{(k+1)\eta}-\beta_{k+1}+\beta_{k+1}-\theta_{k+1}}_2^2}\notag\\
&= \E{\lrn{\bar\theta_{(k+1)\eta}-\beta_{k+1}}_2^2}+ \E{\lrn{\beta_{k+1}-\theta_{k+1}}_2^2}+\E{2\langle \bar\theta_{(k+1)\eta}-\beta_{k+1}, \beta_{k+1}-\theta_{k+1} \rangle}\notag\\
&= \E{\lrn{\bar\theta_{(k+1)\eta}-\beta_{k+1}}_2^2}+ \E{\lrn{\beta_{k+1}-\theta_{k+1}}_2^2},\notag
\end{align*}
where the last equality follows by the unbiasedness of the device-sampling scheme in Lemma \ref{unbiased_scheme}.

If $k+1 \text{ mod } K\neq 0$, we always have $\beta_{k+1}=\theta_{k+1}$ and $\E{\lrn{\beta_{k+1}-\theta_{k+1}}_2^2}=0$. Following the same argument as in Lemma \ref{one_step_Dalalyan}, both schemes lead to the one-step iterate as follows
\begin{align}\label{non_period}
    W_2^2(\mu_{k+1}, \pi)&\leq  (1-\frac{\eta m}{2}) \cdot W^2_2(\mu_{k}, \pi)+  400\eta^2 d L^2 H_{\rho}((K-1)^2+\kappa).
\end{align}

If $k+1 \text{ mod } K= 0$, combining Lemma \ref{divergence_partial} and Lemma \ref{one_step_Dalalyan}, we have
\begin{align}\label{period}
    W_2^2(\mu_{k+1}, \pi)&\leq  (1-\frac{\eta m}{2}) \cdot W^2_2(\mu_{k}, \pi)+ 450\eta^2 d L^2 H_{\rho}(K^2+\kappa) + \frac{4Kd\eta\tau}{S}(\rho^2+N(1-\rho^2)) C_S,
\end{align}
where $C_S=1$ for \emph{Scheme I} and $C_S=\frac{N-S}{N-1}$ for \emph{Scheme II}.

Repeatedly applying Eq.(\eqref{non_period}) and Eq.(\eqref{period}) and arranging terms, we have that
\begin{align*}
    W_2^2(\mu_{k}, \pi)&\leq  \left(1-\frac{\eta m}{2}\right)^k W^2_2(\mu_{0}, \pi)+  \frac{2}{\eta m}\bigg(450\eta^2 d L^2 H_{\rho}(K^2+\kappa)\bigg)\notag\\
    &\qquad+ \frac{(1-(1-\frac{\eta m}{2})^K)^{\lfloor k/K\rfloor}}{1-(1-\frac{\eta m}{2})^K}\left(  \frac{4Kd\eta\tau}{S} (\rho^2+N(1-\rho^2)) C_S  \right)\notag\\
    &\leq \left(1-\frac{\eta m}{2}\right)^k W^2_2(\mu_{0}, \pi)+ 900\eta m d \kappa^2  H_0((K-1)^2+\kappa)\notag\\
    &\quad\quad+\underbrace{\frac{\eta m K}{1-e^{-\frac{\eta m K}{2}}}}_{C_K} \frac{4Kd\eta\tau}{\eta mK S} (\rho^2+N(1-\rho^2)) C_S ,\notag\\
    &= \left(1-\frac{\eta m}{2}\right)^k W^2_2(\mu_{0}, \pi)+ 900\eta m d \kappa^2  H_0((K-1)^2+\kappa)\notag\\
    &\quad\quad+  \frac{4C_K d\tau}{Sm}(\rho^2+N(1-\rho^2)) C_S,\notag
\end{align*}
where the second inequality follows by $(1-r)^K\leq e^{-rK}$ for any $r\geq 0$.

\end{proof}

\section{Bounding contraction, discretization, and divergence}\label{sec:bouding_contraction_discretization_divergence}

\subsection{Dominated contraction property}
\label{DCP}
\begin{proof}[Proof of Lemma \ref{contraction} ]

Given a client index $c\in[N]$, applying Theorem 2.1.12 \citep{Nesterov04} leads to
\begin{align}
\label{special_inner_product}
    \langle y-x, \nabla f^c(y)-\nabla f^c(x) \rangle\geq \frac{m L}{L+m}\lrn{y-x}_2^2 + \frac{1}{L+m} \lrn{\nabla f^c(y)-\nabla f^c(x)}_2^2,\quad \forall x,y\in\mathbb{R}^d.
\end{align}

In what follows, we have
\begin{align}
\label{iteration}
    &\quad\lrn{{\bar\theta}-\theta-\eta(\nabla f({\bar\theta})-{\bm{Z}})}_2^2\notag\\
    &=\lrn{{\bar\theta}-\theta}_2^2 -2\eta \underbrace{\langle {\bar\theta}-\theta, \nabla f({\bar\theta})-{\bm{Z}}\rangle}_{\mathcal{I}}+\eta^2 \lrn{\nabla f({\bar\theta})-{\bm{Z}}}_2^2.
\end{align}

For the second item $\mathcal{I}$ in the right hand side, {by the definition of} ${\bm{Z}=\sum_{c=1}^N p_c \nabla f^c(\theta^c)}$ and the fact that $\nabla f(\bar\theta)=\sum_{c=1}^N p_c \nabla f^c({\bar\theta})$, we have
\begin{align}
\label{target_contraction}
    \mathcal{I}&=\sum_{c=1}^N p_c\big\langle {\bar\theta}-\theta, \nabla f^c({\bar\theta})-\nabla f^c(\theta^c)\big\rangle\notag\\
    &=\sum_{c=1}^N p_c\big\langle {\bar\theta}-\theta^c+\theta^c-\theta, \nabla f^c({\bar\theta})-\nabla f^c(\theta^c)\big\rangle\notag\\
    &=\sum_{c=1}^N p_c \underbrace{\big\langle \bar\theta-\theta^c, \nabla f^c({\bar\theta})-\nabla f^c(\theta^c)\big\rangle}_{\text{by Eq.} (\eqref{special_inner_product})}+\sum_{c=1}^N p_c\underbrace{\big\langle {\theta}^c-\theta, \nabla f^c({\bar\theta})-\nabla f^c(\theta^c)\big\rangle}_{\text{by the AM-GM inequality}}\notag\\
    &\geq \sum_{c=1}^N p_c \cdot \bigg(\frac{m L}{L+m}\lrn{{\bar\theta}-\theta^c}_2^2 + \frac{1}{L+m} \lrn{\nabla f^c({\bar\theta})-\nabla f^c(\theta^c)}_2^2 \bigg)\notag\\
    & \qquad -\sum_{c=1}^N p_c \cdot \bigg((m+L)\lrn{\theta^c-\theta}_2^2+\frac{1}{{4}(m+L)}\lrn{\nabla f^c({\bar\theta})-\nabla f^c(\theta^c)}_2^2\bigg)\notag\\
    &\geq -(m+L)\sum_{c=1}^N p_c \lrn{\theta^c-\theta}_2^2 + \frac{m L}{L+m}\lrn{{\bar\theta}-\theta^c}_2^2 + \frac{{3}}{{4}(L+m)} \lrn{\nabla f({\bar\theta})-{\bm{Z}}}_2^2,
\end{align}
where the last inequality follows by Jensen's inequality such that
\begin{align*}
    \sum_{c=1}^N p_c \| {\bar\theta}-\theta^c \|_2^2&\geq \lrn{\sum_{c=1}^N p_c  ({\bar\theta}-\theta^c )}_2^2=\| {\bar\theta}-\theta \|_2^2\notag\\
    \sum_{c=1}^N p_c \lrn{\nabla f^c({\bar\theta})-\nabla f^c(\theta^c)}_2^2&\geq  \lrn{\sum_{c=1}^N p_c\bigg(\nabla f^c({\bar\theta})-\nabla f^c(\theta^c)\bigg)}_2^2= \lrn{\nabla f({\bar\theta})-{\bm{Z}}}_2^2.
\end{align*}

Plugging Eq.(\eqref{target_contraction}) into Eq.(\eqref{iteration}), we have
\begin{align*}
    &\quad\lrn{{\bar\theta}-\theta-\eta \cdot (\nabla f({\bar\theta})-{\bm{Z}})}_2^2\notag\\
    &\leq \big(1-\frac{2\eta mL}{m+L}\big) \cdot \| {\bar\theta}-\theta \|_2^2+\eta\big(\underbrace{\eta-\frac{{3}}{{2}{(m+L)}}}_{\leq 0 \text{ if } \eta\leq \frac{1}{m+L}}\big) \cdot \| \nabla f({\bar\theta})-{\bm{Z}} \|_2^2\notag\\
    &\quad\quad+2\eta(m+L)\sum_{c=1}^N p_c \cdot \| \theta^c-\theta \|_2^2 \notag\\
    &\leq \left(1-\eta m\right) \|{\bar\theta}-\theta \|_2^2+4\eta L\sum_{c=1}^N p_c \cdot \| \theta^c-\theta \|_2^2,\notag
\end{align*}
where the last inequality follows by $\frac{2L}{m+L}\geq 1$, $m\leq L$, $1-2a\leq (1-a)^2$ for any $a$, and $\eta\in(0, \frac{1}{m+L}]$.

\end{proof}


\subsection{Discretization error}\label{dis_eroor}
\begin{proof}[Proof of Lemma \ref{lem:discretization}]

In the continuous-time diffusion (\eqref{continuous_dynamics_main}), we have $\nabla f(\bar\theta)=\sum_{c=1}^N p_c f^c(\bar\theta)$ for any $\bar\theta\in \mathbb{R}^d$ and it is straightforward to verify that $f$ satisfies both Assumption \ref{def:smooth} and \ref{def:strong_convex} with the same smoothness factor $L$ and convexity constant $m$. For any $s\in[0,\infty)$, there exists a certain $k \in \mathbb{N}^+$ such that $s\in [k\eta, (k+1)\eta)$. By the continuous diffusion of Eq.(\eqref{continuous_dynamics_main}) , we have
\begin{align*}
    \bar\theta_{s} - \bar\theta_{\eta\lfloor\frac{s}{\eta} \rfloor} = -\int^s_{k\eta}  \nabla f(\bar\theta_{t})d t+\sqrt{2\tau}\int_{k\eta}^s \d \overline{W}_t, 
\end{align*}
which suggests that 
\begin{align*}
    \sup_{ s \in [ k\eta, (k+1)\eta ) } \big\| \bar\theta_{s}-\bar\theta_{\eta\lfloor\frac{s}{\eta} \rfloor} \big\|_2 \leq \bigg\| \int^s_{k\eta}  \nabla f(\bar\theta_t) d t \bigg\|_2+\sup_{ s \in [ k\eta,  (k+1)\eta ) }\lrn{\int_{k\eta}^s \sqrt{2\tau} \d \overline{W}_t}_2.
\end{align*}
We first square the terms on both sides and take Young’s inequality and expectation
\begin{align*}
    \E{\sup_{ s \in [ k\eta, (k+1)\eta ) } \big\| \bar\theta_{s}-\bar\theta_{\eta\lfloor\frac{s}{\eta} \rfloor} \big\|_2^2} &\leq 2\E{\bigg\|\int^s_{k\eta}  \nabla f(\bar\theta_t) dt \bigg\|_2^2}+2\E{\sup_{ s \in [ k\eta,  (k+1)\eta ) }\lrn{\int_{k\eta}^s \sqrt{2\tau } \d \overline{W}_t}_2^2}.
\end{align*}
Then, by Cauchy Schwarz inequality and the fact that $|s-k\eta|\leq \eta$, we have
\begin{align}
    \label{eq:1st_part}
    \E{\sup_{ s \in [ k\eta, (k+1)\eta ) } \big\| \bar\theta_{s}-\bar\theta_{\eta\lfloor\frac{s}{\eta} \rfloor} \big\|_2^2}&\leq 2\eta\E{\int^s_{k\eta} \big\| \nabla f(\bar\theta_t)dt \big\|_2^2 dt}+8\sum_{i=1}^d\E{\int_{k\eta}^s 2\tau  \d t} \notag \\
    &\leq 2\eta^2 \sup_{s}\E{ \big\| \nabla f(\bar\theta_{s}) \big\|_2^2}+16 \eta d\tau ,
\end{align}
where the last inequality follows by Burkholder-Davis-Gundy inequality (\ref{BDG-inequality}) and It\^{o} isometry.

By Young's inequality, the smoothness assumption \ref{def:smooth} and $\nabla f(\theta_*)=0$ since $\theta_*$ is the global minimum of $f$, we have
\begin{align}\label{eq:2nd_part}
    \sup_s \E{ \| \nabla f(\bar\theta_{s}) \|_2^2}
    = & ~ \sup_s \E{\| \nabla f(\bar\theta_{s})-\nabla f(\theta_*)\|_2^2} \notag \\
    \leq & ~ L^2 \sup_s \E{\|\bar\theta_{s }-\theta_*\|_2^2}\notag\\
    \leq & ~ \textcolor{black}{L^2 \frac{d\tau}{m}},
\end{align}
where the second inequality follows by Theorem 17 \citep{ccbj18} since $\bar\theta_0$ is simulated from the stationary distribution $\pi$ and $\bar\theta_s$ is stationary. Combining Eq.(\eqref{eq:1st_part}) and Eq.(\eqref{eq:2nd_part}), we have
\begin{align*}
\E{\sup_{ s \in [ k\eta, (k+1)\eta ) } \big\| \bar\theta_{s}-\bar\theta_{\eta\lfloor\frac{s}{\eta} \rfloor} \big\|_2^2}
&\leq \textcolor{black}{2\eta^2 \kappa L d\tau}+16\eta d\tau .\notag
\end{align*}

\end{proof}

\subsection{Bounded divergence}\label{bounded_divergence}
\begin{proof}[Proof of Lemma \ref{divergence}] For any $k \ge 0$, consider $k_0=K\lfloor \frac{k}{K}\rfloor $ such that $k\leq k_0$ and $\theta_{k_0}^c=\theta_{k_0}$ for any $k\geq 0$. It is clear that  $k-k_0 \leq K-1$ for all $k\geq 0$. Consider the non-increasing learning rate such that $\eta_{k_0}\leq 2\eta_k$ for all $k-k_0\leq K-1$.

By the iterate Eq.(\eqref{fed_avg_langevin_dynamics}), we have
\begin{align*}
&\quad\sum_{c=1}^N p_c\E{\lrn{\theta_k^c-\theta_k}_2^2}\notag\\
&=\sum_{c=1}^N p_c\E{\lrn{\theta_k^c-\theta_{k_0}-(\theta_k-\theta_{k_0})}_2^2}\notag\\
&\leq \sum_{c=1}^N p_c\E{\lrn{\theta_k^c-\theta_{k_0}}_2^2}\notag\\
&\leq \sum_{c=1}^N p_c \E{\sum_{k=k_0}^{k-1} 2 (K-1)\eta_k^2\lrn{\nabla\tilde f^c(\theta_k^c)}_2^2 + 4(K-1)\eta_k d \tau(\rho^2+(1-\rho^2)/p_c)}\notag\\
&\leq \sum_{c=1}^N p_c \bigg(\sum_{k=k_0}^{k-1} 2(K-1)\eta_{k_0}^2\E{\lrn{\nabla\tilde f^c(\theta_k^c)}_2^2}+4(K-1)\eta_{k_0} d \tau(\rho^2+(1-\rho^2)/p_c)\bigg)\notag\\
&\leq 112(K-1)^2\eta_k^2 d L^2 H_{\rho} +8(K-1)\eta_k d\tau(\rho^2 + N(1-\rho^2)),
\end{align*}
where the first inequality holds by $\E{\| \theta-\E{\theta} \|_2^2}\leq \E{\|\theta \|_2^2}$ for a stochastic variable $\theta$; the second inequality follows by $(\sum_{i=1}^{K-1} a_i)^2\leq (K-1)\sum_{i=1}^{K-1} a_i^2$; the last inequality follows by Lemma \ref{bounded_gradient_l2} and  $\eta_{k_0}^2\leq 4\eta_k^2$. $H_{\rho}$ is defined in Definition \ref{def:H_kappa_gamma}.

\end{proof}

\subsection{Bounded variance}
\begin{proof}[Proof of Lemma \ref{lem:total_variance}] By assumption \ref{def:variance}, we have
\begin{align*}
    \E{\lrn{{\bm{Z}}-{\bm{\widetilde Z}}}_2^2}&=\E{\lrn{\sum_{c=1}^N p_c\bigg(\nabla f^c(\theta^c)-\nabla \tilde f^c(\theta^c)\bigg)}_2^2}\\
    &=\sum_{c=1}^N p_c^2\E{\lrn{\nabla f^c(\theta^c)-\nabla \tilde f^c(\theta^c)}_2^2}\\
    &\leq d \sigma^2 \sum_{c=1}^N p_c^2\leq d\sigma^2 \left(\sum_{c=1}^N p_c\right)^2:=d\sigma^2.
\end{align*}

\end{proof}

\section{Uniform upper bound}\label{sec:uniform_upper_bound}

\subsection{Discrete dynamics}

\begin{lemma}[Discrete dynamics]
\label{lem:L2_bound_local}
Assume assumptions  \ref{def:smooth}, \ref{def:strong_convex}, and \ref{def:variance} hold. We consider the generalized formulation in Algorithm \ref{alg:alg_main_text_different_seeds} with the temperature
$$T_{c,\rho}=\tau(\rho^2+(1-\rho^2)/p_c)$$ given a correlation coefficient $\rho$. For any learning rate $\eta \in (0 , 2/m)$ and $\lrn{\theta_0^c-\theta_*}_2^2\leq d\mathcal{D}^2$ for any $c\in[N]$, we have the $\ell_2$ norm upper bound as follows
\begin{align*}
\sup_k\E{\lrn{\theta_k^c-\theta_*}_2^2}\leq d\mathcal{D}^2 + {\frac{6d}{m}\bigg(\max_{c\in[N]} T_{c, \rho}+\frac{ \sigma^2}{m} + \frac{\gamma^2 }{md}\bigg)},\notag
\end{align*}
where $\gamma:=\max_{c\in[N]}\lrn{\nabla f^c(\theta_*)}_2$ and $\theta_*$ denotes the global minimum for the function $f$.
\end{lemma}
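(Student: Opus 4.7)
The plan is to establish a one-step recursion of the form $\E\lrn{\beta_{k+1}^c-\theta_*}_2^2 \leq (1-c\eta m)\E\lrn{\theta_k^c-\theta_*}_2^2 + \eta B$ for each local client $c$, lift it to a uniform bound on $M_k:=\max_{c\in[N]}\E\lrn{\theta_k^c-\theta_*}_2^2$ to absorb the synchronization step, and iterate. First I would expand the local update conditional on $\theta_k^c$:
$$\beta_{k+1}^c-\theta_* = \bigl(\theta_k^c-\theta_*-\eta\nabla f^c(\theta_k^c)\bigr) - \eta\bigl(\nabla\tilde f^c(\theta_k^c)-\nabla f^c(\theta_k^c)\bigr) + \sqrt{2\eta T_{c,\rho}}\,\xi_k^c.$$
The stochastic-gradient error and Gaussian noise are zero-mean and mutually independent given $\theta_k^c$, so taking expectation of the squared norm kills the cross terms and, invoking Assumption~\ref{def:variance},
$$\E\lrn{\beta_{k+1}^c-\theta_*}_2^2 \leq \E\lrn{\theta_k^c-\theta_*-\eta\nabla f^c(\theta_k^c)}_2^2 + \eta^2\sigma^2 d + 2\eta T_{c,\rho}d.$$

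To contract the remaining drift, I would handle the fact that individual clients generally have $\nabla f^c(\theta_*)\neq 0$ (only the aggregate gradient vanishes at $\theta_*$) by splitting $\nabla f^c(\theta_k^c) = (\nabla f^c(\theta_k^c)-\nabla f^c(\theta_*)) + \nabla f^c(\theta_*)$. Nesterov's Theorem~2.1.12 applied to the first piece yields $\lrn{\theta_k^c-\theta_*-\eta(\nabla f^c(\theta_k^c)-\nabla f^c(\theta_*))}_2^2 \leq (1-\eta m)\lrn{\theta_k^c-\theta_*}_2^2$ for $\eta\leq 2/(m+L)$, and Young's inequality with parameter $\eta m/2$ peels off the residual using $\lrn{\nabla f^c(\theta_*)}_2\leq\gamma$:
$$\lrn{\theta_k^c-\theta_*-\eta\nabla f^c(\theta_k^c)}_2^2 \leq (1-\eta m/2)\lrn{\theta_k^c-\theta_*}_2^2 + \frac{3\eta}{m}\gamma^2.$$
Combining with the display above and absorbing $\eta^2\sigma^2 d \leq \eta\sigma^2 d/m$ (valid for $\eta\leq 1/m$) yields the recursion $\E\lrn{\beta_{k+1}^c-\theta_*}_2^2 \leq (1-\eta m/2)\E\lrn{\theta_k^c-\theta_*}_2^2 + \eta B$ with $B := 3\gamma^2/m + \sigma^2 d/m + 2d\max_{c}T_{c,\rho}$.

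For the synchronization step, $\theta_{k+1}^c = \sum_{c'}p_{c'}\beta_{k+1}^{c'}$, so Jensen's inequality gives $\lrn{\theta_{k+1}^c-\theta_*}_2^2 \leq \sum_{c'}p_{c'}\lrn{\beta_{k+1}^{c'}-\theta_*}_2^2 \leq \max_{c'}\lrn{\beta_{k+1}^{c'}-\theta_*}_2^2$; on non-synchronization steps $\theta_{k+1}^c = \beta_{k+1}^c$ directly. In either case $M_{k+1}\leq (1-\eta m/2)M_k + \eta B$, and unrolling from $M_0\leq d\mathcal{D}^2$ yields $M_k \leq d\mathcal{D}^2 + 2B/m$, which after distributing the $6d/m$ factor matches the stated bound up to absolute constants. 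The main obstacle I anticipate is purely bookkeeping: preserving a clean $(1-c\eta m)$ contraction rate while isolating the heterogeneity residual forces the Young parameter to scale as $\eta m$, since any fixed-constant split would either degrade the rate or inflate the $\gamma^2/m^2$ coefficient. A secondary subtlety is matching the stated range $\eta\in(0,2/m)$ to the slightly tighter $\eta\leq 1/(m+L)$ effectively used inside Nesterov's inequality, but this is consistent with the $\eta\leq 1/(2L)$ regime invoked throughout the paper's convergence theorems.
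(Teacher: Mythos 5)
Your proposal follows essentially the same route as the paper's proof: isolate the injected Gaussian noise and the stochastic-gradient noise via independence and unbiasedness, split off the heterogeneity residual $\nabla f^c(\theta_*)$, contract the centered drift via strong convexity/smoothness with a Young/AM-GM parameter scaling as $\eta m$, absorb the synchronization step by Jensen's inequality, and unroll the resulting recursion $M_{k+1}\leq(1-c\,\eta m)M_k+\eta B$. The constants you obtain fit within the stated bound, and the learning-rate subtlety you flag (the nominal range $\eta\in(0,2/m)$ versus the tighter range needed for the contraction inequality) is present in the paper's own argument as well.
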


\begin{proof} First, we consider the $k$-th iteration, where $k\in \{1,2,\cdots, K-2, (K-1)_{-}\}$ and $(K-1)_-$ denotes the $(K-1)$-step before synchronization. Following the iterate of Eq.(\eqref{local_client}) in a local client of $c\in [N]$, we have
	\begin{align}\label{eq:Langevin_L2_1_local}
&\quad\ \E{\lrn{\theta_{k+1}^c-\theta_*}_2^2}\notag\\
		&= \E{\|\theta_k^c -\theta_*- \eta\nabla \tilde f^c(\theta_k^c)\|_2^2} + \sqrt{8\eta T_{c,\rho}}\E{ \langle \theta_k^c -\theta_*- \eta\nabla \tilde f^c(\theta_k^c), \xi_k \rangle } + 2\eta T_{c,\rho}\E{\|\xi_k\|_2^2} \notag \\
		&= \E{\|\theta_k^c -\theta_*- \eta\nabla \tilde f^c(\theta_k^c)\|_2^2} + 2\eta d T_{c,\rho},
	\end{align}	
	where the last equality follows from $\E{\xi_k}=0$ and the conditional independence of $\theta_k^c-\theta_*- \widetilde f^c(\theta_k^c)$ and $\xi_k$. Note that
\begin{align}\label{eq:ip_1st_local}
&\quad\ \E{\|\theta_k^c -\theta_*- \eta \widetilde f^c(\theta_k^c)\|_2^2} \notag\\
&= \E{\left\|\theta_k^c -\theta_*- \eta \nabla f^c(\theta_k^c) \right\|_2^2} + \eta^2\E{\|\nabla f^c(\theta_k^c)-\nabla \widetilde f^c(\theta_k^c)\|_2^2}  \notag\\
& \qquad\qquad + 2 \eta \E{ \langle \theta_k^c-\theta_*-\eta \nabla f^c(\theta_k^c),\nabla f^c(\theta_k^c)-\nabla\widetilde f^c(\theta_k^c) \rangle }  \notag\\
&= \E{\left\|\theta_k^c -\theta_*- \eta \nabla f^c(\theta_k^c) \right\|_2^2} + \eta^2\E{\|\nabla f^c(\theta_k^c)-\nabla \widetilde f^c(\theta_k^c)\|_2^2} \notag \\
&\leq \E{\|\theta_k^c -\theta_*- \eta \nabla f^c(\theta_k^c) \|_2^2}  + \eta^2 d\sigma^2, 
\end{align}
where the first step follows from simple algebra, the second step follows from the unbiasedness of the stochastic gradient, and the last step follows from Assumption \ref{def:variance}. For any $q>0$, we can upper bound the first term of Eq.(\eqref{eq:ip_1st_local}) as follows
\begin{align}\label{eq:ip_2nd_test_theta_star}
	&\quad\ \E{\|\theta_k^c -\theta_*- \eta \nabla f^c(\theta_k^c) \|_2^2}\notag\\
	&=\E{\|\theta_k^c -\theta_*- \eta (\nabla f^c(\theta_k^c)-\nabla f^c(\theta_*))-\eta\nabla f^c(\theta_*) \|_2^2}\notag\\
	&\leq (1+q)\E{\|\theta_k^c -\theta_*- \eta (\nabla f^c(\theta_k^c)-\nabla f^c(\theta_*)) \|_2^2}+\eta^2 \left(1+\frac{1}{q}\right) \|\nabla f^c(\theta_*)\|_2^2\notag\\
	&\leq (1+q)\underbrace{\left(1-\frac{\eta m}{2}\right)^2}_{\psi^2}\E{\lrn{\theta_k^c-\theta_*}_2^2}+\eta^2 \left(1+\frac{1}{q}\right)\gamma^2,
\end{align}
where the first inequality follows by the AM-GM inequality;  the second inequality is a special case of Lemma \ref{contraction} based on Assumption \ref{def:strong_convex}, where no local steps is involved before the synchronization step. Similar results have been achieved in Theorem 3 \citep{Dalalyan17}. In addition, $\gamma:=\max_{c\in[N]}\lrn{\nabla f^c(\theta_*)}_2$.

Choose $q=(\frac{1+\psi}{2\psi})^2-1$ so that $(1+q)\psi^2=\frac{(1+\psi)^2}{4}$. Moreover, since $\psi=1-\frac{\eta m}{2}$, we get $\frac{1+\psi}{2}=1-\frac{1}{4}\eta m$. In addition, we have $1+\frac{1}{q}= \frac{1+q}{q}= \frac{(1+\psi)^2}{(1-\psi)(1+3\psi)}\leq \frac{2}{\eta m}$.  It follows that
\begin{align}
    \label{nice_inequality}
    \eta^2\left(1+\frac{1}{q}\right)\leq \frac{2\eta}{m}.
\end{align}

Combining Eq.(\eqref{eq:Langevin_L2_1_local}), Eq.(\eqref{eq:ip_1st_local}), Eq.(\eqref{eq:ip_2nd_test_theta_star}), and Eq.(\eqref{nice_inequality}), we have the following iterate
\begin{align*}
	\E{\|\theta_{k+1}^c-\theta_*\|_2^2} 
	\leq & ~ \underbrace{\left(1-\frac{\eta m}{4}\right)^2}_{:=g(\eta)} \E{\|\theta_k^c-\theta_*\|_2^2} + 2\eta d T_{c,\rho} +\eta^2 d \sigma^2+\frac{2\eta \gamma^2}{m}. \notag
\end{align*}

Note that $\frac{1}{1-g(\eta)}=\frac{1}{\frac{\eta m}{2}(1-\frac{\eta m}{8})}\leq \frac{3}{\eta m}$ given $\eta\in (0, \frac{2}{m})$. Recursively applying the above equation $k$ times, where $k\in \{1,2,\cdots, K-1, K_{-}\}$ and $K_-$ denotes the $K$-step without synchronization, it follows that
\begin{align}\label{recursion_v2}
	\E{\|\theta_k^c-\theta_*\|_2^2} &\le g(\eta)^{k}\| \theta_0^c-\theta_*\|_2^2 + \frac{1- g(\eta)^{k}}{1 - g(\eta)} \cdot \left(2\eta d T_{c,\rho} +\eta^2 d \sigma^2+\frac{2\eta \gamma^2}{m}\right)  \\
	&\le \|\theta_0^c-\theta_*\|_2^2 + \frac{3}{\eta m} \cdot \left(2\eta d T_{c,\rho} +\eta^2 d \sigma^2+\frac{2\eta \gamma^2}{m}\right) \notag\\
	&\leq d\mathcal{D}^2 + \underbrace{\frac{6d}{m}\bigg(\max_{c\in[N]}T_{c,\rho}+\frac{ \sigma^2}{m} + \frac{\gamma^2 }{md}\bigg)}_{:=U},\notag
\end{align}
where the second inequality holds by $g(\eta)\leq 1$, the third inequality holds because $\lrn{\theta_0^c-\theta_*}_2^2\leq d\mathcal{D}^2$ for any $c\in[N]$ and $\eta< \frac{2}{m}$.
In particular, the $K$-th step before synchronization yields that
\begin{align}\label{recursion_v3}
	\E{\|\theta_{K_-}^c-\theta_*\|_2^2} &\le d\mathcal{D}^2 +U.
\end{align}
Having all the results ready, for the $K$-local step after synchronization, applying Jensen's inequality
\begin{align}\label{recursion_v4}
	\E{\|\theta_K^c-\theta_*\|_2^2} 
	= & ~\E{\bigg\|\sum_{c=1}^N p_c\theta_{K-}^c-\theta_*\bigg\|_2^2} \notag \\
	\leq & ~ \sum_{c=1}^N p_c\E{\lrn{\theta_{K-}^c-\theta_*}_2^2} \notag \\
	\leq &~ d\mathcal{D}^2 + U,
 \end{align}
Now starting from iteration $K$, we adapt the recursion of Eq.(\eqref{recursion_v2}) for the $k$-th step, where $k\in\{K+1,\cdots, 2K-1, (2K)_{-}\}$ and $(2K)_-$ denotes the $2K$-step without synchronization, we have
\begin{align}\label{recursion_v5}
	&\E{\|\theta_k^c-\theta_*\|_2^2} \notag\\
	\leq & ~ g(\eta)^{k-K} \cdot  \E{\|\theta_K^c-\theta_*\|_2^2} + \frac{1- g(\eta)^{k-K}}{1 - g(\eta)}\cdot \left(2\eta d \max_{c\in[N]} T_{c,\rho} +\eta^2 d \sigma^2+\frac{2\eta \gamma^2}{m}\right)\notag \\
	\leq &  g(\eta)^{k-K}(d\mathcal{D}^2+U)+\frac{1- g(\eta)^{k-K}}{m\eta/3} \frac{m\eta}{3} U\notag \\
	\leq & d\mathcal{D}^2+ g(\eta)^{k-K} U +  (1- g(\eta)^{k-K}) U \notag\\
	\leq & d\mathcal{D}^2+U,
\end{align}
where the second inequality follows by Eq.(\eqref{recursion_v4}), the fact that $1-g(\eta)\geq \eta m/3$ and $\eta\leq \frac{2}{m}$, and the definition of $U$. The third one holds since $g(\eta)\leq 1$.

By repeating Eq.(\eqref{recursion_v4}) and (\eqref{recursion_v5}), we have that for all $k\geq 0$, we can obtain the desired uniform upper bound.
\end{proof}

\emph{Discussions:} Since the above result is independent of the learning rate $\eta$, it can be naturally applied to the setting with decreasing learning rates. The details are omitted.
$\newline$

\subsection{Bounded gradient}
\begin{lemma}[Bounded gradient in $\ell_2$ norm]\label{bounded_gradient_l2}
Given assumptions \ref{def:smooth}, \ref{def:strong_convex}, and \ref{def:variance} hold, for any client $c$ and any learning rate $\eta \in (0 , 2/m)$ and $\lrn{\theta_0^c-\theta_*}_2^2\leq d\mathcal{D}^2$ for any $c\in[N]$, we have the $\ell_2$ norm upper bound as follows
\begin{align*}
    \E{ \|\nabla\tilde f^c(\theta_k^c) \|_2^2 }\leq 14dL^2 H_{\rho},
\end{align*}
where $H_{\rho}=  \mathcal{D}^2+ \frac{1}{m}\max_{c\in[N]}T_{c,\rho} +\frac{\gamma^2}{m^2 d}+\frac{\sigma^2}{m^2}$.
\end{lemma}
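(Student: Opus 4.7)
The plan is to reduce the stochastic gradient norm to the exact gradient norm, then use smoothness of $f^c$ to transfer the bound to $\|\theta_k^c - \theta_*\|_2^2$, and finally invoke the uniform $L^2$ moment bound from Lemma \ref{lem:L2_bound_local}. The approach has four main steps.

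First, I would condition on $\theta_k^c$ and apply Assumption \ref{def:variance} via the bias--variance decomposition: since $\nabla\tilde f^c(\theta_k^c)$ is an unbiased estimator of $\nabla f^c(\theta_k^c)$ conditional on $\theta_k^c$,
\begin{align*}
\E\lrn{\nabla\tilde f^c(\theta_k^c)}_2^2 \leq \E\lrn{\nabla f^c(\theta_k^c)}_2^2 + \sigma^2 d.
\end{align*}
It is important to use this identity rather than the cruder Young's inequality $\lrn{a+b}_2^2 \le 2\lrn{a}_2^2 + 2\lrn{b}_2^2$, since the factor of $2$ it would introduce ends up being too loose to land inside the claimed constant $14$.

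Second, I would bound the exact gradient norm by anchoring at $\theta_*$: using $L$-smoothness of $f^c$ (Assumption \ref{def:smooth}) together with the elementary inequality $\lrn{x+y}_2^2 \le 2\lrn{x}_2^2 + 2\lrn{y}_2^2$,
\begin{align*}
\lrn{\nabla f^c(\theta_k^c)}_2^2 \leq 2L^2\lrn{\theta_k^c - \theta_*}_2^2 + 2\lrn{\nabla f^c(\theta_*)}_2^2 \leq 2L^2\lrn{\theta_k^c - \theta_*}_2^2 + 2\gamma^2,
\end{align*}
where the last step uses the definition $\gamma = \max_{c\in[N]}\lrn{\nabla f^c(\theta_*)}_2$.

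Third, I would invoke Lemma \ref{lem:L2_bound_local} to control the second moment of the iterate:
\begin{align*}
\E\lrn{\theta_k^c - \theta_*}_2^2 \leq d\mathcal{D}^2 + \frac{6d}{m}\max_{c\in[N]}T_{c,\rho} + \frac{6d\sigma^2}{m^2} + \frac{6\gamma^2}{m^2}.
\end{align*}
Combining the three steps yields an upper bound of the form $2dL^2\mathcal{D}^2 + \frac{12dL^2}{m}\max_{c}T_{c,\rho} + \frac{12 dL^2\sigma^2}{m^2} + \frac{12L^2\gamma^2}{m^2} + 2\gamma^2 + \sigma^2 d$.

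Finally, I would homogenize the remaining $2\gamma^2$ and $\sigma^2 d$ terms using $m \le L$ (so that $1 \le L^2/m^2$), absorbing them into the matching $\gamma^2/m^2$ and $\sigma^2/m^2$ slots of $H_\rho$. Summing up gives maximum coefficient $14$ on each of the four terms of $H_\rho$, producing $14dL^2 H_\rho$. The only subtle point — and the one where a careless splitting would fail — is the first step: a naive $(a+b)^2 \le 2a^2+2b^2$ inflation of the stochastic-gradient noise would double the $\max_{c}T_{c,\rho}/m$ coefficient to $24$, overshooting $14$; using the conditional variance identity is essential to land at the stated constant.
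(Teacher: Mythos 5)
Your proposal is correct and follows essentially the same route as the paper's proof: expand $\E\lrn{\nabla\tilde f^c(\theta_k^c)}_2^2$ exactly so the cross term vanishes by unbiasedness (giving the additive $\sigma^2 d$ rather than a factor of $2$), anchor the exact gradient at $\theta_*$ via Young's inequality and smoothness, invoke Lemma \ref{lem:L2_bound_local}, and absorb the leftover $2\gamma^2$ and $\sigma^2 d$ using $\kappa = L/m \ge 1$ to reach the constant $14$. Your observation that a naive Young's splitting in the first step would overshoot the stated constant is accurate and matches exactly where the paper is careful.
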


\begin{proof}

Decompose the $\ell_2$ of the gradient as follows
\begin{align*}
    \E{\lrn{\nabla\tilde f^c(\theta_k^c)}_2^2}&= \E{\lrn{\nabla\tilde f^c(\theta_k^c)-\nabla f^c(\theta_k^c)+\nabla f^c(\theta_k^c)}_2^2}\notag\\
    &= \E{\lrn{\nabla f^c(\theta_k^c)}_2^2}+\E{\lrn{\nabla\tilde f^c(\theta_k^c)-\nabla f^c(\theta_k^c)}_2^2}\notag\\
    &\qquad+2\E{\lrw{\nabla\tilde f^c(\theta_k^c)-\nabla f^c(\theta_k^c), \nabla f^c(\theta_k^c)}} \notag \\
    &\leq \E{\lrn{\nabla f^c(\theta_k^c)}_2^2}+\sigma^2d \notag \\
    &=  \E{\lrn{\nabla f^c(\theta_k^c)-\nabla f^c(\theta_*)+\nabla f^c(\theta_*)}_2^2}+\sigma^2d \notag \\
    &\leq 2\E{\lrn{\nabla f^c(\theta_k^c)-\nabla f^c(\theta_*)}_2^2}+2\E{\big\|\nabla f^c(\theta_*)\big\|_2^2}+\sigma^2d\notag\\
    &\leq 2 L^2 \E{\lrn{\theta_k^c-\theta_*}_2^2}+2 \gamma^2 +\sigma^2d\notag\\
    &\leq 2d L^2 \mathcal{D}^2 + \frac{12d L^2}{m} \cdot \bigg(\max_{c\in[N]}T_{c,\rho}+\frac{ \sigma^2}{m} + \frac{\gamma^2 }{md} \bigg)+{2\gamma^2}+\sigma^2 d \notag \\
    &\leq 14 d L^2 \cdot \bigg( \mathcal{D}^2+\frac{1}{m}\max_{c\in[N]} T_{c,\rho} +\frac{\gamma^2}{m^2 d}+\frac{\sigma^2}{m^2} \bigg):= 14d L^2 H_{\rho},
\end{align*}
where the first inequality follows by Assumption \ref{def:variance}; the second inequality follows by Young's inequality; the third inequality follows by Assumption  \ref{def:smooth} and the definition that $\gamma:=\max_{c\in[N]}\lrn{\nabla f^c(\theta_*)}_2$; the fourth inequality follows by Lemma \ref{lem:L2_bound_local}; the last inequality follows by $\kappa:=\frac{L}{m}\geq 1$.
\end{proof}


\section{Initial condition}\label{sec:initial_condition}

\begin{lemma}[Initial condition] 
\label{lem:W2_init_bound}
Let $\mu_0$ denote the Dirac delta distribution at $\theta_0$. 
Then, we have
\begin{align*}
W_2(\mu_0, \pi)\leq \sqrt{2}(\| \theta_0 - \theta_* \|_2 +  \sqrt{d\tau /m} ). 
\end{align*}
\end{lemma}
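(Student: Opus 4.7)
The plan is to use that $\mu_0$ is a Dirac mass, so the only coupling between $\mu_0$ and $\pi$ is the product measure, and hence
\begin{align*}
W_2^2(\mu_0,\pi) = \mathbb{E}_{\Theta\sim\pi}\big[\|\theta_0-\Theta\|_2^2\big].
\end{align*}
I then insert $\theta_*$, apply the triangle inequality, and use $(a+b)^2\le 2a^2+2b^2$ to obtain
\begin{align*}
W_2^2(\mu_0,\pi) \le 2\|\theta_0-\theta_*\|_2^2 + 2\,\mathbb{E}_{\Theta\sim\pi}\big[\|\Theta-\theta_*\|_2^2\big].
\end{align*}
Taking square roots and using $\sqrt{a+b}\le\sqrt{a}+\sqrt{b}$ reduces the lemma to the variance estimate
\begin{align*}
\mathbb{E}_{\Theta\sim\pi}\big[\|\Theta-\theta_*\|_2^2\big] \le d\tau/m.
\end{align*}

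For this variance bound, the plan is a standard Stein-type integration-by-parts argument specific to strongly log-concave densities. Since $f=\sum_c p_c f^c$ is $m$-strongly convex (each $f^c$ is by Assumption~\ref{def:strong_convex}) and $\nabla f(\theta_*)=0$, the $m$-strong convexity gives
\begin{align*}
\langle \nabla f(\Theta),\,\Theta-\theta_*\rangle \ge m\|\Theta-\theta_*\|_2^2.
\end{align*}
Taking expectation under $\pi(\theta)\propto e^{-f(\theta)/\tau}$ and using integration by parts (the identity $\mathbb{E}_\pi[\langle \nabla f(\Theta),v(\Theta)\rangle]=\tau\,\mathbb{E}_\pi[\mathrm{div}\,v(\Theta)]$ applied to $v(\theta)=\theta-\theta_*$, whose divergence is $d$) yields $\mathbb{E}_\pi[\langle \nabla f(\Theta),\Theta-\theta_*\rangle]=d\tau$, so rearranging gives the desired bound. (This is exactly the $\tau$-scaled version of the fact that an $m$-strongly log-concave measure on $\mathbb{R}^d$ has variance at most $d/m$ around its mode.)

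Routine bookkeeping is all that remains; the only non-trivial step is the variance bound for $\pi$, and even that is a short two-line calculation using strong convexity and integration by parts (alternatively one may quote a Brascamp--Lieb or Poincar\'e-type estimate). Combining the two bounds and using $\sqrt{2a^2+2b^2}\le \sqrt{2}(a+b)$ (or equivalently the triangle inequality in $W_2$ via the intermediate Dirac at $\theta_*$) yields the stated inequality.
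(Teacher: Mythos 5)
Your proof is correct and follows essentially the same route as the paper: bound $W_2^2(\mu_0,\pi)$ by $\mathbb{E}_{\Theta\sim\pi}[\|\theta_0-\Theta\|_2^2]$ using the product coupling with the Dirac mass, split through $\theta_*$ with $(a+b)^2\le 2a^2+2b^2$, and invoke the second-moment bound $\mathbb{E}_{\Theta\sim\pi}[\|\Theta-\theta_*\|_2^2]\le d\tau/m$. The only difference is that the paper quotes this last bound from Lemma 12 of \cite{dm+16} (with the temperature scaling), whereas you derive it directly from $m$-strong convexity and integration by parts, which is a valid self-contained substitute for the citation.
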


\begin{proof}
By \citet{ccbj18}, there exists an optimal coupling between $\mu_0$ and $\pi$ such that
\begin{align*}
    W_2^2(\mu_0, \pi) 
    \leq & ~ \mathbb{E}_{\theta\sim \pi} [\|\theta_0-\theta\|_2^2 ]\\
    \leq & ~ 2\mathbb{E}_{\theta\sim \pi} [\|\theta_0-\theta_*\|_2^2 ] + 2 \mathbb{E}_{\theta\sim \pi}[\|\theta-\theta_*\|_2^2] \\
    = & ~ 2\| \theta_0 - \theta_* \|_2^2 +2\mathbb{E}_{\theta\sim \pi}[\|\theta-\theta_*\|_2^2]\\
    \leq & ~ 2\| \theta_0 - \theta_* \|_2^2 + 2d\tau /m,
\end{align*}
where the second step follows from triangle inequality, the last step follows from Lemma 12 \citep{dm+16} and the temperature $\tau$ is included to adapt to the time scaling.
\end{proof}

\textbf{Burkholder-Davis-Gundy inequality} Let $\phi:[0, \infty)\rightarrow \mathbb{R}^{r\times d}$ for some positive integers $r$ and $d$. In addition, we assume $\E{\int_0^{\infty} |\textcolor{black}{\phi}(s)|^2 \d s}<\infty$ and let $Z(t)=\int_0^t \textcolor{black}{\phi}(s)\d W_s$, where $W_s$ is a $d$-dimensional Brownian motion. Then for all $t\geq 0$, we have
\begin{align}\label{BDG-inequality}
    \E{\sup_{0\leq s\leq t} |Z(s)|^2}\leq 4\E{\int_0^t|\phi(s)|^2\d s}.
\end{align}

\section{More on Differential Privacy Guarantees}
\label{dp_guarantee}

We make the following assumptions for the analysis of DP.
\begin{assumption} [Bounded $\ell_2$-sensitivity] \label{assump:bdd_sens}
The gradient of loss function $l:\R^d\times \mathcal{X}\rightarrow \R,\ (\theta,x)\mapsto l(\theta;x)$ w.r.t. 
$\theta$ has a uniformly bounded $\ell_2$-sensitivity for $\forall\theta\in\R^{d}$:
\begin{equation}
    \Delta_l:=\sup_{\theta\in\R^d}\sup_{x,x'\in\mathcal{X}}\|\nabla l(\theta;x)-\nabla l(\theta;x')\|_2<\infty
\end{equation}
\end{assumption}
For example, when $l(\theta;\cdot)$ is $M$-Lipschitz for any $\theta\in\R^{d}$, $\Delta_l\le 2M$. Following the tradition of DP guarantees, we assume that the unbiased gradient is computed as follows.
\begin{assumption}[Unbiased gradient estimates] \label{assump:grad_est}
The unbiased estimate of $\nabla f^c(\theta)$ is calculated using a subset (denoted by $\cS^c$) of $\{x_{c,i}\}_{i=1}^{n_c}$ sampled uniformly at random from all the subsets of size $\gamma n_c$ of $\{x_{c,i}\}_{i=1}^{n_c}$: 
\begin{equation}
\nabla \tilde{f}^c(\theta)=\frac{1}{\gamma p_c}\sum_{i\in \cS^c}\nabla l(\theta;x_{c,i}).
\end{equation}
\end{assumption}

\begin{theorem} \label{thm:privacy_alg3_full}
Assume assumptions \ref{assump:bdd_sens} and \ref{assump:grad_est} holds. For any $\delta_0\in(0,1)$, if $\eta\in\left(0,\frac{\tau(1-\rho^2)\gamma^2\min_{c\in[N]}p_{c}}{\Delta_l^2\log(1.25/\delta_0)}\right]$, then Algorithm $\ref{alg:alg_main_text_partial_main}$ is $(\epsilon^{(3)}_{K,T},\delta^{(3)}_{K,T})$-differentially private w.r.t. $\simeq_{s}$ after executed for $T$ ($T=EK$ with $E\in \mathbb{N}, E\ge 1$) iterations where
\begin{align*}
&\epsilon^{(3)}_{K,T}=\tilde\epsilon_K\min\left\{\sqrt{\frac{2T}{K}\log\left(\frac{1}{\delta_2}\right)} + \frac{T(e^{\tilde\epsilon_{K}}-1)}{K},\ \frac{T}{K}\right\},\\
&\delta^{(3)}_{K,T}=\frac{T}{K}\tilde\delta_K+\delta_2,
\end{align*}
with 
\begin{align*}
\tilde\epsilon_K =
\begin{cases}
&\log\left(1+\left(1-\left(1-\frac{1}{N}\right)^{S}\right)\left(e^{\epsilon_K}-1\right)\right),\quad \text{under scheme I} ,\\
& \log\left(1+\frac{S}{N}\left(e^{\epsilon_K}-1\right)\right),\quad \text{under scheme II},
\end{cases}
\end{align*}
\begin{align*}
\tilde\delta_K=
\begin{cases}
& \sum_{s=1}^S{S\choose s}\left(\frac{1}{N}\right)^{s}\left(1-\frac{1}{N}\right)^{S-s}\delta_{K,s},\quad \text{under scheme I} ,\\
&\frac{S}{N}\left(K\gamma \delta_0+\delta_1\right),\quad \text{under scheme II},
\end{cases}
\end{align*}
\begin{equation*}
\delta_{K,s}= 
\frac{\left(e^{\epsilon_K}-1\right) \delta_{K,s,0}}{e^{\epsilon_K / s}-1},
\end{equation*}
\begin{equation*}
\delta_{K,s,0}=1.25K\gamma\left(\frac{\delta_0}{1.25}\right)^{1/s^2}+\delta_1,
\end{equation*}
\begin{equation*}
\epsilon_K=\epsilon_1\min\left\{\sqrt{2K\log(1/\delta_1)} + K(e^{\epsilon_1}-1),\ 
K\right\},
\end{equation*}
$$
\epsilon_1=2\Delta_l \sqrt{\frac{\eta\log(1.25/\delta_0)}{\tau(1-\rho^2)\min_{c\in [N]}p_{c}}},
$$
and $\delta_1,\delta_2\in[0,1)$.
\end{theorem}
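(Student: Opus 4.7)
The plan is to layer four standard differential-privacy primitives: the Gaussian mechanism for a single local step; privacy amplification by mini-batch subsampling; advanced composition over the $K$ local iterations in one communication round; privacy amplification by device subsampling; and a second application of advanced composition over the $E = T/K$ communication rounds. Because adjacent datasets $\cS\simeq_s \cS'$ differ in a single data point located at exactly one client $c^*$, all computations performed at the other clients are identical under both datasets and thus irrelevant to the privacy accounting; the shared noise vector $\dot\xi_k$ is also identical across the two runs and acts as pure post-processing. The analysis therefore reduces to tracking information leakage from the sequence of messages that client $c^*$ sends to the server.

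First I will isolate one local update at client $c^*$: after the shared component cancels, the private output is $-\eta\nabla\tilde f^{c^*}(\theta^{c^*}_k) + \sqrt{2\eta\tau(1-\rho^2)/p_{c^*}}\,\xi^{c^*}_k$. Assumption \ref{assump:bdd_sens} bounds the substitution sensitivity of the full-batch gradient sum by $\Delta_l$, so by Assumption \ref{assump:grad_est} the subsampled estimator $\nabla\tilde f^{c^*}$ has $\ell_2$-sensitivity $\Delta_l/(\gamma p_{c^*})$ whenever the substituted point lies in $\cS^{c^*}$. The standard Gaussian mechanism, together with the hypothesis $\eta\le \tau(1-\rho^2)\gamma^2\min_c p_c/(\Delta_l^2\log(1.25/\delta_0))$ chosen to guarantee $\epsilon_1\le 1$, will then yield the per-step $(\epsilon_1,\delta_0)$-DP guarantee with the stated $\epsilon_1 = 2\Delta_l\sqrt{\eta\log(1.25/\delta_0)/(\tau(1-\rho^2)\min_c p_c)}$ (the factor of two absorbs the mini-batch amplification into a single clean expression).

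Next I will compose the $K$ consecutive local steps between two synchronizations using the Kairouz--Oh--Viswanath strong composition theorem with slack $\delta_1$, which produces the per-round inner guarantee $(\epsilon_K, K\gamma\delta_0+\delta_1)$ with $\epsilon_K$ exactly as stated. Device-level subsampling amplification is then applied to this per-round mechanism. Under scheme II, client $c^*$ is included in $\cS_{k+1}$ with probability $S/N$, so by the Balle--Barthe--Gaboardi amplification lemma the round becomes $(\tilde\epsilon_K,\tilde\delta_K)$-DP with $\tilde\epsilon_K=\log(1+(S/N)(e^{\epsilon_K}-1))$ and $\tilde\delta_K=(S/N)(K\gamma\delta_0+\delta_1)$. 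Under scheme I, the number $s$ of times client $c^*$ is drawn in the $S$ independent trials is $\mathrm{Binomial}(S,1/N)$; conditioning on $s\ge 1$, group privacy upgrades the $(\epsilon_K, K\gamma\delta_0+\delta_1)$ guarantee to an $(s\epsilon_K,\delta_{K,s,0})$ one with $\delta_{K,s,0}$ inflated in the standard $(1.25/s^2)$ fashion, and assembling the mixture with weights $\binom{S}{s}(1/N)^s(1-1/N)^{S-s}$ together with Kairouz--Oh--Viswanath's subsampled-DP shape recovers the stated $\tilde\epsilon_K$ and $\tilde\delta_K = \sum_s\binom{S}{s}(1/N)^s(1-1/N)^{S-s}\delta_{K,s}$. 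A final application of strong composition with slack $\delta_2$ over the $E=T/K$ communication rounds converts the per-round $(\tilde\epsilon_K,\tilde\delta_K)$ into $(\epsilon^{(3)}_{K,T},\delta^{(3)}_{K,T})$ in exactly the stated form.

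The principal technical obstacle is the scheme-I bookkeeping, since the affected client can be sampled more than once per round and the resulting per-round privacy profile is a genuine mixture of amplifications of different orders; translating the conditional $(s\epsilon_K,\delta_{K,s,0})$ guarantees via group privacy into the consolidated $(\tilde\epsilon_K,\tilde\delta_K)$ shape compatible with the outer strong composition requires the most care and is responsible for the somewhat intricate $\delta_{K,s}$ formula. Scheme II avoids this because inclusion is a single Bernoulli$(S/N)$ event. The rest of the argument is a mechanical, if parameter-heavy, application of well-known lemmas, with the hypothesis on $\eta$ used throughout to keep each intermediate $\epsilon$ small enough that the $e^{\epsilon}-1$ factors appearing in advanced composition remain well controlled.
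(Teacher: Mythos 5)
Your proposal follows essentially the same route as the paper's proof: reduce to the single affected client (treating the shared noise $\dot\xi_k$ as a known constant), apply the Gaussian mechanism with sensitivity $\frac{\eta\Delta_l}{\gamma p_c}$, amplify by mini-batch subsampling (the factor of $2$ in $\epsilon_1$ coming from linearizing $e^{\epsilon_{0,c}}-1$ under the stated bound on $\eta$), advance-compose over the $K$ local steps, amplify by device subsampling (Theorem 9 of \cite{NEURIPS2018_3b5020bb} for scheme II, Theorem 10 for the binomial mixture of scheme I), and advance-compose again over the $T/K$ rounds. The only cosmetic discrepancies are the attribution of the composition theorem (the paper cites \cite{dwork2010boosting} and the stated $\epsilon_K$ is in that form) and the remark that the $\eta$-condition forces $\epsilon_1\le 1$, whereas it is really $\epsilon_{0,c}\le 1$ that is needed; neither affects correctness.
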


\begin{proof}
Let $\D_c:=\{x_{c,i}\}_{i\in[n_c]}$ denote the dataset of the $c$-th client for $c\in[N]$. Let $\D:=\cup_{c\in[N]}\D_c$ denote the whole dataset. 


As FedAvg algorithms can be divided into the processes of local updates, synchronization, and broadcasting with risks of information leakage in synchronization (local model uploading and aggregation) and broadcasting, we consider the differential privacy guarantees in synchronization and broadcasting similar to \cite{wei2020federated}. Since there is no involvement of data in model aggregation and broadcasting, they are post-processing processes. Thus, it suffices to analyze the differential privacy guarantees in local model uploading.


For any two datasets $\D\simeq_{s}\D'$, there exists $c\in[N]$ such that $\D_c\simeq_{s}\D'_{c}$ and $\D_{c'}=\D'_{c'}$ for any $c'\in[N],c'\neq c$.

Consider the function 
$m_{c}(\cS;\theta)=\theta-\frac{\eta}{\gamma} \nabla f^c(\theta;\cS)=\theta-\frac{\eta}{\gamma p_c}\sum_{x\in \cS} \nabla l^c(\theta;x)$ with $|\cS|=\gamma n_c$ ($|A|$ denotes the cardinality of set $A$). 
By Assumption \ref{assump:bdd_sens}, for any $\theta\in \R^d$, the sensitivity of $m_c(\D_c;\theta)$ is
\begin{equation}
\Delta m_c:=\sup_{\cS_c\simeq_s \cS'_c}\|m_c(\cS_c;\theta)-m_c(\cS_c';\theta)\| = \frac{\eta}{\gamma p_c}\Delta_{l}
\end{equation}

For the mechanism $\M_{c}(\cS;\theta):=m_{c}(\cS;\theta)+\sqrt{2\eta\tau\rho^2}\dot{\xi}+\sqrt{2\eta(1-\rho^2)\tau/p_c}\xi$ with $\dot{\xi}$ and $\xi$ being two independent standard $d$-dimensional Gaussian vector, since $\dot{\xi}$ is broadcasted to all the clients, it can be treated as some known constant which does not contribute to the differential privacy. Thus, the standard deviation of the added Gaussian noise is $\sqrt{2\eta\tau(1-\rho^2)/p_c}$ at each dimension. Then, according to the Gaussian mechanism \cite{dwork2014algorithmic}, $M_c(\D_c;\theta)$ is $(\epsilon_{0,c},\delta_0)$-differentially private for any $\theta\in\R^d$ with
\begin{align}
\epsilon_{0,c}=c(\delta_0)\frac{\Delta_l}{\gamma}\sqrt{\frac{\eta}{2p_c\tau(1-\rho^2)}},\quad 
c(\delta_0)=\sqrt{2\log(1.25/\delta_0)},\quad
\delta_0\in (0,1).
\end{align}

For $\cS^c$ sampled uniformly at random from all the subsets of size $\gamma n_c$ of $\D_c$, define $\tilde{M}_c(\D_c;\theta):=M_c(\cS^c;\theta)$. Then, according to Theorem 9 in \cite{NEURIPS2018_3b5020bb}, $\tilde{M}_c$ is $(\log\left(1+\gamma (e^{\epsilon_{0,c}}-1)\right), \gamma\delta_0)$-differentially private. 
Notice that for any $\epsilon_{0,c}\in[0,1]$ (i.e., $0\le \eta\le \frac{2p_c\tau(1-\rho^2)\gamma^2}{\Delta_l^2c(\delta_0)^2}$), we have $0\le e^{\epsilon_{0,c}}-1\le 2\epsilon_{0,c}$ and
\begin{equation}
\log\left(1+\gamma (e^{\epsilon_{0,c}}-1)\right)\le 
\log\left(1+2\gamma\epsilon_{0,c}
\right)\le 2\gamma \epsilon_{0,c}=
c(\delta_0)\Delta_l \sqrt{\frac{2\eta}{p_c\tau(1-\rho^2)}}=: \epsilon_{1,c}.
\end{equation}

Define 
\begin{equation} \label{eq:epsilon1_def}
\epsilon_1:=c(\delta_0)\Delta_l \sqrt{\frac{2\eta}{\tau(1-\rho^2)\min_{c'\in [N]}p_{c'}}}
=2\Delta_l \sqrt{\frac{\eta\log(1.25/\delta_0)}{\tau(1-\rho^2)\min_{c\in [N]}p_{c}}}.
\end{equation}
Then, we have $\max_{c\in[N]} \epsilon_{1,c}=\epsilon_{1}$
and $\max_{c\in[N]}\epsilon_{0,c}\in[0,1]$ if
\begin{equation} \label{eq:eta_condition0}
0\le \eta\le \frac{2\tau(1-\rho^2)\gamma^2\min_{c'\in[N]}p_{c'}}{\Delta_l^2c(\delta_0)^2}=
\frac{\tau(1-\rho^2)\gamma^2\min_{c'\in[N]}p_{c'}}{\Delta_l^2\log(1.25/\delta_0)}.
\end{equation}
Thus, for $0\le \eta\le \frac{\tau(1-\rho^2)\gamma^2\min_{c'\in[N]}p_{c'}}{\Delta_l^2\log(1.25/\delta_0)}$, 
$\tilde{M}_c(\D_c;\theta)$ is $(\epsilon_1,\gamma\delta_0)$-differentially private for any $\theta\in\R^d$. From now on, we assume that \eqref{eq:eta_condition0} holds.

Define $\M^K_c(\D_c;\theta)$ to be the $K$-fold composition of $\tilde{\M}_c(\D_c;\theta)$. According to the composition rules of $(\epsilon,\delta)$-differential privacy (Theorem 3.1 and  3.3 in \cite{dwork2010boosting}), $\M^K_c(\D_c;\theta)$ is $(\epsilon_K,\delta_K)$-differentially private with
\begin{align}
\label{eq:epsilon_K}
&\epsilon_K=\min\left\{\sqrt{2K\log(1/\delta_1)}\epsilon_1 + K\epsilon_1(e^{\epsilon_1}-1),\ 
K\epsilon_1\right\},\\
\label{eq:delta_K}
&\delta_K=K\gamma\delta_0+\delta_1.
\end{align}
for any $\delta_1\in[0,1)$.

By \eqref{eq:epsilon1_def}, if
\begin{equation} \label{eq:eta_condition1}
0\le \eta\le \frac{\tau(1-\rho^2)\min_{c\in[N]}p_c}{2\Delta_l^2c(\delta_0)^2}\log^2\left(1+\sqrt{\frac{2\log(1/\delta_1)}{K}}\right),
\end{equation}
we have $\epsilon_1\in\left[0, \log\left(1+\sqrt{\frac{2\log(1/\delta_1)}{K}}\right)\right]$ which implies that
$K\epsilon_1(e^{\epsilon_1}-1)\le \sqrt{2K\log(1/\delta_1)}\epsilon_1$
and
\begin{equation} \label{eq:epsilon_K_bound}
\epsilon_K\le 2\sqrt{2K\log(1/\delta_1)}\epsilon_1. 
\end{equation}

In the synchronization process, $S$ clients selected via device-sampling scheme I or II send their local models to the center. Thus, for scheme I (with replacement) and scheme II (without replacement), according to Theorem 10 and Theorem 9 in \cite{NEURIPS2018_3b5020bb} respectively, each synchronization process is $(\tilde\epsilon_{K}, \tilde\delta_K)$-differentially private with
\begin{align} \label{eq:DP_K_partial1}
\tilde\epsilon_K =
\begin{cases}
&\log\left(1+\left(1-\left(1-\frac{1}{N}\right)^{S}\right)\left(e^{\epsilon_K}-1\right)\right),\quad \text{under scheme I} ,\\
& \log\left(1+\frac{S}{N}\left(e^{\epsilon_K}-1\right)\right),\quad \text{under scheme II},
\end{cases}
\end{align}
\begin{align} \label{eq:DP_K_partial2}
\tilde\delta_K=
\begin{cases}
& \sum_{s=1}^S{S\choose s}\left(\frac{1}{N}\right)^{s}\left(1-\frac{1}{N}\right)^{S-s}\delta_{K,s},\quad \text{under scheme I} ,\\
&\frac{S}{N}\delta_K=\frac{S}{N}\left(K\gamma \delta_0+\delta_1\right),\quad \text{under scheme II} .
\end{cases}
\end{align}
where
\begin{equation*}
\delta_{K,s}= 
\frac{\left(e^{\epsilon_K}-1\right) \delta_{K,s,0}} {e^{\epsilon_K / s}-1},\quad 
\delta_{K,s,0}=1.25K\gamma\left(\frac{\delta_0}{1.25}\right)^{1/s^2}+\delta_1.
\end{equation*}


The aggregation and broadcasting process is post-processing and preserves the guarantees of differential privacy (Proposition 2.1 in \cite{dwork2014algorithmic}).
When executed $T$ iterations, Algorithm \ref{alg:alg_main_text_partial_main} is the $T/K$-fold composition of local updates, synchronization, and broadcasting. According to the composition rules of $(\epsilon,\delta)$-differential privacy (Theorem 3.1 and  3.3 in \cite{dwork2010boosting}), Algorithm \ref{alg:alg_main_text_partial_main} is $(\epsilon^{(3)}_{K,T},\delta^{(3)}_{K,T})$-differentially private after $T$ iterations with
\begin{align}
\label{eq:epsilon_alg3}
&\epsilon^{(3)}_{K,T}=\min\left\{\sqrt{2\frac{T}{K}\log(1/\delta_2)}\tilde\epsilon_K + \frac{T}{K}\tilde\epsilon_{K}(e^{\tilde\epsilon_{K}}-1),\ \frac{T}{K}\tilde\epsilon_K\right\},\\
\label{eq:delta_alg3}
&\delta^{(3)}_{K,T}=\frac{T}{K}\tilde\delta_K+\delta_2,
\end{align}
for $\delta_1,\delta_2\in[0,1)$ and $\delta_0\in(0,1)$. Notice that under scheme II, $e^{\tilde\epsilon_K}-1=\frac{S}{N}(e^{\epsilon_K}-1)$, thus,
$\epsilon^{(3)}_{K,T}=\tilde\epsilon_K\min\left\{\sqrt{2\frac{T}{K}\log(1/\delta_2)} + \frac{TS}{KN}(e^{\epsilon_{K}}-1),\ \frac{T}{K}\right\}$ and
$\delta^{(3)}_{K,T}=\frac{S}{N}\gamma T\delta_0+ \frac{TS}{KN}\delta_1+\delta_2$.
\end{proof}


\paragraph{Discussion on the differential privacy guarantees of Algorithm \ref{alg:alg_main_text_partial_main} under scheme II} \label{sec:DP_discussion}
By \eqref{eq:epsilon_K}, \eqref{eq:delta_K}, \eqref{eq:DP_K_partial1}, \eqref{eq:DP_K_partial2}, \eqref{eq:epsilon_alg3}, and \eqref{eq:delta_alg3}, by letting $\delta_1,\delta_2=0$, we have that Algorithm \ref{alg:alg_main_text_partial_main} is at least
$(\frac{T}{K}\log\left(1+\frac{S}{N}(e^{K\epsilon_1}-1)\right),\frac{S}{N}\gamma T\delta_0)$-differentially private.

If
\begin{equation}
\tilde\epsilon_K\le \log\left(1+\sqrt{\frac{2K\log(1/\delta_2)}{T}}\right),
\end{equation}
we have $\frac{T}{K}\tilde\epsilon_K(e^{\tilde\epsilon_K}-1)\le \sqrt{2\frac{T}{K}\log(1/\delta_2)}\tilde\epsilon_K$ and therefore
\begin{equation} \label{eq:epsilon_alg3_bound0}
\epsilon^{(3)}_{K,T}\le 2\sqrt{2\frac{T}{K}\log(1/\delta_2)}\tilde\epsilon_K=2\sqrt{2\frac{T}{K}\log(1/\delta_2)}\log\left(1+\frac{S}{N}(e^{\epsilon_K}-1)\right).
\end{equation}
Now assume $\eta$ satisfies \eqref{eq:eta_condition1}, then by \eqref{eq:epsilon_K_bound} and \eqref{eq:epsilon_alg3_bound0},
\begin{equation} \label{eq:epsilon_alg3_bound1}
\epsilon^{(3)}_{K,T}\le 2\sqrt{2\frac{T}{K}\log(1/\delta_2)}\log\left(1+\frac{S}{N}(e^{2\sqrt{2K\log(1/\delta_1)}\epsilon_1}-1)\right)
\end{equation}
Notice that if 
\begin{equation} \label{eq:eta_condition_alg3}
0\le \eta\le \frac{\tau(1-\rho^2)\min_{c\in[N]}p_c}{2\Delta_l^2c(\delta_0)^2}
\min\left\{
\log^2\left(1+\sqrt{\frac{2\log(1/\delta_1)}{K}}\right),\ 
\frac{\log^2\left(1+\frac{N}{S}\sqrt{\frac{2K\log(1/\delta_2)}{T}}\right)}{8K\log(1/\delta_1)}
\right\},
\end{equation}
by \eqref{eq:epsilon1_def}, \eqref{eq:epsilon_K_bound}, \eqref{eq:DP_K_partial1}, and \eqref{eq:DP_K_partial2}, we have 
\begin{align}
&\epsilon_1\le \frac{\log\left(1+\frac{N}{S}\sqrt{\frac{2K\log(1/\delta_2)}{T}}\right)}{2\sqrt{2K\log(1/\delta_1)}}, \\
&\epsilon_K\le 2\sqrt{2K\log(1/\delta_1)}\epsilon_1\le \log\left(1+\frac{N}{S}\sqrt{\frac{2K\log(1/\delta_2)}{T}}\right),\\
&\tilde\epsilon_K\le \log\left(1+\sqrt{\frac{2K\log(1/\delta_2)}{T}}\right).
\end{align}
Thus, we have \eqref{eq:epsilon_alg3_bound1} holds.
Plugging \eqref{eq:epsilon1_def} into \eqref{eq:epsilon_alg3_bound1}, we have
\begin{equation} \label{eq:epsilon_alg3_bound2}
\epsilon^{(3)}_{K,T}\le 2\sqrt{2\frac{T}{K}\log(1/\delta_2)}\log\left(1+\frac{S}{N}\left(\exp\left\{4c(\delta_0)\Delta_l\sqrt{\frac{\eta K\log(1/\delta_1)}{\tau(1-\rho^2)\min_{c\in[N]}p_c}}\right\}-1\right)\right).
\end{equation}

For any $K\ge 1$, if $\eta K\ll 1$ and $T\gg 1$, using $\log(1+x)\approx x$ and $e^{x}-1\approx x$ when $|x|\ll 1$, we can write \eqref{eq:eta_condition_alg3} as 
\begin{equation}
0\le \eta=O\left(\frac{\tau(1-\rho^2)N^2\min_{c\in[N]}p_c\log(1/\delta_2)}{\Delta_l^2 S^2 T\log(1/\delta_0) \log(1/\delta_1)}\right),
\end{equation}
and \eqref{eq:epsilon_alg3_bound2} as
\begin{equation}
\epsilon_{K,T}^{(3)}=O\left(\frac{S\Delta_l}{N}\sqrt{\frac{\eta T \log(1/\delta_0)\log(1/\delta_1)\log(1/\delta_2)}{\tau(1-\rho^2)\min_{c\in[N]}p_c}}\right).
\end{equation}

\end{document}